\documentclass[10pt,twocolumn]{article}

\usepackage[letterpaper,margin=0.75in]{geometry}
\usepackage{times}
\usepackage[utf8]{inputenc}
\usepackage[T1]{fontenc}
\usepackage[hidelinks,hypertexnames=false]{hyperref}
\usepackage{xurl}
\usepackage{booktabs}
\usepackage{amsfonts}
\usepackage{amsmath}
\usepackage{amsthm}
\usepackage{algorithm}
\usepackage{algpseudocode}
\usepackage{enumitem}
\newtheorem{proposition}{Proposition}
\newtheorem{corollary}{Corollary}
\usepackage{nicefrac}
\usepackage{microtype}
\usepackage{graphicx}
\usepackage{placeins}
\usepackage{xcolor}
\usepackage{natbib}

\newcommand{\figonewidth}{\textwidth}
\newcommand{\figfourwidth}{\textwidth}

\hypersetup{
  pdftitle={The Premise Is the Problem: Exchangeability Failure in Self-Monitored Test-Time Adaptation},
  pdfauthor={Weijia Han, Lisha Qu, Zhenda Li, and Liying Liang},
  pdfsubject={Anytime-valid gates can lose their guarantee when self-monitored adaptation breaks exchangeability.},
  pdfkeywords={test-time adaptation, online time-series forecasting, anytime-valid inference, conformal martingales, exchangeability, change detection, feedback loops, robust forecasting}
}

\title{The Premise Is the Problem:\\Exchangeability Failure in Self-Monitored Test-Time Adaptation}
\author{
  \begin{tabular}{cc}
    \begin{tabular}[t]{c}
      Weijia Han\\
      University of Washington\\
      Seattle, USA\\
      \texttt{wh73@uw.edu}
    \end{tabular}
    &
    \begin{tabular}[t]{c}
      Lisha Qu\\
      University of Washington\\
      Seattle, USA\\
      \texttt{lishaq@uw.edu}
    \end{tabular}
    \\[1.25em]
    \begin{tabular}[t]{c}
      Zhenda Li\\
      Independent Researcher
    \end{tabular}
    &
    \begin{tabular}[t]{c}
      Liying Liang\\
      Independent Researcher
    \end{tabular}
  \end{tabular}
}
\date{}

\begin{document}

\maketitle

\begin{abstract}
Modern forecasting models are often updated after deployment so they can respond to changing data.
These updates can also make predictions worse, so practical systems need a reliable monitor that
can detect harmful changes and trigger protection. A natural design is to monitor the same
prediction errors that guide the updates. This paper asks whether the statistical guarantee behind
such a monitor remains valid when monitoring and adaptation use the same feedback. We study this
question in multi-step time-series forecasting. We show that overlapping targets and dependence in
forecast errors can break a key assumption required by the guarantee. The monitor may then raise
alarms even when no harmful change has occurred, and its response can further damage prediction
quality. We also find that adaptation can hide sustained changes from its own monitor, while the
original frozen model retains a clearer signal. These results expose a basic failure mode in
self-monitored adaptation. They show why reliable deployment requires checking the monitor's
assumptions, comparing adaptation with the frozen model under realistic feedback, and limiting the
effect of every protective response.
 \end{abstract}

\section{Introduction}
\label{sec:intro}

\begin{figure*}[!t]
\centering
\includegraphics[width=\figonewidth]{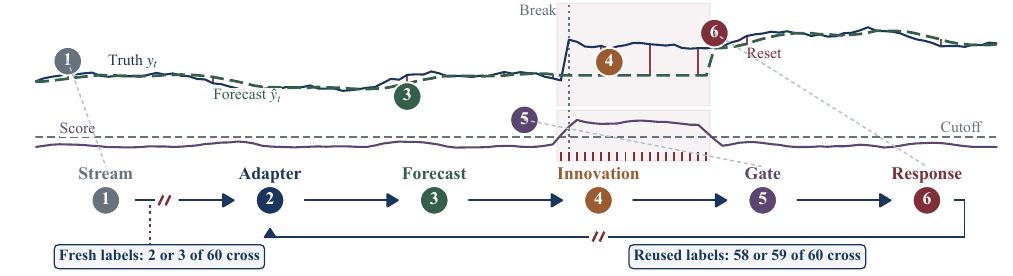}
\caption{Self-monitored adaptation and the label-reuse mechanism; the stream is illustrative and the callouts
summarize Section~\ref{sec:e6}. Consecutive 96-step labels share 95 of their 96 entries, so the oracle
score is a moving sum. On a conditional null with the scorer held at the truth, reuse changes the
classical no-reset wealth process from 2 or 3 of 60 crossings to 58 or 59 of 60; the Ville guarantee
applies only when fresh labels are used. Appendix~\ref{app:spec} specifies every component.}
\label{fig:loop}
\end{figure*}

Time-series foundation models forecast many steps ahead and are deployed frozen
\citep{auer2025tirex,ansari2025chronos2,das2024decoderonly}. Test-time adaptation corrects their
forecasts from newly revealed labels \citep{lee2025elf,dai2026orca}, but learning from a corrupted
or shifting stream can cause harm. Such an adaptive system therefore needs a monitor for protective action.

Conformal test martingales offer an appealing guarantee for such a monitor. On an exchangeable score
sequence the wealth crosses $\nicefrac{1}{\alpha}$ with probability at most $\alpha$, uniformly in time. The adapter's own scores are a readily available monitoring signal, so the natural design lets one score
stream both drive the updates and police them (Figure~\ref{fig:loop}). We call this
self-monitored adaptation. This paper asks whether the guarantee survives that feedback loop.
The question matters because the bounded probability of any false alarm is the reason to prefer
such a gate over a heuristic threshold. A false alarm can trigger a response that damages the
adapter.

We study a block Kalman adapter applied to three frozen backbones: TiRex, Chronos-2, and
TimesFM~2.5. We evaluate it on five public streams: ETTh2, ETTm1, weather, electricity, and
traffic~\citep{zhou2021informer,wu2021autoformer}. The main campaign uses immediate and delayed
reveal. Targeted analyses use composite coordinate and stride-96 feedback. We count fires on clean replay and
under injected contamination, separate the elements of the loop with synthetic nulls, replay the
classical growing-calibration, no-reset wealth process on saved scores, build a conditional null that changes only the
reuse of labels across overlapping horizons, and evaluate the adapter against the frozen backbone
under deployable feedback. Because the deployed gate's sliding calibration window and wealth
resets make it heuristic, the classical replay is its formal audit. We recorded a decision rule before running each campaign. Later
response, fallback, and frozen-residual studies were exploratory follow-ups and are labeled as
such. We report every outcome, including a failed first design, and connect a classical dependence
pattern to the loss of validity that occurs when an exact sequential test is applied outside its
exchangeability null, and to damage from the resulting response.

We make four contributions.
\begin{itemize}[leftmargin=*,itemsep=1pt,topsep=2pt,parsep=0pt]
\item \textbf{Failure of the statistical premise.} We prove that reusing overlapping forecast labels can make even a
correctly specified score sequence non-exchangeable, so the classical anytime-valid guarantee does not apply.
In a controlled null, reuse raises crossings from 2 or 3 of 60 to 58 or 59 of 60; spacing origins one
horizon apart restores validity for the truth-held scorer (Section~\ref{sec:premise}; Appendix~\ref{app:valid}).
\item \textbf{Evidence on real streams.} Frozen-backbone residuals also cross without a learner, so adaptation is not
necessary for the observed alarms. Using disjoint forecast origins greatly reduces crossings, but this
real-stream comparison is descriptive and does not establish a Ville bound (Section~\ref{sec:e6};
Appendix~\ref{app:valid}).
\item \textbf{Value and limits of adaptation.} The adapter absorbs a sustained change within one horizon, while the
frozen residual remains elevated and can support later detection. Under two deployable feedback protocols,
however, the adapter loses to the frozen backbone in 29 of 30 backbone--stream evaluations
(Section~\ref{sec:cutoff}).
\item \textbf{Consequences of intervention.} Detecting a change and responding safely are separate problems. The tested
responses can reproduce divergence, act too often on clean data, or fail under sustained contamination;
removing reset and re-entry helps only the selected cohort (Sections~\ref{sec:attribution}
and~\ref{sec:cutoff}).
\end{itemize}

The conditional-null result is theoretical; the findings on real streams, learned scores, adapter value,
and response design are empirical. Together, they show that a monitor cannot be treated as a plug-in
safeguard: both its statistical assumptions and the actions it triggers must be validated in the full
adaptive system.
 \section{Related work}
\label{sec:related}

\subsection{Anytime-valid testing and conformal martingales}
Ville's inequality \citep{ville1939etude} bounds the probability that a nonnegative martingale
started at one ever crosses a fixed threshold, the basis of anytime-valid testing.
\citet{vovk2003testing} test exchangeability online with conformal martingales,
\citet{fedorova2012plugin} and \citet{volkhonskiy2017inductive} give the plug-in and inductive
variants, and \citet{vovk2021retrain} reset the wealth on a fire for change-point detection, the
rule this gate uses with the randomized $p$-values of \citet{vovk2022algorithmic}.
\citet{shafer2021testing} and \citet{ramdas2023game} frame the family as betting-based
statistics valid under optional stopping. Later work extends the betting machinery
\citep{choe2024combining,szabadvary2026legendre}, tests exchangeability directly
\citep{saha2024pairwise,ramdas2022exchangeability}, and develops sequential change detectors
from e-processes \citep{shin2024edetectors,bhattacharyya2026change}. The conformal martingales studied here
assume an exchangeable score sequence; other e-processes can target other nulls. We ask what the
exchangeability premise is worth when the monitored learner produces the scores.

\subsection{Monitoring deployed models under shift}
Anytime-valid tests can track a deployed model's risk \citep{podkopaev2022tracking}, extend to
unlabeled streams through a secondary error model \citep{amoukou2024harmful}, bound running risk
from few true labels \citep{zhang2026pprm}, or reweight against covariate shift
\citep{prinster2025watch}. Appendix~\ref{app:edetector} reports our replay on the saved streams.
Related work surveys heuristic drift detectors \citep{gama2014survey}, gates adaptation on a
detector \citep{zhang2024d3a}, and supplies the whiteness diagnostic used here
\citep{ljungbox1978}. Those monitors watch a fixed model, a risk bound, or a covariate shift. Here
the monitored score stream can inherit dependence from the forecast residuals, and overlap and
adaptation can amplify it.

\subsection{Conformal prediction under dependence}
Conformal validity can tolerate bounded departures from exchangeability
\citep{barber2023beyond}. Other methods adapt coverage under distribution shift
\citep{gibbs2021adaptive,gibbs2024online}, maintain it through change points
\citep{sun2025changepoints}, or accommodate serial dependence
\citep{xu2021enbpi,xu2023spci,oliveira2024split,lee2025kernel}. Those methods repair the forecast's own interval, while the premise at stake here is
exchangeability of the monitor's scores. Proposition~\ref{prop:reuse} restates for the
conditional null's oracle score the overlapping-observations autocorrelation of forecast
evaluation~\citep{hansen1980forward,diebold1995comparing} and adds the non-exchangeability it
implies; what this paper measures is its consequence for a monitor that shares its scores with
the learner.

\subsection{Test-time adaptation of forecasters}
Frozen forecast foundation models
\citep{auer2025tirex,ansari2024chronos,ansari2025chronos2,das2024decoderonly,woo2024moirai}
are deployed without gradient updates, and \citet{meyer2025leakage} document that their
pretraining exposure to public streams is hard to establish. \citet{wang2021tent} is the
canonical test-time adaptation method, and \citet{liang2025ttasurvey} survey the area.
Error accumulation and collapse under continual adaptation motivate reset mechanisms
\citep{wang2022continual,niu2023stable,press2023rdumb,lim2026reset}; the weather divergences here
instantiate that failure. \citet{bar2024protected} instead drive adaptation with a betting
martingale on the adapted model's own statistic, the design whose premise this paper measures.

For online forecasting, \citet{pham2023fsnet} and \citet{wen2023onenet} use immediate full reveal,
a protocol criticized by \citet{lau2025dsof} and \citet{wang2026principled}. Leakage-free reveal
protocols appear in \citet{zhao2025proceed}, \citet{lau2025dsof}, \citet{kim2025tafas},
\citet{lee2025elf}, and \citet{dai2026orca}. Section~\ref{sec:cutoff} studies these settings, and
we evaluate five released refiners on the same cells. Our adapter
is the dynamic linear model of \citet{westharrison1997}. Its fallback is an exponentially weighted
average forecaster~\citep{cesabianchi2006prediction}, with a fixed share in its first
version~\citep{herbster1998tracking}.
 \section{The online learner and the gate under test}
\label{sec:setup}

\textbf{The adapter.} The online learner is a block-diagonal Kalman filter that calibrates a
frozen backbone's 96-step forecast $f_t \in \mathbb{R}^{96}$ of one target channel, issued from a
96-step context, against the label $y_t$. Its state
$\theta_t$ holds a slope and an intercept per horizon, one $2 \times 2$ block each, so the same
filter runs unchanged on all three backbones. Row $h$ of the observation matrix $H_t$ is
$[f_{t,h}\;\;1]$ in its own block. The filter learns one forgetting factor $\lambda_t$, a
192-dimensional vector $Q_t$ of process-noise diagonal entries, and a 96-dimensional vector
$R_t$ of observation-noise diagonal entries. Its prediction and innovation covariance are
\begin{align}
P_t^-&=P_{t-1}/\lambda_t+\operatorname{diag}(Q_t),\nonumber\\
S_t&=H_tP_t^-H_t^\top+\operatorname{diag}(R_t).
\label{eq:main-filter}
\end{align}
$P_t^-$ is block diagonal, and $S_t$ is diagonal. The identity-calibration state $\theta_0$ has unit slopes and
zero intercepts, so $H_t\theta_0=f_t$ reproduces the frozen forecast. All three parameter groups
adapt online through one-step hypergradients~\citep{baydin2018hypergradient} to an innovation
objective~\citep{westharrison1997} after each non-skipped feedback packet. The $\lambda$ update
uses a diagonal approximation and becomes a surrogate update for the learned base parameter while
the hard recovery override is active. The $Q$ and $R$ derivatives are exact for the stated
fixed-$P_{t-1}$, diagonal-noise objective. No adapter or gate hyperparameter in the primary evaluation is tuned per
stream or backbone
(Appendix~\ref{app:compute} reports the arithmetic cost).

\textbf{One-step loop.} With innovation $e_t=y_t-H_t\theta_t^-$ and the current 250-score window
$W_t$, let $n_t^{>}=\#\{s_j>s_t:j\in W_t\}$ and
$n_t^{=}=\#\{s_j=s_t:j\in W_t\}$. The gate computes
\begin{align}
s_t&=e_t^\top S_t^{-1}e_t,\nonumber\\
p_t&=\begin{cases}
\nicefrac{1}{2}, & |W_t|=0,\\[1mm]
\dfrac{n_t^{>}+u_t(1+n_t^{=})}{|W_t|+1}, & |W_t|>0,
\end{cases}\nonumber\\
M_t&=\frac1{19}\sum_{k=1}^{19}\prod_{j=\tau_t}^{t}\varepsilon_kp_j^{\varepsilon_k-1}.
\label{eq:main-gate}
\end{align}
Here $u_t\sim\operatorname{Unif}(0,1)$ and $\tau_t$ is the first step after the most recent
wealth reset; $\varepsilon_k\in\{0.05,0.10,\ldots,0.95\}$. The deterministic value
$\nicefrac{1}{2}$ is used only for the first scored observation of a run; resetting wealth does
not empty $W_t$. A crossing opens a
50-step recovery window that resets the covariance, boosts $Q$, and overrides $\lambda$; otherwise,
the spike check clips or skips an extreme innovation before the Kalman and hypergradient steps.
Algorithm~\ref{alg:gate} in Appendix~\ref{app:spec} gives the complete state transition, including
the first-score convention and the classical no-reset process used in Section~\ref{sec:e6}.

\textbf{The gate.} A conformal test
martingale~\citep{vovk2003testing,vovk2021retrain,ville1939etude,shafer2021testing,ramdas2023game}
runs on the scores, with the randomized conformal $p$-value of \citet{vovk2022algorithmic} and
a fixed 19-point mixture of power martingales as its wealth (Algorithm~\ref{alg:gate}). The sliding calibration window, the wealth reset, and the learned covariance each
depart from the classical construction, so this gate carries no proven Ville bound of its own.
Section~\ref{sec:e6} replays the textbook version, the same $p$-value over a calibration set that
grows to hold every score and the same wealth with no reset. Its Ville bound applies only when the
score sequence satisfies the required null. We therefore call it the classical no-reset wealth
process when replaying observed scores. Throughout the paper, we say that the gate \emph{fires},
the no-reset process \emph{crosses}, and an e-detector \emph{alarms}. The
spike check runs only on a step with no fire and no open recovery window, and the
three response components act together on a fire (Appendix~\ref{app:spec}).

\textbf{Feedback protocols.} Let $o$ be a forecast origin and $r$ a physical reveal step. At
each $r$, the adapter predicts the current origin, then evaluates any available feedback using the
state and covariance predicted at $r$; it does not restore issue-time state. Immediate reveal uses
the complete packet from $o=r$, an upper-information diagnostic unavailable in deployment.
Delayed reveal uses the packet from $o=r-96$. Composite coordinate reveal
\citep{kim2025tafas,lau2025dsof} combines the newly observed coordinate from each earlier origin;
all 96 rows concern the same physical target time.
The fourth protocol is the \emph{feedback-stride-96 protocol}: every origin is forecast and
evaluated for MSE, but only every 96th delayed packet produces a gate score and adapter update.
It therefore changes both the learner's feedback and the gate's exposure. A separate
\emph{detector-stride-96 variant} updates the adapter from every feedback packet but sends only every
96th score to the wealth process. Appendix~\ref{app:spec} defines both variants explicitly. Two shadow variants,
one learning the observation-noise vector $R$ and one holding it, save the filter's scores under the first two
protocols while the gate acts on nothing (Section~\ref{sec:e6}).

\textbf{Evaluation frame and terms.} We evaluate on the three frozen backbones and five streams
of Section~\ref{sec:intro}. Each stream has a test block of 12{,}922 to 52{,}117 steps, and every
run replays cached backbone forecasts. A cell is one backbone-stream pair, giving fifteen cells.
A contamination setting is the clean condition or one injector at one fraction, ten in all.
Each cell uses three Monte Carlo seeds on the same cached stream. They are algorithmic replicates,
not independent datasets. A variant is
one configuration of the adapter and its gate. A contaminated run's twin is the clean run of the same cell and seed,
and the fed labels are the labels the adapter receives. Three injectors corrupt origin-horizon
feedback packets with a packet spike, packet level shift, or packet freeze at 1, 5, and 10 percent
of origins, in robust units of the clean labels (Appendix~\ref{app:spec}). They do not simulate a
single timestamp-indexed sensor path. A run whose running MSE exceeds 100 times the
seasonal-naive reference's, checked from step 200 on, is truncated and marked diverged. Let $L$
be a run's test-block MSE, the final value of that running mean. We report two primary metrics,
\begin{equation}
\text{ratio} = \frac{L - L_{\text{twin}}}{L_{\text{twin}}},\qquad
\text{gain} = 1 - \frac{L}{L_{\text{frozen}}},
\label{eq:metrics}
\end{equation}
We average the ratio in~\eqref{eq:metrics} over surviving backbone-seed pairs within each stream
and setting, and report counts below nine. Clean gain uses the frozen backbone from the same cell.
All final results use the completed forecast cache; Appendix~\ref{app:repro} audits the earlier
campaign. These panel summaries are descriptive. Ratios are conditional on both runs surviving the
fixed divergence guard; divergence is reported separately.

\textbf{Experimental configurations.} The fixed-construction null feeds i.i.d.\ synthetic scores through the
gate's own logic with no learner behind it. The no-parameter-drift simulation runs the
complete pipeline on a fixed-design regression stream with i.i.d.\ residuals, 60 runs of
20{,}000 steps; its design is time-varying, so its ordered pairs are neither identically
distributed nor exchangeable (Appendix~\ref{app:spec}). The original campaign runs four primary
Kalman variants and two SGD baselines in every real-data cell. The Kalman variants are the ungated filter; the Huber-only variant; the
full gate; and the fixed-cutoff variant, which uses only the spike check and no martingale. The
Huber-only variant retains the martingale and spike classifier. A fire opens or restarts recovery and
suppresses spike classification, but applies none of the three drift responses. Three exploratory
single-component response ablations (soft reset, boost, override) complete the nine-configuration grid. Later
sections add variants where they are used; Appendix~\ref{app:spec} lists them all.
 \section{Premise failure on the deployed score streams}
\label{sec:premise}

We run the gate on the five real streams under immediate reveal and count fires on clean replay
at $\alpha=0.05$. All 135 clean runs of the three martingale-carrying variants fire at least once,
and delayed reveal does not remove the pattern (Figure~\ref{fig:alarmcensus};
Appendix~\ref{app:census}).

\begin{figure*}[t]
\centering
\includegraphics[width=\textwidth]{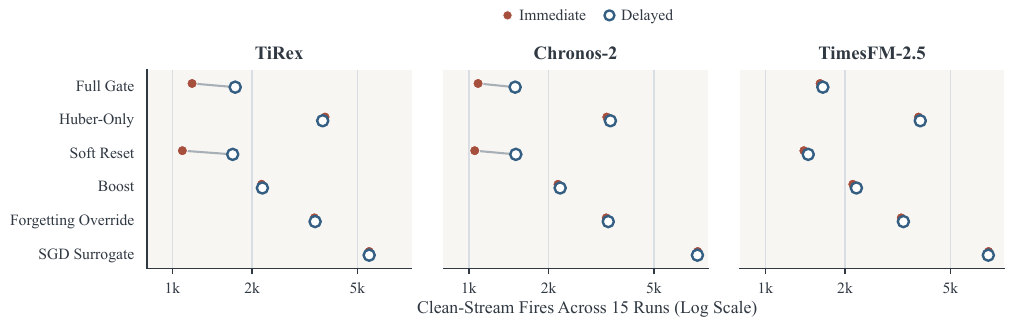}
\caption{Clean-stream alarm census, summed over five streams and three seeds per backbone.
Filled markers denote immediate reveal and hollow markers delayed reveal; connectors pair the same
variant and backbone. Every plotted martingale-carrying variant fires repeatedly under both protocols.
Totals establish firing; rate comparisons require exposure normalization. Appendix~\ref{app:census}
reports per-run ranges and first-fire steps.}
\label{fig:alarmcensus}
\end{figure*}

Two diagnostic experiments locate the failure. The fixed-construction null
fires in at most 1 of 60 runs, below the nominal reference $0.05\times60=3$. This is an empirical
calibration check; the deployed sliding-window gate has no formal Ville bound. The
no-parameter-drift simulation fires in every one of
its 60 runs, 33 to 42 times, the first fire 18 to 67 steps after warm-up ends
(Appendix~\ref{app:null}). In this finite synthetic check, the gate is quiet on i.i.d. input, so
the pipeline's own scores break the premise even though the data-generating parameters do not
drift.

\subsection{Element-level isolation}
\label{sec:isolation}

A second campaign of twenty-two variants holds the adaptive simulation's elements one at a time over
the same 60 seeds (Table~\ref{tab:groups};
Appendix~\ref{app:null} lists each). Nineteen variants keep the campaign's 500-step warm-up and
default initialization. The eleven that learn $R$ online fire or cross in 60 of 60 runs. The
eight that hold $R$ fixed fire or cross in 1 to 3 of 60. Online estimation of an unconverged $R$
is the feature shared by every variant that fires in all 60 runs. Four variants swap the gate for the
classical no-reset wealth process of Section~\ref{sec:setup}, run in shadow. If the scores were
exchangeable, Ville's inequality would put the expected number of runs crossing
$\nicefrac{1}{\alpha}$ at or below 3 of 60; it crosses
in 60 of 60 with the scale learned and in 1 of 60 with it held, 3 of 60 on the permuted stream.
These score processes violate the exchangeability assumption, so Ville's bound does not apply.
The element-level interventions attribute the excess crossings in this simulation to the convergence
transient in the learned observation-noise estimate. The permuted stream
makes the same point on an exchangeable input, where the complete pipeline still fires in 60 of 60
runs and the same pipeline with no hyperparameter learning in 1 of 60 (Appendix~\ref{app:null}).

The mechanism is visible in the score. Held fixed, the mean across the 96 coordinates of $R$
stays at $0.0079$, about three times the true coordinate value $0.0025$, and the mean score is
about $30$, whereas a well-specified $\chi^2_{96}$ score has mean 96. Learned online, the same
coordinate mean converges while the score climbs from $90.2$ to $96.1$. Each new score then exceeds most of its
calibration window, so the $p$-values run small and dependent and both constructions read the
convergence as drift. Neither longer-warm-up variant fires in any of its 60 runs, while Ljung-Box
rejects at level 0.05 in 17 and 13 of 60 for the 2{,}000- and 5{,}000-step variants, respectively.
Starting $R$ at its true value reduces firing to 2 of 60 runs, while Ljung-Box rejects in 58 of
60 (Table~\ref{tab:groups}). Appendix~\ref{app:null} also reports the stricter 0.01
level. The firing in this simulation is therefore the estimate's convergence transient.

\subsection{The frozen-scale variant on the real streams}
\label{sec:frozenr}
We next test this modification on the real streams. The variant holds $R$ at its warm-up value while
$\lambda$ and $Q$ keep learning. It fires in 45 of 45 clean runs under each protocol, as the
full gate does, so the convergence transient does not explain the real-stream firing
(Appendix~\ref{app:census}).

\subsection{The classical no-reset wealth process and origin overlap on real streams}
\label{sec:e6}

The classical no-reset wealth process replays the saved scores of the two shadow variants
(Section~\ref{sec:setup}) offline. It crosses in 180 of 180 runs with the noise scale learned or
held. If these scores were exchangeable, Ville's inequality would put the expected count at or
below 9. In all 180 runs, the marginal-uniformity and serial-dependence diagnostics show
substantial departures from the joint i.i.d.-uniform premise. The sliding window
and wealth reset are not necessary for crossing. The phenomenon is not unique to the block
Kalman design. The SGD adapter's own score stream crosses in 45 of 45 runs under each protocol.
Thinning the saved scores offline to stride 96 lowers the crossing fraction of run-offset pairs from 0.631 and 0.673 to 0.032 and 0.044 under the two reveals, a descriptive contrast since offsets of one run share its stream (Appendix~\ref{app:valid}). Figure~\ref{fig:acf} shows the shape of the dependence. The saved
scores' autocorrelation falls from near one at lag 1 to zero near the horizon with a kink at lag
96, faster than the linear decline of a moving sum sharing $96-k$ of its 96 terms at lag $k$ and
unlike a geometric decay.

\begin{figure*}[t]
\centering
\includegraphics[width=\textwidth]{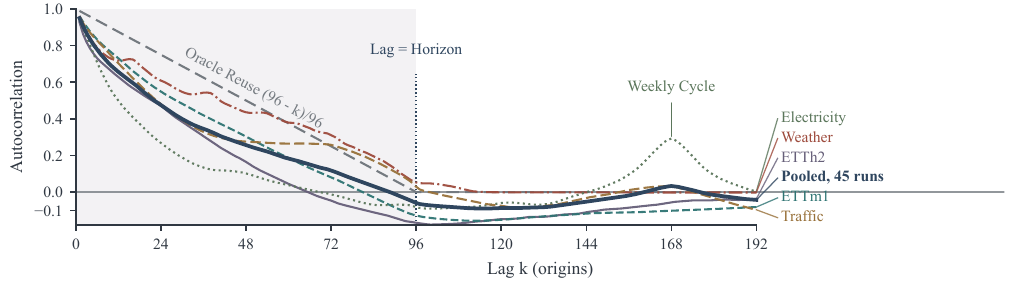}
\caption{Autocorrelation of saved learned-scale scores under immediate reveal. Thin curves average
nine runs per stream; the bold curve averages all 45 runs, and the dashed line is the oracle reuse
prediction $(96-k)/96$. The negative post-horizon floor combines estimator bias with stream
dependence; electricity's rise near lag 168 is its weekly cycle.}
\label{fig:acf}
\end{figure*}

A conditional null changes the reuse and nothing else (Table~\ref{tab:overlap},
Appendix~\ref{app:valid}). The independent variant draws fresh label noise at every origin; the reused variant slices one noise path into 96-step labels, so consecutive labels share 95 of 96 entries, as under immediate reveal. With the scorer held at the truth, the
independent variant's scores are i.i.d.\ $\chi^2_{96}$ and Ville's bound holds, while the
reused-label scores form a 96-term moving sum that is not exchangeable, with autocorrelation
$(96-k)/96$ at lag $k$ (Proposition~\ref{prop:reuse}). Reuse alone takes the classical no-reset
wealth process from 2 or 3 of 60 crossings to 58 or 59 of 60, with the scorer at the truth, and from
1 or 2 of 60 to 54 or 55 of 60 with the adapter learning. Thinning to origins 96 apart restores
validity for the truth-held scorer under Corollary~\ref{cor:thin}. For the learning adapter,
thinning reduces the observed crossing rate, but no theoretical guarantee follows
(Appendix~\ref{app:valid}). The contrast survives Student-t and autoregressive noise
and horizons 24 and 192 (Appendix~\ref{app:valid}).

\begin{proposition}\label{prop:reuse}
Let $\eta_1, \eta_2, \ldots$ be i.i.d.\ with $\eta_1^2$ non-degenerate, let
$0 < \sigma < \infty$, and let $s_i = \sigma^{-2}\sum_{j=0}^{m-1}\eta_{i+j}^2$ for some
$m \ge 2$. The independent score is the same sum over a row of $m$ i.i.d.\ copies of $\eta_1$ drawn
independently at every origin. Then
(i) the independent scores are i.i.d.\ while the sequence $(s_i)_{i \ge 1}$ is not exchangeable,
and (ii) if $\mathbb{E}\,\eta_1^4 < \infty$, then $\operatorname{corr}(s_i, s_{i+k}) = (m-k)/m$
for $0 \le k < m$ and $0$ for $k \ge m$.
\end{proposition}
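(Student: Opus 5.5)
The plan has three parts: the i.i.d. claim for the independent scores, non-exchangeability of $(s_i)$, and then the covariance computation for part (ii). Write $X_j=\eta_j^2$. Each $X_j$ is nonnegative and non-degenerate, and $s_i=\sigma^{-2}\sum_{j=i}^{i+m-1}X_j$.

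For the independent variant, the score at origin $i$ is a fixed measurable function of the $i$-th row of $m$ i.i.d. copies of $\eta_1$. The rows are independent across origins and identically distributed, so the scores are i.i.d., with no further argument needed.

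For non-exchangeability I will avoid moments, since part (i) does not assume $\mathbb{E}\eta_1^4<\infty$. The idea is to find a pair whose joint law changes under a transposition. Exchangeability forces $(s_1,s_2)\overset{d}{=}(s_1,s_{m+1})$, because every pair of distinct indices has the same joint law. But $s_1$ and $s_{m+1}$ are built from disjoint blocks, so they are independent. It therefore suffices to show that $s_1$ and $s_2$ are not independent.

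To do this, truncate and use a bounded function. Pick $c$ such that $A=\{X_1>c\}$ has $0<P(A)<1$; non-degeneracy allows this. Then compare $P(s_1>t,\,s_2>t)$ with $P(s_1>t)P(s_2>t)$ for a suitable $t$. This step is the main obstacle if done via tail events, so I would instead use the FKG/Harris inequality. Both $s_1$ and $s_2$ are nondecreasing functions of the independent variables $X_1,\dots,X_{m+1}$, so bounded increasing transforms are positively correlated. Strict inequality then requires one more step. I would take $g=\mathbf 1\{s>t\}$ and condition on the shared block $X_2,\dots,X_m$. Given that block, $s_1$ and $s_2$ are conditionally independent, and both conditional probabilities are nondecreasing in the shared sum $T=\sum_{j=2}^m X_j$. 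The covariance then equals $\operatorname{Cov}\bigl(\phi_1(T),\phi_2(T)\bigr)$, where $\phi_1(T)=P(s_1>t\mid T)$ and $\phi_2(T)=P(s_2>t\mid T)$ are the same increasing function. This covariance is $\operatorname{Var}\phi(T)$, which is strictly positive once $t$ is chosen so that $\phi(T)$ is non-constant. Such a $t$ exists because $T$ is non-degenerate ($m\ge2$) and $\phi$ is strictly responsive to $T$ for some threshold; to see this, take $t$ between suitable quantiles of $T+X_1$. The conclusion is that $s_1$ and $s_2$ are dependent, so $(s_1,s_2)\not\overset{d}{=}(s_1,s_{m+1})$ and the sequence is not exchangeable.

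For (ii), set $v=\operatorname{Var}X_1\in(0,\infty)$. Finiteness comes from the fourth moment, and positivity from non-degeneracy. Expanding bilinearly, $\operatorname{Cov}(s_i,s_{i+k})=\sigma^{-4}\sum_{j,j'}\operatorname{Cov}(X_j,X_{j'})$. Only the common indices $j=j'$ contribute, and there are $\max(m-k,0)$ of them. Hence the covariance is $\sigma^{-4}(m-k)v$ for $k<m$ and $0$ for $k\ge m$. Since $\operatorname{Var}s_i=\sigma^{-4}mv$, dividing gives $(m-k)/m$. Finally, note that (ii) also yields non-exchangeability directly under the moment assumption, because $\operatorname{corr}(s_1,s_2)\ne\operatorname{corr}(s_1,s_{m+1})$. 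This serves as a cross-check on the general argument.
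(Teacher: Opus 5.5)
Your proof is correct. For the independent scores, for (ii), and for the reduction in (i) it coincides with the paper's: exchangeability would force $(s_1,s_2)\overset{d}{=}(s_1,s_{m+1})$, the second pair is independent, so it suffices to show that $s_1$ and $s_2$ are dependent. You part ways at that last step. The paper writes $s_1=A+B$, $s_2=B+C$ and shows that independence would give $\phi_B(u+w)=\phi_B(u)\phi_B(w)$ near the origin, hence $|\phi_B|=1$ there, forcing $B$ and then $\xi_1$ to be degenerate. That is a compact Fourier argument that gives dependence but no sign. You instead condition on the shared block and use the law of total covariance, $\operatorname{Cov}\bigl(g(s_1),g(s_2)\bigr)=\operatorname{Var}\psi(T)$ with $\psi(x)=\mathbb{E}\,g\bigl((X_1+x)/\sigma^2\bigr)$. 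This is elementary and proves more: $g(s_1)$ and $g(s_2)$ have nonnegative covariance for every bounded $g$, and strictly positive covariance for a suitable $g$. For increasing $g$ this means large scores cluster, which is the mechanism behind the crossings. Because the identity already gives nonnegativity for every bounded $g$, the Harris/FKG step is unnecessary.

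The one step you assert rather than prove is the existence of a threshold $t$ for which your $\phi_t(T)=\Pr(s_1>t\mid T)$ is non-constant. ``Take $t$ between suitable quantiles of $T+X_1$'' is not yet an argument. It can be closed in two moves.
First, $T$ is non-degenerate: choosing $a<b$ with $\Pr(X_1\le a)>0$ and $\Pr(X_1>b)>0$ gives $\Pr\bigl(T\le(m-1)a\bigr)>0$ and $\Pr\bigl(T>(m-1)b\bigr)>0$.
Second, suppose $\phi_t(T)$ were a.s.\ constant for every $t$. Take $a'<b'$ with $\Pr(T\le a')>0$ and $\Pr(T>b')>0$. Then there are points $x\le a'<b'<x'$ with $\phi_t(x)=\phi_t(x')$, so monotonicity makes $\phi_t$ constant on $[a',b']$. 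That is, $z\mapsto\Pr(X_1>z)$ is constant on $[\sigma^2t-b',\sigma^2t-a']$ for every $t$, hence on all of $\mathbb{R}$, contradicting its limits $1$ and $0$.
A shorter fix uses $X_j\ge0$ and replaces indicators by $g(s)=e^{-s}$. With $L_c=\mathbb{E}\,e^{-cX_1/\sigma^2}$ one gets $\operatorname{Cov}(e^{-s_1},e^{-s_2})=L_1^2\bigl(L_2^{m-1}-L_1^{2(m-1)}\bigr)$, which is positive because $L_2-L_1^2=\operatorname{Var}(e^{-X_1/\sigma^2})>0$. This route needs no threshold and no separate non-degeneracy lemma for $T$.
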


\begin{table*}[t]
\centering
\caption{Label-reuse intervention on the conditional null, 60 runs of 20{,}000 steps per variant.
Gate fires count runs where the deployed gate fires. No-reset crossings list three tie-break RNG
streams; under exchangeability, Ville's expected reference is at most 3 of 60. The last two columns
give descriptive fractions over 5{,}760 dependent run-offset pairs. Appendix~\ref{app:valid}
reports fire counts, first crossings, and score autocorrelation.}
\label{tab:overlap}
\small
\setlength{\tabcolsep}{12pt}
\begin{tabular}{llcccc}
\toprule
scorer & labels & gate fires & no-reset crossings & stride-96 & matched block \\
\midrule
truth & independent & 0/60 & 2, 3, 3 & 0.013 & 0.011 \\
truth & reused & 60/60 & 59, 58, 59 & 0.009 & 0.735 \\
learning & independent & 1/60 & 1, 2, 1 & 0.013 & 0.024 \\
learning & reused & 60/60 & 55, 55, 54 & 0.003 & 0.719 \\
\bottomrule
\end{tabular}
\end{table*}

Two further analyses separate reuse from other sources of dependence
(Appendix~\ref{app:valid}). Scoring one horizon at a time removes the direct label reuse while
keeping the adapter. The classical no-reset wealth process still crosses in 21 to 90 of 90 runs by horizon.
Median peak log-wealth is 0.3 to 19.3, versus 47.6 and 49.9 for the pooled scores. Removing the reuse silences 53 of the 480 stream-horizons
(Table~\ref{app:tab:horizons}). A no-learner analysis scores the frozen backbones' own
residuals; they cross in 45 of 45 runs pooled with each horizon's scale fixed on the training
block, and in 39 of 45 under a trailing 250-step scale matched to the adapter's. On four of the
five streams the dependence therefore precedes any adaptation. On weather the adapter adds
dependence that the matched raw residuals show much more weakly. The pooled scores show stronger
dependence than the horizon-separated analyses in both cases.
 \section{Attribution of the damage to the soft reset and the protocol}
\label{sec:attribution}

One transient in the weather stream anchors this section. It lands at steps 18{,}536 to
18{,}538 on every backbone and seed under immediate reveal. The full gate turns it into a
divergence on two of three backbones on 3 of 3 seeds. TimesFM~2.5 survives every seed; the full
gate finishes at 2.57--3.06, versus 0.36 for the ungated filter
(Table~\ref{app:tab:transient-imm}). In an exploratory component-ablation campaign, the
soft-reset variant alone reproduces the divergence on the same transient, on 3 of 3 seeds on TiRex
and Chronos-2 (Figure~\ref{fig:anatomy}; the divergences recur on the completed cache, Appendix~\ref{app:repro}).

\begin{figure*}[t]
\centering
\includegraphics[width=\figfourwidth]{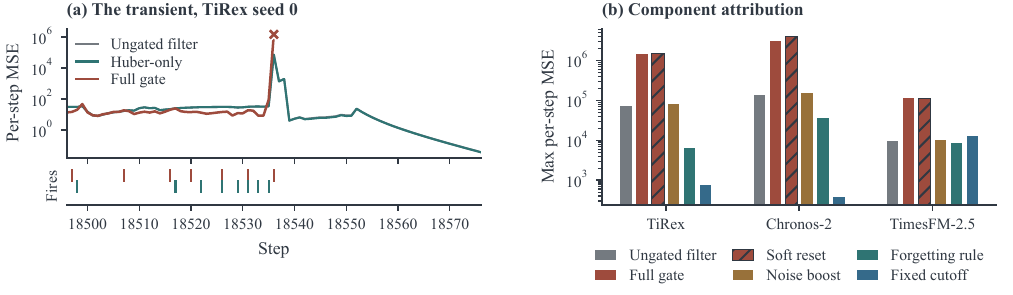}
\caption{Weather transient under immediate reveal. (a) TiRex, seed 0: per-step MSE, guard
termination (red $\times$), and fires (ticks). (b) Completed-cache maximum per-step MSE by variant and backbone.}
\label{fig:anatomy}
\end{figure*}

Under delayed reveal the full gate goes from 60 divergences to 2 over 450 runs
(Appendix~\ref{app:divergence}). It survives the transient on every backbone and seed, and it
still has the worst MSE among the four primary Kalman variants there
(Table~\ref{app:tab:transient-del}). The three single-component variants each diverge in 2 of 450
delayed runs, at the full gate's two weather spike cells. The delayed-reveal data therefore do not
identify one component as the unique cause. Over twenty seeds per backbone on the clean weather
stream, the soft-reset variant finishes above the ungated filter in 20 of 20 seeds on every backbone.
It finishes above the full gate in 18, 14, and 13 of 20 paired seeds, while the boost and override
variants finish below the full gate on every backbone (Table~\ref{app:tab:seeddeltas}).

Removing the soft reset alone from the full gate, with the recovery rule kept, removes the
immediate-reveal divergence and most of the cost under both protocols. That variant finishes the
clean weather cell at 0.136 to 0.434 under immediate reveal, where the full gate diverges or
reaches 3.06, finishes below the full gate in 20 of 20 delayed seeds on every backbone, and
diverges in 1 of 450 immediate runs against the full gate's 60, and in 3 of 450 delayed runs
against its 2 (Appendix~\ref{app:divergence}). On this event, within this response design,
the soft-reset-only variant reproduces the observed divergence and most of the measured damage, and
removing the reset alone avoids most of that damage.
 \section{Alternative monitors and adapter value}
\label{sec:cutoff}

The frozen backbone provides the reference for the adapter's value (Appendix~\ref{app:gain}).
Under diagnostic immediate reveal, the adapter gains on 12 of 15 cells but multiplies the frozen
error 2.9 to 22 times on weather. Under composite coordinate and delayed reveal, it loses on 14
and 15 of 15 cells, with median gains of $-0.34$ and $-0.60$ (Appendix~\ref{app:causal}). As a
third-party diagnostic, the ORCA residual refiner~\citep{dai2026orca} gains on all 12
non-weather cells in our evaluation, suggesting that the Kalman result is not universal. Different inputs
and warm-up rules prevent a fair ranking.

An offline oracle using the adjacent label recovers 84 to 95 percent of the frozen model's error on
four streams; the tested causal residual summaries recover at most 23 percent. Fixing the forgetting
factor still yields negative median gain. Under causal reveal, the adapter's post-change error is
0.16 to 0.73 of the frozen error off weather but 2.0 to 323 times
it on weather (Appendix~\ref{app:causal}).

\begin{figure*}[t]
\centering
\includegraphics[width=\textwidth]{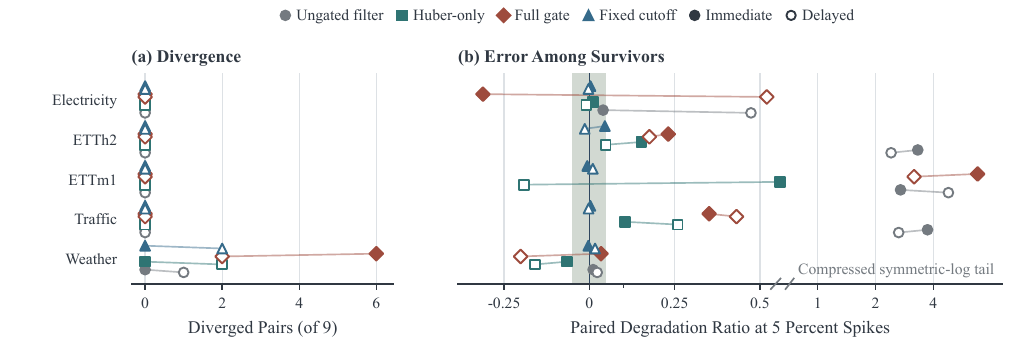}
\caption{Packet-spike divergence and survivor-only degradation at 5 percent. Filled and hollow
markers denote immediate and delayed reveal. The fixed cutoff remains within the shaded $\pm0.05$
band among survivors; two delayed-weather pairs diverge. The full gate reaches $+6.81$ on ETTm1;
Appendix~\ref{app:divergence} gives all cells.}
\label{fig:spikeprofile}
\end{figure*}

In the selected 36 immediate-reveal runs where the ungated adapter has positive clean gain,
excluding weather, the capped policy retains 0.96 of that gain under packet spikes, versus
$-0.35$ for the ungated filter. This cohort does not establish general effectiveness. The full
gate, fixed cutoff, and capped policy all lose under packet-level shifts and freezes. Appendix~\ref{app:gain}
reports the complete retention analysis and the exploratory frozen-backbone mixture.

The fixed cutoff has no anytime-valid guarantee. Its median clean action rate is 34.0 percent,
against a nominal 0.1 percent. Its survivor-only ratio stays within $\pm0.05$ on every stream, but
two delayed-weather pairs diverge. Under the fixed 100-fold guard from step 200, adverse delayed
totals (divergence or higher error) are 45/45 ungated, 22/45 Huber-only, 40/45 full gate, and 18/45 cutoff
(Figure~\ref{fig:spikeprofile}; Appendix~\ref{app:divergence}).

Two exploratory variants test response capping and detector thinning. Capping removes covariance reset
and recovery re-entry; during each 50-step recovery window, it holds $\lambda=0.95$ and linearly
decays the tenfold process-noise boost. It
diverges in 1 of 450 immediate runs and no clean runs. Detector stride 96 lowers clean-run firing
from 45/45 to 11/45 and 10/45 under immediate and delayed reveal, with 6/450 and 2/450 divergences
(Tables~\ref{app:tab:census} and~\ref{tab:divergence}).

Within 960 steps at magnitude 1.0, the $A=400$ e-detector alarms on frozen residuals in 28/45 runs,
versus 0/45 clean twins and 0/45 adapter-residual runs. Frozen scores remain 14--17 times baseline
through step 2{,}000. Over the full post-onset test block, the live stride-96 detector alarms in
43/45 runs under either reveal versus 8/45 clean twins; among detected runs, median actionable
latencies are 405 and 501 steps~\citep{shin2024edetectors}. The signal improves detection, but hold
and handover force a tradeoff between contamination robustness and genuine-change adaptation.
Dependence leaves the detector's average-run-length guarantee unjustified; Appendix~\ref{app:power}
reports the negative short-window results.

At 5 percent, packet freezes degrade the fixed cutoff on four non-weather streams ($+1.72$ to
$+13.76$); packet shifts degrade every tested ETTh2 variant (13/13) and all Kalman variants on ETTm1 and traffic.
The spike check therefore does not prevent sustained-change damage: the prespecified composite fails
on all three backbones
(Table~\ref{app:tab:sustained-all}, Figure~\ref{fig:negatives}).
 \section{Implications and scope}
\label{sec:discussion}

\noindent\textbf{Proved.}\enspace Overlap breaks exchangeability of oracle scores; disjoint origins
restore it only for that scorer.

\noindent\textbf{Measured.}\enspace On real streams, alarms occur on dependent, marginally
nonuniform scores. Adaptation can amplify damage, and neither system improves prediction
uniformly. Neither procedure has its nominal guarantee on these streams.

\noindent\textbf{Scope.}\enspace Evidence covers one adapter, three backbones, five fixed paths,
and horizon 96. Most cells use three seeds; the delayed-reveal clean-weather ablation uses 20 per
backbone. A guarantee matters only when its assumptions hold for the deployed score process.
 
\subsection*{AI use statement}

In this work, we used generative AI to help interpret saved experimental outputs, draft the
statement and proof of Corollary~\ref{cor:thin} in Appendix~\ref{app:valid}, and edit the
manuscript, including suggesting word choices. The authors checked all interpretations against
the saved records, verified the proof line by line, and reviewed every AI-assisted change. We did
not use generative AI to generate synthetic data, form hypotheses, design the methodology,
implement code, or clean data; translation and qualitative analysis are not applicable to this
work. Every quantitative claim traces to a saved experiment record, and every rule fixed before
the runs is reported as measured. We take responsibility for the final content of this work,
including text, claims, and files produced with the aid of generative AI.

\subsection*{Reproducibility statement}

Every final result replays the completed backbone-forecast cache. Appendix~\ref{app:repro}
records the earlier incomplete-cache campaign as a provenance audit, specifies the rule that
selects completed reruns for the final record, and gives the dataset, split, normalization,
model-release, seed, and configuration conventions. Appendix~\ref{app:spec} specifies the
adapter recurrence, reveal-time event order, gate, injectors, and synthetic nulls. Every number in the main text is
stated with its cohort and traces to a saved result record or action log. The classical no-reset
wealth process's replays check the saved $p$-values and crossings against an offline recomputation in
every run, and the conditional-null variants use matched pre-treatment draws. The supplementary material
contains the code for the main experiment families and the dependencies they require. Their
configurations generate the primary settings, diagnostic experiments, and follow-up settings rather than one entry
point per reported variant. The full codebase, saved result records, action logs, and
completed forecast cache will be released upon publication. The external-refiner evaluation is
reported as a third-party diagnostic and uses the released methods at their default settings.
The ETT streams are released under CC BY-ND 4.0 and the electricity stream derives from the
UCI ElectricityLoadDiagrams20112014 release~\citep{trindade2015electricity} under CC BY 4.0. The weather and traffic streams are the long-horizon
benchmark versions of \citet{wu2021autoformer}, obtained from a maintained benchmark release, and
carry no license identifier of their own. Chronos-2 and TimesFM 2.5 are released under Apache 2.0 and TiRex under
the NXAI community license. None of these assets is redistributed.
The saved score and $p$-value streams, the coordinate and residual-permutation bundles, and the
refiners' saved forecasts behind Table~\ref{app:tab:refiner-self} are not included in the
supplement. They will be released with the full codebase upon publication.

\subsection*{Ethics statement}

This work uses five public forecasting benchmarks and three publicly released frozen forecast
models, and it involves no human subjects and no personal data. Its subject is a deployed safety
mechanism whose guarantee does not hold under the conditions studied here. We report that
failure, its measured causes, constructive diagnostics, and response baselines. These results
help practitioners avoid relying on the guarantee where its premise fails. The authors declare
no conflicts of interest.
 
\begingroup
\setlength{\emergencystretch}{3em}
\bibliographystyle{plainnat}

\endgroup

\clearpage
\onecolumn
\appendix
\paragraph{Appendix roadmap.}
The appendix is organized by the questions raised in the main paper. Because several sections
contain both original and later analyses, the roadmap assigns status to analyses rather than to
entire sections. A follow-up is called decision-ruled only when its rule was fixed before that
campaign ran; all other later readings are labeled exploratory in their text or captions.

\begin{center}
\small
\setlength{\tabcolsep}{5pt}
\begin{tabular}{@{}p{0.25\textwidth}p{0.22\textwidth}p{0.43\textwidth}@{}}
\toprule
evidence & status & location \\
\midrule
Four primary outcomes & original campaign & Primary analyses in Appendices~\ref{app:census}, \ref{app:divergence}, and~\ref{app:tables}. \\
Premise and null analyses & theory-guided analysis & Proposition~\ref{prop:reuse} and the core analyses in Appendices~\ref{app:valid} and~\ref{app:null}. \\
Recorded-rule follow-ups & decision-ruled follow-up & The named rules for causal feedback, external refiners, and detection response. \\
Additional diagnostics and negative results & exploratory follow-up & Later or post-hoc readings in Appendices~\ref{app:census}--\ref{app:gain}, \ref{app:edetector}, \ref{app:power}, and~\ref{app:latency}. \\
Methods, records, and compute & procedure/audit & Appendices~\ref{app:spec} and~\ref{app:repro}. \\
\bottomrule
\end{tabular}
\end{center}

\section{Experimental specifications}
\label{app:spec}

Figure~\ref{fig:loop} draws the adaptation loop of Section~\ref{sec:setup}. The two synthetic nulls
below sever that loop at the stream and at the learner.

\textbf{Adapter.} The filter's state holds a slope and an intercept per horizon, 192 entries for
the 96-step horizon, and the frozen forecast enters through each horizon's observation row
$[f_{t,h}\;\;1]$, so the state dimension is set entirely by the forecast horizon. Every run starts from the same
library defaults. After each non-skipped feedback packet, the hypergradient step on the innovation
likelihood updates $\lambda$ and the entries of $Q$ and $R$, so the filter's own noise model tracks the stream it
calibrates without a held-out validation split.

\textbf{Drift response and constants.} On a fire the covariance is soft reset to $4P_t$, the
process noise is boosted tenfold with linear decay over a 50-step recovery window, and the
effective forgetting factor is held at 0.95 throughout that window; at the weather event traced in
Section~\ref{sec:attribution} the boost multiplier is 9.6, and the logged process noise there is
$2.0\times$ the ungated filter's. Spike classification is suppressed while a window is open. The
learned forgetting factor sits at its 0.90 floor on weather, where the override lengthens the
memory, and at medians of 0.95 to 0.997 on the other four streams, where it shortens the effective
memory to about 20 origins. The spike check clips an innovation whose score exceeds $q$, the
0.999 quantile of $\chi^2_{96}$ (144.6), to Mahalanobis norm $\sqrt{q}$, and skips the update
above $3q$. The hypergradient step is Adam~\citep{kingma2015adam} with step sizes $10^{-3}$ on
$\lambda$ and $10^{-2}$ on $\log Q$ and $\log R$, started at $\lambda = 1$, every entry of
$Q$ at $10^{-4}$, and every entry of $R$ at 1, with $\lambda$ kept in $[0.90, 1]$. The
coordinates of $Q$ and $R$ then adapt separately. The capped policy removes the covariance soft
reset and disables recovery re-entry. It retains the full gate's $\lambda=0.95$ override and
tenfold process-noise boost with 50-step linear decay. A fire during an open recovery window is
recorded but does not restart the window. The full gate without its soft reset retains re-entry.

\textbf{Contamination injectors.} These interventions operate on the origin-horizon label array
after overlapping labels have been formed; they do not first corrupt one timestamp-indexed sensor
path and then slice it into labels. The same physical timestamp can therefore receive different
values in different origin-horizon entries. The experiments measure corrupted feedback vectors
and do not model a timestamp-consistent sensor failure. Each injector places its corruption at exactly
$\operatorname{round}(fT)$ steps of the $T$-step test block, for a contamination fraction
$f \in \{0.01, 0.05, 0.10\}$. Each injection's magnitude is set in robust units of the clean labels'
own scale, 1.4826 times the median absolute deviation computed per horizon. We call these packet
interventions ``spike,'' ``level shift,'' and ``freeze'' throughout. Spike alters
isolated uniformly chosen steps by six scale units, with a per-step random sign. Level shift
applies a three-unit offset over one contiguous block, with one sign per horizon. Packet freeze holds
one contiguous block at its onset value and carries no magnitude at all.

\textbf{Variants and protocols.} The original campaign contains four primary Kalman variants and two SGD
baselines. Every variant runs under immediate reveal, and fifteen also run under delayed reveal. Those
fifteen are the six original variants (the ungated filter, the Huber-only variant, the full gate, the
fixed-cutoff variant, and the two SGD baselines), the frozen-scale
variant of Section~\ref{sec:frozenr}, the two capped policies and the detector-stride-96 gate of
Section~\ref{sec:cutoff}, the three single-component variants, whose delayed campaign ran after the
immediate one on the completed cache, and the two variants added last, the full gate without its soft
reset (Section~\ref{sec:attribution}) and the detector-stride-96 gate with the capped response
(Section~\ref{sec:cutoff}). Each has its own result file under both protocols. Two reference
variants complete the set. The frozen-backbone variant emits $f_t = H_t\theta_0$ and performs no update.
It reads no label, so label contamination cannot touch it, and one clean cell per
backbone and stream serves as its reference for every cell. The fallback mixture forecasts
\begin{equation}
\hat y^{\mathrm{mix}}_t = w_t\,\hat y^{\mathrm{ad}}_t + (1-w_t)\,\hat y^{\mathrm{fr}}_t,\qquad
w_t = \frac{\exp(-\eta C^{\mathrm{ad}}_t)}{\exp(-\eta C^{\mathrm{ad}}_t) + \exp(-\eta C^{\mathrm{fr}}_t)},\qquad
\eta = \frac{1}{8D^2},
\label{eq:fallback}
\end{equation}
where $C^{\mathrm{ad}}_t$ and $C^{\mathrm{fr}}_t$ are the cumulative squared losses of the ungated
filter and of the frozen backbone on the fed labels, averaged over the horizon, over the origins
revealed before the forecast, and $D$ is the training block's largest frozen residual. The
adapter's forecast and the fed label are clipped to the band of half-width $D$ around the frozen
forecast. On a run whose frozen residual stays inside that band, the mixture's cumulative
fed-label loss is within $8D^2\log 2$ of the better expert~\citep{cesabianchi2006prediction}, and
it forecasts as the frozen backbone after the guard truncates the adapter. Its first version,
fixed before the runs, set the rate as $\eta = 1/(8B^2)$ from the training block's largest label
magnitude $B$, clipped nothing, and carried a fixed share of $10^{-3}$ (Appendix~\ref{app:gain}). Two protocol and feature variants
enter Appendix~\ref{app:causal} alone, the composite coordinate reveal and the fresh-residual features, each as its own configuration. The two live two-stream variants of
Appendix~\ref{app:power} hold the adapter for 480 steps after an actionable alarm of the
frozen-residual e-detector, the handover variant emitting the frozen forecast during the hold. The
second SGD baseline is gated by a surrogate score whose covariance proxy is a trailing estimate of
the innovation's own scale. All final evaluations share every backbone, stream, and seed, the
same completed cached forecasts, and the same contamination draws. Appendix~\ref{app:repro}
retains earlier incomplete-cache records only as a provenance audit. Within each paired ablation,
the remaining design is held fixed, so the observed difference isolates the named component
under that experimental intervention.

\begin{table*}[t]
\centering
\caption{Operational definitions of the primary gate variants and the two response variants used in
the main conclusions. ``Re-entry'' means that a fire during an open 50-step recovery window
restarts that window.}
\label{app:tab:variant-semantics}
\footnotesize
\setlength{\tabcolsep}{3pt}
\begin{tabular}{@{}p{0.12\textwidth}p{0.11\textwidth}p{0.17\textwidth}p{0.43\textwidth}p{0.07\textwidth}@{}}
\toprule
variant & monitor & spike check & fire and recovery behavior & re-entry \\
\midrule
full gate & martingale & outside recovery & opens recovery; suppresses the spike check; applies the $4P_t$ reset, a tenfold $Q$ boost that decays over 50 steps, and a $\lambda=0.95$ override held throughout recovery & yes \\
Huber-only & martingale & outside recovery & opens recovery and suppresses the spike check; applies no reset, $Q$ boost, or $\lambda$ override & yes \\
fixed cutoff & none & every step & no fire or recovery window & n/a \\
capped policy & martingale & outside recovery & opens recovery and suppresses the spike check; applies no reset and the same $Q$ and $\lambda$ response as the full gate & no \\
reset-free gate & martingale & outside recovery & opens recovery and suppresses the spike check; applies no reset and the same $Q$ and $\lambda$ response as the full gate & yes \\
\bottomrule
\end{tabular}
\end{table*}

\paragraph{Compute accounting.}\label{app:compute}
Under the documented scalar-operation convention, the block-aware filter and one
hyperparameter-learning step, including the diagonal approximation for $\lambda$ and its Adam
optimizer update, cost 12{,}230 scalar operations per step. The gate adds 1{,}792, for 14{,}022 in
total, or $0.187\times$ one 75{,}168-operation SGD step;
spike-classified steps add 388. Three generic versions without the block structure cost about
$119\times$, $120\times$, and $239\times$ one SGD step. These arithmetic counts do not imply
speed: the measured Python block loop took 114 microseconds per step versus 27 for dense SGD.

\textbf{Gate state.} The deployed gate holds two pieces of state, a sliding calibration window of
the last 250 scores and the martingale's wealth. The window is built once and every score is
pushed into it. It is never reset, and a fire clears no calibration score. Only the wealth resets
on a fire, restarting at one. The classical no-reset wealth process of Section~\ref{sec:setup}
changes exactly two settings of Algorithm~\ref{alg:gate}, a calibration set that grows and no
wealth reset. Its Ville guarantee requires the stated null; naming the construction does not
establish that null on an observed score stream.

\begin{table*}[t]
\centering
\caption{Feedback packet available at physical reveal step $r$. Here $H_o$ and $y_o$ are the
design and complete label vector of forecast origin $o$, and $(H_o)_{h,:}$ and $(y_o)_h$ denote
their horizon-$h$ row and coordinate. A dash means that no feedback packet is formed.}
\label{app:tab:feedback-packets}
\small
\setlength{\tabcolsep}{5pt}
\begin{tabular}{@{}p{0.15\textwidth}p{0.46\textwidth}p{0.29\textwidth}@{}}
\toprule
protocol & feedback packet $B_r=(H_r^{\mathrm{fb}},y_r^{\mathrm{fb}})$ & availability and use \\
\midrule
immediate vector & $(H_r,y_r)$ & every step; diagnostic and causally unavailable \\
delayed vector & $(H_{r-96},y_{r-96})$ & $r\ge96$; causal, with earlier steps predict-only \\
composite reveal & row $h$ is $((H_{r-h})_{h,:},(y_{r-h})_h)$, $h=1,\ldots,96$ & $r\ge96$; causal; evaluated for the ungated and reference variants \\
feedback-stride-96 & $(H_{r-96},y_{r-96})$ when $(r-96)\bmod96=0$ & all other steps predict-only \\
\bottomrule
\end{tabular}
\end{table*}

\begin{table*}[t]
\centering
\caption{The two stride settings change different parts of the online loop. Every origin is
forecast and evaluated for MSE in both variants. A stride of 1 means that the operation occurs at
every eligible feedback step.}
\label{app:tab:stride-definitions}
\small
\setlength{\tabcolsep}{6pt}
\begin{tabular}{@{}p{0.24\textwidth}cccc@{}}
\toprule
variant & feedback stride & detector stride & adapter updates & gate scores \\
\midrule
detector-stride-96 variant & 1 & 96 & every feedback step & every 96th score \\
feedback-stride-96 protocol & 96 & 1 & every 96th packet & every update \\
\bottomrule
\end{tabular}
\end{table*}

\textbf{Filter and hyperparameter recurrence.} Let $(\theta_{r-1},P_{r-1})$ be the posterior
carried into reveal step $r$, and let $(\widetilde\lambda_r,\widetilde Q_r)$ include any recovery
response decided earlier. The prediction and feedback innovation are
\begin{align}
\theta_r^-&=\theta_{r-1}, &
P_r^-&=P_{r-1}/\widetilde\lambda_r+\operatorname{diag}(\widetilde Q_r),\nonumber\\
e_r&=y_r^{\mathrm{fb}}-H_r^{\mathrm{fb}}\theta_r^-, &
S_r&=H_r^{\mathrm{fb}}P_r^-(H_r^{\mathrm{fb}})^\top+\operatorname{diag}(R_r).
\label{eq:filter-predict}
\end{align}
A relative jitter of $10^{-10}$ times each diagonal entry's magnitude is added to $S_r$. Because
$H_r^{\mathrm{fb}}$ and $P_r^-$ are block diagonal, $S_r$ is diagonal. With the innovation after
the gate's clip, $\bar e_r$, the measurement update is
\begin{equation}
K_r=P_r^-(H_r^{\mathrm{fb}})^\top S_r^{-1},\qquad
\theta_r=\theta_r^-+K_r\bar e_r,\qquad
P_r=(I-K_rH_r^{\mathrm{fb}})P_r^-.
\label{eq:filter-update}
\end{equation}
The implementation symmetrizes $P_r$; a skipped update carries $(\theta_r^-,P_r^-)$ forward. A
soft reset, when enabled by a fire, multiplies the posterior $P_r$ by four after this update.

The hyperparameters take one approximate Adam ascent step while holding $P_{r-1}$ fixed. During
recovery, $\widetilde\lambda_r=0.95$ is a hard override and only the multiplier in
$\widetilde Q_r$ decays. The implementation nevertheless evaluates the diagonal $\lambda$
hypergradient at $\widetilde\lambda_r$ and applies the resulting Adam step to the learned base
$\lambda_r$. This is a surrogate update for the base parameter while the hard override is active;
outside recovery, it is the ordinary derivative for the learned forgetting factor. With
$G_r=\tfrac12(S_r^{-1}\bar e_r\bar e_r^\top S_r^{-1}-S_r^{-1})$, this is the Gaussian
one-step predictive-likelihood gradient when $\bar e_r=e_r$. After the gate clips the innovation,
it is a robustified pseudo-likelihood update. The one-step derivative with respect to the effective
forgetting factor, before the diagonal approximation, is
\begin{align}
\nabla_{\widetilde\lambda}\ell_r^{\mathrm{full}}
 &=\left\langle G_r,H_r^{\mathrm{fb}}(-P_{r-1}/\widetilde\lambda_r^2)
 (H_r^{\mathrm{fb}})^\top\right\rangle.\nonumber
\end{align}
The implementation instead passes only $p=\operatorname{diag}(P_{r-1})$ and uses the diagonal
hypergradient approximation
\begin{align}
\widetilde\nabla_{\lambda}\ell_r
 &=\left\langle G_r,H_r^{\mathrm{fb}}\operatorname{diag}(-p/\widetilde\lambda_r^2)
 (H_r^{\mathrm{fb}})^\top\right\rangle,\nonumber\\
\nabla_{\log Q_j}\ell_r
 &=\widetilde Q_{r,j}(H_{r,:,j}^{\mathrm{fb}})^\top G_rH_{r,:,j}^{\mathrm{fb}}, &
\nabla_{\log R_h}\ell_r&=R_{r,h}(G_r)_{hh}.
\label{eq:hypergradient}
\end{align}
Thus the $\lambda$ update drops the nonzero slope--intercept covariance within each horizon block;
we do not claim that it equals the full-block derivative. The $Q$ and $R$ expressions are the
one-step derivatives for diagonal process and observation scales under the same fixed-$P_{r-1}$
objective.
Adam performs gradient ascent with learning rates $10^{-3}$, $10^{-2}$, and $10^{-2}$ for
$\lambda$, $\log Q$, and $\log R$, respectively, and $(\beta_1,\beta_2,\epsilon)=(0.9,0.999,10^{-8})$.
After each step, $\lambda$ is clipped to $[0.90,1]$, $\log Q$ to $[-20,5]$, and $\log R$ to
$[-10,10]$. Initially, $\lambda=1$, every $Q_j=10^{-4}$, and every $R_h=1$. Thus the scalar
constants specify shared initialization, while the 192 process-noise and 96 observation-noise
coordinates subsequently adapt separately. This is a myopic
one-step gradient, not backpropagation through the earlier filter trajectory.

\begin{algorithm*}[t]
\caption{Reveal-time state transition of the self-monitored loop.}
\label{alg:gate}
\begin{algorithmic}
\Require current-origin design $H_r$, optional packet $B_r=(H_r^{\mathrm{fb}},y_r^{\mathrm{fb}})$,
posterior $(\theta_{r-1},P_{r-1})$, base $(\lambda_r,Q_r,R_r)$, window $W$, wealth $M$
\State \textbf{Predict and emit.} Form $(\theta_r^-,P_r^-)$ by~\eqref{eq:filter-predict} and emit
$\hat y_r=H_r\theta_r^-$ for the current origin.
\If{$B_r$ is absent}
\State Carry $(\theta_r^-,P_r^-)$ forward; do not score, bet, or update the hyperparameters.
\Else
\State \textbf{Score.} Compute $(e_r,S_r)$ from $B_r$ and
$s_r=e_r^\top S_r^{-1}e_r=\sum_{h=1}^{96}e_{r,h}^2/S_{r,hh}$.
\State \textbf{Rank.} If $|W|=0$, set $p_r=1/2$; otherwise set
$p_r=\big(\#\{s_j>s_r\}+u_r(1+\#\{s_j=s_r\})\big)/(|W|+1)$ over $j\in W$,
with $u_r$ uniform on $[0,1]$.
\State \textbf{Bet.} $M\leftarrow K^{-1}\sum_{k=1}^{K}\prod_j
\varepsilon_k p_j^{\varepsilon_k-1}$ over the $p$-values since the last fire, where
$K=19$ and $\varepsilon_k$ ranges from $0.05$ to $0.95$
\If{$M\ge\nicefrac{1}{\alpha}$, with $\alpha=0.05$}
\State \textbf{Fire.} Reset $M$ to one; schedule the posterior soft reset, the next-step
tenfold $Q$ boost with 50-step linear decay, and the $\lambda=0.95$ override held throughout recovery.
\ElsIf{no recovery window is open and $q<s_r\le3q$, where $q$ is the $0.999$ quantile of $\chi^2_{96}$}
\State \textbf{Clip.} Scale $e_r$ to Mahalanobis norm $\sqrt q$.
\ElsIf{no recovery window is open and $s_r>3q$}
\State \textbf{Skip.} Carry $(\theta_r^-,P_r^-)$ forward and skip both the filter and
hyperparameter updates.
\EndIf
\State \textbf{Update.} Unless skipped, apply~\eqref{eq:filter-update} using the treated
innovation, then any scheduled posterior reset and~\eqref{eq:hypergradient}. Append the raw $s_r$ to $W$.
\EndIf
\end{algorithmic}
\end{algorithm*}

\textbf{Synthetic nulls.} The fixed-construction null matches the deployed calibration window from its empty start and runs at both $T = 20{,}000$ steps and the longest deployed test-block horizon
of 52{,}117 steps. The no-parameter-drift simulation uses the same 250-score calibration window and the same level as every
real-data run, with a 500-step warm-up before the gate begins scoring.

\section{Reproducibility}
\label{app:repro}

\textbf{Datasets, splits, and scaling.} Table~\ref{app:tab:data-spec} gives the exact origin
ranges used by the main experiments. Indices are zero-based rolling-window origins. The split is
chronological with nominal proportions 20:5:75. At both internal boundaries, 191 candidate
origins are dropped so that a 96-step context and 96-step label cannot cross the boundary. The
loader drops the date column, retains numeric columns in file order, and uses the last numeric
column, the benchmark's oil-temperature target channel, as the target. It performs no
interpolation, imputation, or row deletion. Both the cached forecast and label are divided by the
target's standard deviation over the training rows, with no centering and a floor of one only if
that standard deviation is at most $10^{-8}$. Reported MSE is on this normalized scale; raw-scale
MSE is obtained by multiplying by the squared recorded scale.

\begin{table*}[t]
\centering
\caption{Dataset and chronological split specification. ``Rows'' is the number of numeric time
steps after removing the date column. Origin ranges are inclusive.}
\label{app:tab:data-spec}
\small
\setlength{\tabcolsep}{6pt}
\begin{tabular}{@{}lrlrrr@{}}
\toprule
stream & cadence & rows & training origins & validation origins & test origins \\
\midrule
ETTh2 & hourly & 17{,}420 & 0--3{,}254 & 3{,}446--4{,}115 & 4{,}307--17{,}228 \\
ETTm1 & 15 minutes & 69{,}680 & 0--13{,}706 & 13{,}898--17{,}180 & 17{,}372--69{,}488 \\
weather & 10 minutes & 52{,}695 & 0--10{,}309 & 10{,}501--12{,}934 & 13{,}126--52{,}503 \\
electricity & hourly & 26{,}304 & 0--5{,}031 & 5{,}223--6{,}337 & 6{,}529--26{,}112 \\
traffic & hourly & 17{,}544 & 0--3{,}279 & 3{,}471--4{,}147 & 4{,}339--17{,}352 \\
\bottomrule
\end{tabular}
\end{table*}

\textbf{Backbone releases and inference.} All three models are frozen and receive only the
last target channel from a 96-step context. Each emits a deterministic 96-step point forecast in
float32, cached once per origin; the adapter subsequently operates in float64. CUDA is used when
available, with CPU fallback. Table~\ref{app:tab:backbone-spec} records the model releases and
forecast channels.

\begin{table*}[t]
\centering
\caption{Frozen backbone releases and point-forecast channels.}
\label{app:tab:backbone-spec}
\scriptsize
\setlength{\tabcolsep}{5pt}
\begin{tabular}{@{}p{0.14\textwidth}p{0.43\textwidth}p{0.36\textwidth}@{}}
\toprule
backbone & model release & inference used for the cache \\
\midrule
TiRex & public TiRex release & native deterministic mean head; no compilation \\
Chronos-2 & public Chronos-2 release & returned median quantile channel; float32 weights \\
TimesFM~2.5 & public TimesFM~2.5 PyTorch release & native point channel; maximum context 1{,}024, maximum horizon 256, input normalization and quantile-crossing correction enabled \\
\bottomrule
\end{tabular}
\end{table*}

The validity analysis conditions on the realized forecast and score streams and therefore does
not require an assumption about whether these five public streams appeared in pretraining, which
\citet{meyer2025leakage} document is hard to establish for released time-series checkpoints.
Possible pretraining overlap may nevertheless affect forecast quality and limits broader
generalization.

Every backbone forecast is computed once per origin and cached. Final tables and figures use the
immutable completed-cache record. An earlier
immediate-reveal campaign began before cache construction had finished; it is retained only as
provenance and is not used for final results. Within each completed-cache campaign, runs that
share a stream, backbone, and context receive byte-identical forecast inputs, and each
contaminated run is paired with a clean twin built from the same cell. No value from the earlier
incomplete-cache campaign appears in a final claim.

We reran every cell of the six original variants on the completed cache, including the clean setting
and all three injectors at 1, 5, and 10 percent. Of the 300 stream-setting-configuration ratios, 263 move by
at most 0.01 and 280 by at most 0.05; divergence agrees in 297 cells, and differs by one run in the
remaining three. The full gate's weather divergence recurs in every corresponding rerun. The
largest ratio changes are listed in Table~\ref{app:tab:regen-full}, and
Tables~\ref{app:tab:regen} and~\ref{app:tab:regen2} verify the main spike, freeze, and shift
evaluations on the completed cache. Figure~\ref{fig:anatomy}(b) and the twenty-seed analysis in
Table~\ref{app:tab:seeddeltas} likewise use completed-cache records.

Every saved record carries the complete experimental configuration, including the feedback
protocol, gate settings, and contamination settings. Records from different protocols or gate
configurations are kept separate. The released provenance record keeps the earlier campaign separate
from the completed-cache evidence.

Three seeds cover every cell. Seeds are Monte Carlo replicates conditional on each fixed
cached stream. Each gated run initializes its conformal tie-break generator from the run seed.
Matched clean and contaminated cells with the same seed therefore use the same tie-break sequence
over their common scored prefix. Contamination uses a separate child generator. On clean streams the ungated and fixed-cutoff variants
reproduce across seeds, agreeing to eight significant figures, because neither consumes a
randomized $p$-value. The gated variants vary. The full gate's clean weather runs on TiRex finish
at 80.6, 75.4, and 76.4 across seeds 0, 1, and 2, where the ungated filter finishes at 2.032
in all three. Three seeds are a coarse resolution for any per-cell rate, and every pooled
statistic in this paper carries its raw counts.

Every degradation ratio in this paper, main text and appendix alike, follows~\eqref{eq:metrics}. For a given variant, stream, contamination kind, and fraction, each of the three backbones times three seeds gives one pair, a contaminated run and its twin. A pair enters the mean only when both runs finished without crossing the divergence guard, which truncates a run once its cumulative MSE exceeds 100 times the seasonal-naive
reference's cumulative MSE, floored at $10^{-4}$, checked from step 200 on, and at once on a non-finite
state. This operational guard was fixed before the runs. Because a triggered run is truncated,
the saved records cannot support a higher-threshold sensitivity analysis without rerunning the
affected cells; no such analysis is claimed. A ratio where fewer than nine of nine pairs survive
carries an explicit $[n/9]$ mark. Divergence itself is reported as its own outcome wherever it
occurs, never folded into a mean.

All verdicts, exclusion rules, and contamination severities were fixed before the runs. The
composite gate rule requires the gated variant to beat the ungated
variant, per contamination kind, on at least three of five comparable stream cells. Cells where
the ungated variant itself degrades by less than five percent are excluded as incomparable, and
the gate passes only if every evaluated kind passes. Measured against that rule, the
gate fails on three of three backbones.

The supplementary material contains the code for the main experiment families and the
dependencies they require. The full codebase, immutable completed forecast cache, saved result
records, and action logs will be released upon publication. Earlier incomplete-cache records are
retained separately as provenance and are not used for final results.

\section{Clean-stream fire census}
\label{app:census}

Section~\ref{sec:premise} counts fires on clean, uncontaminated replay across the 135
clean-stream runs of the three martingale-carrying variants. Every clean-stream cell fires many
times over its block. The Huber-only variant fires 31 to 72 times on ETTh2, 459 to 557 times on
ETTm1, 462 to 567 times on weather, 77 to 94 times on electricity, and 26 to 43 times on
traffic, each range spanning three backbones and three seeds. The full gate fires 18 to 35 times
on ETTh2, 146 to 176 times on ETTm1, 115 to 277 times on weather, 36 to 56 times on electricity,
and 22 to 33 times on traffic. The weather range is truncated by clean-stream divergence, taken
up in Section~\ref{sec:attribution}. An SGD-driven surrogate of the same gate fires 121 to 173
times on ETTh2, 634 to 712 times on ETTm1, 787 to 1{,}340 times on weather, 57 to 75 times on
electricity, and 118 to 210 times on traffic, the traffic range also truncated by divergence.
These test blocks differ in length across streams, so raw fire counts should not be quoted as
rates without normalizing for horizon.

Table~\ref{app:tab:census} gives, for the eight variants that carry a live martingale, the step of each
run's first fire and its fire count over the deployed clean runs of each protocol, 45 per variant
(three backbones, five streams, three seeds). Runs from the five swept spike quantiles
(Table~\ref{app:tab:budget}) do not enter.
Over the 135 immediate-reveal runs of the three primary variants the first fire lands at steps 6 to
4{,}274 with a median of 460 and each run fires 18 to 1{,}340 times; Section~\ref{sec:premise}
counts the runs. Under delayed reveal the first fire lands at step 100 or later
on every variant. The capped policy fires at a median of 54 per run under immediate reveal against
the full gate's 42, the frozen-scale variant fires in 45 of 45 runs under each protocol, 24 to 313
times per run under immediate reveal and 25 to 327 under delayed, and the detector-stride-96 gate
fires in 11 and 10 of 45, at most 3 and 7 times. The gate without its soft reset fires at a
median of 54 per run under each protocol against the full gate's 42 and 41, so removing that
response component raises the fire count, and the combined variant fires in 11 and 10 of 45
runs as the detector-stride-96 gate does, at most 4 and 6 times.

\begin{table}[t]
\centering
\caption{First-fire step and fire count over the deployed clean runs, 45 per variant and protocol
(three backbones, five streams, three seeds), the detector-stride-96 gate and the two variants of
Sections~\ref{sec:attribution} and~\ref{sec:cutoff} each from their own store. Fired
counts the runs with at least one fire; first fire is min / median / max of the step of the first
fire over those runs, and fires is min / median / max of the fire count over all 45 runs.}
\label{app:tab:census}
\footnotesize
\setlength{\tabcolsep}{4pt}
\begin{tabular}{llccc}
\toprule
variant & protocol & fired & first fire & fires \\
\midrule
Huber-only & immediate & 45/45 & 11 / 522 / 963 & 21 / 82 / 573 \\
Huber-only & delayed & 45/45 & 100 / 138 / 1{,}057 & 28 / 84 / 586 \\
full gate & immediate & 45/45 & 11 / 522 / 963 & 18 / 42 / 277 \\
full gate & delayed & 45/45 & 100 / 138 / 1{,}057 & 21 / 41 / 299 \\
SGD surrogate & immediate & 45/45 & 6 / 265 / 4{,}274 & 57 / 171 / 1{,}340 \\
SGD surrogate & delayed & 45/45 & 102 / 361 / 4{,}370 & 57 / 171 / 1{,}337 \\
capped policy & immediate & 45/45 & 11 / 522 / 963 & 20 / 54 / 482 \\
capped policy & delayed & 45/45 & 100 / 138 / 1{,}057 & 21 / 60 / 498 \\
frozen scale & immediate & 45/45 & 11 / 538 / 968 & 24 / 49 / 313 \\
frozen scale & delayed & 45/45 & 100 / 139 / 1{,}063 & 25 / 48 / 327 \\
detector-stride-96 gate & immediate & 11/45 & 18{,}432 / 19{,}200 / 50{,}496 & 0 / 0 / 3 \\
detector-stride-96 gate & delayed & 10/45 & 1{,}248 / 12{,}384 / 51{,}264 & 0 / 0 / 7 \\
full gate, no soft reset & immediate & 45/45 & 11 / 522 / 963 & 18 / 54 / 454 \\
full gate, no soft reset & delayed & 45/45 & 100 / 138 / 1{,}057 & 20 / 54 / 492 \\
detector-stride-96, capped response & immediate & 11/45 & 18{,}432 / 19{,}200 / 50{,}496 & 0 / 0 / 4 \\
detector-stride-96, capped response & delayed & 10/45 & 1{,}248 / 12{,}384 / 51{,}264 & 0 / 0 / 6 \\
\bottomrule
\end{tabular}
\end{table}

Section~\ref{sec:cutoff} reports what the variants do to the update channel on those same clean
streams. Per-run spike classifications for the fixed-cutoff variant range from 4{,}190 to 21{,}599
over runs of 12{,}922 to 52{,}117 steps, under both protocols, against per-backbone medians of
5{,}636 to 6{,}109 under immediate reveal. The full gate's per-backbone medians are 3{,}198 to
4{,}219 under immediate and 4{,}058 to 4{,}291 under delayed reveal over the same blocks. Read
as a fraction of each run's own length over the 90 clean runs of each variant and both protocols,
the fixed-cutoff variant classifies 28.6 to 41.4 percent of steps as spikes, at a median of 34.0
percent, the full gate 9.7 to 27.7 percent at a median of 23.4, and the Huber-only variant 8.2 to
26.6 percent at a median of 21.7. The nominal $0.999$ chi-square quantile predicts 0.1 percent
for a well-specified filter.

Table~\ref{app:tab:cutoffq} reruns the fixed-cutoff variant at the $0.9999$ and $0.99999$ quantiles
under both protocols, each as its own configuration, over the full contamination grid.
The clean share falls to medians of 27.7 and 23.5 percent, so even the $0.99999$ quantile
classifies 23{,}500 times its nominal share, and the per-backbone action medians bracket the full
gate's under both protocols as overlapping ranges. The spike ratios at 5 percent stay within
0.05 in magnitude at the deployed quantile and 0.04 at the two raised ones, the divergence
counts do not move, and the packet-freeze and level-shift ratios on the four non-weather streams stay
within their deployed-quantile range or worsen. The sustained-contamination negatives are
therefore not a matter of the cutoff's height.

Table~\ref{app:tab:budget} makes the spike comparison symmetric. Both variants were rerun on the 5
percent spike cell and its clean pair over three seeds per backbone under both protocols, the
cutoff at the $0.999999$ and $0.9999999$ quantiles and the full gate at $0.998$, $0.995$,
$0.99$, $0.98$, and $0.95$, each as its own configuration. The budget is the clean run's
spike-action count, the median over seeds, and for each backbone, stream, and protocol each variant
is read at the available quantile nearest the other variant's deployed budget in log ratio. The
gate's count answers weakly to its quantile, and a twin of the full gate explains why. The twin
is identical in configuration and stepper and records the flag the session raises when an active
recovery period overrides a spike classification, on every clean cell under both
protocols at the deployed quantile and on weather at $0.95$ (Table~\ref{app:tab:suppress}). At
the deployed quantile the override removes 981 to 1{,}274 classifications per weather run, 0.26
to 0.34 of the actions the gate applies, and 263 to 1{,}647 elsewhere, 0.05 to 0.15. Applied and
overridden together reach 0.34 to 0.38 of the cutoff's count on weather under delayed reveal
with equal exposure and 0.60 to 0.91 on the other streams, so most of the gap is the gated
trajectory, on which fewer innovations exceed the same threshold. A second twin with the
override disabled applies every classification. On weather under delayed reveal it acts on 0.57
to 0.62 of the cutoff's count at the deployed quantile, two of its nine runs ending at the
divergence guard, one Chronos-2 and one TimesFM~2.5 seed, and at $0.95$ on 0.75 and 0.82 on
TiRex and TimesFM~2.5 while all three Chronos-2 seeds end at the guard at step 18{,}584. Under
immediate reveal nine of its nine weather runs end at the guard at the deployed quantile and
eight of nine at $0.95$, against six of the deployed gate's, so applying every classification
inside recovery costs divergences.

On weather, where the gate acts through 3{,}272 to 8{,}036 boosted steps per run at the
per-backbone median, neither variant reaches the other's budget. The cutoff at $0.9999999$ still
acts 1.97 to 2.99 times as often as the gate at $0.999$, and the gate at $0.95$ acts 0.24 to
0.42 times as often as the cutoff. Under immediate reveal the gate's clean weather runs on TiRex
and Chronos-2 end at the divergence guard at step 18{,}537 of 39{,}378 while the cutoff's run
the block, so those two raw-count ratios, 2.99 and 2.86, compare unequal exposures. Per step the
cutoff at $0.9999999$ acts 1.41 and 1.34 times as often as the gate there, and under delayed
reveal, with equal exposure, 1.97 to 2.12 times. On the other eight cells exposure is equal and
the match lands within 0.90 to 1.09 of the target in both directions. The largest matched cutoff
ratio is $+0.035$ on ETTh2 under immediate reveal. The cutoff at $0.9999999$ diverges in one of
its nine immediate weather spike runs, TiRex, one seed, its only divergence at any quantile
outside the two delayed weather cells shared by every variant.

The budget counts the spike check the two variants share, and the gate also modifies steps through
its drift response. At the spike-matched quantile its spike actions plus boosted steps come to
1.16 to 1.43 times the deployed cutoff's on the eight matched cells and 0.47 to 0.94 on weather.
Re-matching on that total, with no new runs, chooses the deployed cutoff or its $0.9999$
quantile and the gate's $0.995$ to $0.999$ on those cells and lands within 0.89 to 1.09 of the
target. It leaves the cutoff within 0.05 in magnitude, $+0.05$ on ETTh2 under immediate reveal
at its largest, and the gate at $+5.37$ and $+3.66$ on ETTm1, $+0.18$ to $+0.54$ on ETTh2,
electricity, and traffic under delayed reveal and ETTh2 and traffic under immediate reveal, and
$-0.31$ on electricity under immediate reveal. Fires, which reset the covariance, and the
intensity of each boosted step are counted in neither currency.

Table~\ref{app:tab:regen} reruns the four primary variants of Section~\ref{sec:cutoff} on the
completed cache (Appendix~\ref{app:repro}), under immediate reveal on the clean and 5 percent
spike cells. The regenerated fixed cutoff reproduces its saved records exactly in all 90 runs
and the regenerated full gate's clean runs reproduce the override-recording twin's in all 45, so
the pipeline is bit-reproducible on the completed cache. The pooled spike ratios move by less
than 0.02 on 18 of the 20 stream-configuration cells. The full gate's ETTm1 ratio rises from $+6.28$ to
$+6.81$, and the Huber-only weather ratio moves from $-0.03$ over 8 surviving pairs to $-0.07$
over 9, the saved run's one spike divergence not recurring. Every ordering the text states
holds on both generations, and the cutoff stays within 0.05 in magnitude on every stream and
remains the smallest change of the four variants on each. Table~\ref{app:tab:regen2} extends the
rerun to the packet-freeze and level-shift cells at 5 percent. The fixed cutoff again reproduces its
saved records exactly, 135 of 135 runs, and the pooled ratios move by at most 0.03 on 32 of the
40 stream, injector, and variant cells. The eight that move more are ETTm1 and ETTh2 cells of the
three variants whose saved runs consumed pre-cache forecasts, the largest the full gate's ETTm1
packet-freeze ratio, $+14.17$ saved and $+16.98$ regenerated, then its ETTh2 level shift, $+19.29$ to
$+19.03$, and the ungated filter's ETTm1 level shift, $+15.21$ to $+15.02$. Both negative
results of Section~\ref{sec:cutoff} hold on both generations. Each injector degrades every
gated Kalman variant on the four non-weather streams, and the fixed cutoff's ordering against the
full gate is the same on every cell. On the weather transient under immediate reveal the
fixed-cutoff variant reaches the lowest maximum per-step MSE of the ablation variants on TiRex and
Chronos-2, 793 and 381, and is the weakest Kalman variant on TimesFM~2.5, at test-block MSE
3.413 against an ungated 0.361. Under delayed reveal it is the best Kalman variant on all three
backbones, at 0.167, 0.181, and 0.128.

The detector-stride-96 gate of Section~\ref{sec:cutoff} runs the same per-step spike check, so its
spike-action share and per-backbone action medians match the fixed cutoff's to within 0.003, at
0.337 and 0.340 median share under the two protocols against the cutoff's 0.340 and 0.340, and
6{,}109, 5{,}801, and 5{,}636 actions under immediate reveal. Its martingale, fed every 96th
scored origin, fires 68 times over all 90 clean runs (Table~\ref{app:tab:census}), with a median
of 0 boosted steps per run and at most 350. Its six immediate divergences are all weather cells,
five on TiRex and one on Chronos-2, one seed each, and its two delayed divergences are the full
gate's.

\begin{table}[t]
\centering
\caption{Fixed-cutoff variant at three spike quantiles. Share is the fraction of clean steps
classified as spikes over 45 clean runs per protocol (minimum, median, maximum). Actions are
per-backbone medians over 15 clean runs for TiRex, Chronos-2, and TimesFM~2.5, against the full
gate's 3{,}198, 3{,}531, 4{,}219 under immediate and 4{,}095, 4{,}058, 4{,}291 under delayed
reveal. Spike is the largest survivor-only paired degradation ratio in magnitude over the five streams at 5 percent, with delayed weather averaging its 7 of 9 survivors. Packet freeze and level
shift are the ETTm1 and ETTh2 ratios at 5 percent over 9 of 9 pairs. Diverged counts runs of
450.}
\label{app:tab:cutoffq}
\scriptsize
\setlength{\tabcolsep}{3pt}
\begin{tabular}{llcccccc}
\toprule
quantile & protocol & share & actions & spike & freeze ETTm1 & level shift ETTh2 & diverged \\
\midrule
0.999 & immediate & 0.288, 0.340, 0.402 & 6{,}109, 5{,}801, 5{,}636 & 0.05 & $+13.76$ & $+10.04$ & 0 \\
0.999 & delayed & 0.286, 0.340, 0.414 & 6{,}079, 5{,}788, 5{,}602 & 0.02 & $+7.27$ & $+2.20$ & 2 \\
0.9999 & immediate & 0.213, 0.279, 0.333 & 4{,}780, 4{,}527, 4{,}172 & 0.03 & $+16.91$ & $+12.42$ & 0 \\
0.9999 & delayed & 0.213, 0.277, 0.336 & 4{,}768, 4{,}522, 4{,}165 & 0.04 & $+6.35$ & $+2.66$ & 2 \\
0.99999 & immediate & 0.166, 0.235, 0.299 & 3{,}814, 3{,}635, 3{,}254 & 0.03 & $+20.36$ & $+13.24$ & 0 \\
0.99999 & delayed & 0.166, 0.234, 0.299 & 3{,}807, 3{,}636, 3{,}249 & 0.01 & $+6.37$ & $+2.85$ & 2 \\
\bottomrule
\end{tabular}
\end{table}

\begin{table}[t]
\centering
\caption{Completed-cache reproducibility check for the immediate-reveal spike comparison. Survivor-only paired
degradation ratios at 5 percent spike contamination, pooled over three backbones and three seeds, saved
campaign records against the four primary variants rerun on the completed cache. Brackets give
surviving pairs of 9 where runs diverged.}
\label{app:tab:regen}
\scriptsize
\setlength{\tabcolsep}{4pt}
\begin{tabular}{llccc}
\toprule
stream & variant & saved & regenerated & difference \\
\midrule
ETTh2 & ungated filter & $+3.32$ & $+3.32$ & $-0.00$ \\
ETTh2 & Huber-only & $+0.15$ & $+0.15$ & $-0.00$ \\
ETTh2 & full gate & $+0.24$ & $+0.23$ & $-0.01$ \\
ETTh2 & fixed cutoff & $+0.05$ & $+0.05$ & $+0.00$ \\
ETTm1 & ungated filter & $+2.68$ & $+2.70$ & $+0.02$ \\
ETTm1 & Huber-only & $+0.63$ & $+0.63$ & $+0.00$ \\
ETTm1 & full gate & $+6.28$ & $+6.81$ & $+0.53$ \\
ETTm1 & fixed cutoff & $-0.01$ & $-0.01$ & $+0.00$ \\
weather & ungated filter & $+0.01$ & $+0.01$ & $+0.00$ \\
weather & Huber-only & $-0.03$\,[8/9] & $-0.07$ & $-0.03$ \\
weather & full gate & $+0.03$\,[3/9] & $+0.03$\,[3/9] & $+0.00$ \\
weather & fixed cutoff & $-0.00$ & $-0.00$ & $+0.00$ \\
electricity & ungated filter & $+0.04$ & $+0.04$ & $+0.00$ \\
electricity & Huber-only & $+0.01$ & $+0.01$ & $+0.00$ \\
electricity & full gate & $-0.31$ & $-0.31$ & $-0.00$ \\
electricity & fixed cutoff & $+0.00$ & $+0.00$ & $+0.00$ \\
traffic & ungated filter & $+3.73$ & $+3.73$ & $+0.00$ \\
traffic & Huber-only & $+0.11$ & $+0.10$ & $-0.00$ \\
traffic & full gate & $+0.35$ & $+0.35$ & $+0.00$ \\
traffic & fixed cutoff & $+0.00$ & $+0.00$ & $+0.00$ \\
\bottomrule
\end{tabular}
\end{table}

\begin{table}[t]
\centering
\caption{Completed-cache reproducibility check for the sustained-contamination cells. Survivor-only paired
degradation ratios at 5 percent under immediate reveal for the packet-freeze and level-shift injectors,
pooled over three backbones and three seeds, saved campaign records against the four primary
variants rerun on the completed cache. Brackets give surviving pairs of 9 where runs diverged.}
\label{app:tab:regen2}
\scriptsize
\setlength{\tabcolsep}{4pt}
\begin{tabular}{lllccc}
\toprule
stream & injector & variant & saved & regenerated & difference \\
\midrule
ETTh2 & freeze & ungated filter & $+0.22$ & $+0.22$ & $+0.00$ \\
ETTh2 & freeze & Huber-only & $+3.24$ & $+3.24$ & $+0.00$ \\
ETTh2 & freeze & full gate & $+8.40$ & $+8.29$ & $-0.10$ \\
ETTh2 & freeze & fixed cutoff & $+12.26$ & $+12.26$ & $+0.00$ \\
ETTm1 & freeze & ungated filter & $+0.33$ & $+0.26$ & $-0.07$ \\
ETTm1 & freeze & Huber-only & $+11.76$ & $+11.60$ & $-0.16$ \\
ETTm1 & freeze & full gate & $+14.17$ & $+16.98$ & $+2.82$ \\
ETTm1 & freeze & fixed cutoff & $+13.76$ & $+13.76$ & $+0.00$ \\
weather & freeze & ungated filter & $+0.00$ & $+0.00$ & $-0.00$ \\
weather & freeze & Huber-only & $+0.00$ & $+0.00$ & $+0.00$ \\
weather & freeze & full gate & $-0.02$\,[3/9] & $-0.02$\,[3/9] & $+0.00$ \\
weather & freeze & fixed cutoff & $-0.00$ & $-0.00$ & $+0.00$ \\
electricity & freeze & ungated filter & $-0.08$ & $-0.08$ & $-0.00$ \\
electricity & freeze & Huber-only & $+1.41$ & $+1.41$ & $+0.01$ \\
electricity & freeze & full gate & $+1.74$ & $+1.74$ & $-0.00$ \\
electricity & freeze & fixed cutoff & $+1.72$ & $+1.72$ & $+0.00$ \\
traffic & freeze & ungated filter & $+1.75$ & $+1.75$ & $-0.00$ \\
traffic & freeze & Huber-only & $+7.18$ & $+7.18$ & $+0.00$ \\
traffic & freeze & full gate & $+6.14$ & $+6.14$ & $-0.00$ \\
traffic & freeze & fixed cutoff & $+6.38$ & $+6.38$ & $+0.00$ \\
ETTh2 & level shift & ungated filter & $+7.79$ & $+7.79$ & $-0.00$ \\
ETTh2 & level shift & Huber-only & $+16.98$ & $+16.98$ & $-0.00$ \\
ETTh2 & level shift & full gate & $+19.29$ & $+19.03$ & $-0.26$ \\
ETTh2 & level shift & fixed cutoff & $+10.04$ & $+10.04$ & $+0.00$ \\
ETTm1 & level shift & ungated filter & $+15.21$ & $+15.02$ & $-0.19$ \\
ETTm1 & level shift & Huber-only & $+10.64$ & $+10.60$ & $-0.04$ \\
ETTm1 & level shift & full gate & $+13.41$ & $+13.50$ & $+0.09$ \\
ETTm1 & level shift & fixed cutoff & $+4.15$ & $+4.15$ & $+0.00$ \\
weather & level shift & ungated filter & $-0.00$ & $-0.00$ & $+0.00$ \\
weather & level shift & Huber-only & $-0.00$ & $-0.00$ & $+0.00$ \\
weather & level shift & full gate & $+0.04$\,[3/9] & $+0.04$\,[3/9] & $+0.00$ \\
weather & level shift & fixed cutoff & $-0.05$ & $-0.05$ & $+0.00$ \\
electricity & level shift & ungated filter & $+1.16$ & $+1.16$ & $-0.00$ \\
electricity & level shift & Huber-only & $+1.14$ & $+1.13$ & $-0.00$ \\
electricity & level shift & full gate & $+1.22$ & $+1.20$ & $-0.01$ \\
electricity & level shift & fixed cutoff & $+2.20$ & $+2.20$ & $+0.00$ \\
traffic & level shift & ungated filter & $+9.54$ & $+9.54$ & $-0.00$ \\
traffic & level shift & Huber-only & $+18.35$ & $+18.35$ & $-0.00$ \\
traffic & level shift & full gate & $+10.57$ & $+10.57$ & $-0.00$ \\
traffic & level shift & fixed cutoff & $+0.97$ & $+0.97$ & $+0.00$ \\
\bottomrule
\end{tabular}
\end{table}

\begin{table}[t]
\centering
\caption{Every cell of the immediate-reveal primary grid rerun on the completed cache, six variants,
the clean cell and the three injectors at 1, 5, and 10 percent, 300 stream, cell, and variant ratios
in all. Listed are the 20 survivor-only paired degradation ratios that move by more than 0.05
between the saved campaign and the rerun, largest move first; the other 280 agree within 0.05
and 263 within 0.01. Brackets give surviving pairs of 9 where runs diverged.}
\label{app:tab:regen-full}
\scriptsize
\setlength{\tabcolsep}{4pt}
\begin{tabular}{llclccc}
\toprule
stream & injector & fraction & variant & saved & regenerated & difference \\
\midrule
traffic & spike & 0.05 & SGD surrogate & $+3.40$\,[6/9] & $+0.51$\,[6/9] & $-2.89$ \\
ETTm1 & freeze & 0.05 & full gate & $+14.17$ & $+16.98$ & $+2.82$ \\
ETTm1 & spike & 0.1 & full gate & $+19.10$ & $+20.36$ & $+1.27$ \\
ETTm1 & spike & 0.05 & full gate & $+6.28$ & $+6.81$ & $+0.53$ \\
ETTm1 & level shift & 0.1 & ungated filter & $+30.59$ & $+30.21$ & $-0.38$ \\
ETTm1 & spike & 0.1 & ungated filter & $+19.37$ & $+19.01$ & $-0.36$ \\
ETTh2 & freeze & 0.1 & full gate & $+11.11$ & $+11.41$ & $+0.30$ \\
ETTh2 & level shift & 0.05 & full gate & $+19.29$ & $+19.03$ & $-0.26$ \\
ETTm1 & level shift & 0.05 & ungated filter & $+15.21$ & $+15.02$ & $-0.19$ \\
traffic & level shift & 0.01 & Huber-only & $+1.65$ & $+1.46$ & $-0.19$ \\
ETTh2 & level shift & 0.1 & full gate & $+14.62$ & $+14.46$ & $-0.17$ \\
ETTm1 & freeze & 0.05 & Huber-only & $+11.76$ & $+11.60$ & $-0.16$ \\
ETTm1 & freeze & 0.1 & Huber-only & $+2.90$ & $+3.05$ & $+0.15$ \\
traffic & spike & 0.1 & SGD surrogate & $+5.17$\,[6/9] & $+5.03$\,[6/9] & $-0.15$ \\
ETTm1 & freeze & 0.1 & full gate & $+4.90$ & $+5.01$ & $+0.11$ \\
ETTh2 & freeze & 0.05 & full gate & $+8.40$ & $+8.29$ & $-0.10$ \\
ETTm1 & level shift & 0.05 & full gate & $+13.41$ & $+13.50$ & $+0.09$ \\
ETTm1 & freeze & 0.05 & ungated filter & $+0.33$ & $+0.26$ & $-0.07$ \\
ETTh2 & level shift & 0.01 & full gate & $+1.20$ & $+1.15$ & $-0.05$ \\
ETTm1 & level shift & 0.01 & full gate & $+3.07$ & $+3.12$ & $+0.05$ \\
\bottomrule
\end{tabular}
\end{table}

\begin{table}[t]
\centering
\caption{Recovery-period override on the full gate's clean spike actions, ranges over three
backbones of per-cell medians over three seeds. Applied and overridden count the twin's spike
actions and the classifications its recovery periods turned into normal updates. Classified is
their sum over the cutoff's clean count at the same quantile, live the count of the twin with
the override disabled over the same. The $0.95$ weather rows divide by the cutoff at $0.999$,
the cutoff having no $0.95$ run. An n/a marks a cell where the live twin has no saved value. Immediate-reveal weather counts on TiRex and Chronos-2 cover
the 18{,}537 steps before the guard. The live twin ends at the guard in nine (0.999) and eight
(0.95) of nine immediate weather runs and in two (0.999) and three (0.95) delayed ones.}
\label{app:tab:suppress}
\scriptsize
\setlength{\tabcolsep}{3pt}
\begin{tabular}{llcccccc}
\toprule
protocol & stream & $q$ & applied & overridden & overridden/applied & classified/cutoff & live/cutoff \\
\midrule
immediate & ETTh2 & 0.999 & 3,190--3,531 & 321--448 & 0.09--0.14 & 0.82--0.91 & n/a \\
immediate & ETTm1 & 0.999 & 10,981--11,284 & 1,587--1,632 & 0.14--0.14 & 0.60--0.66 & n/a \\
immediate & weather & 0.999 & 2,912--4,437 & 981--1,144 & 0.26--0.34 & 0.25--0.38 & 0.26--0.30 \\
immediate & weather & 0.95 & 3,684--6,196 & 1,244--1,777 & 0.29--0.38 & 0.33--0.54 & 0.34--0.37 \\
immediate & electricity & 0.999 & 4,219--4,890 & 263--435 & 0.05--0.09 & 0.82--0.89 & n/a \\
immediate & traffic & 0.999 & 3,159--3,202 & 388--458 & 0.12--0.14 & 0.82--0.84 & n/a \\
delayed & ETTh2 & 0.999 & 3,203--3,497 & 334--446 & 0.10--0.14 & 0.83--0.91 & n/a \\
delayed & ETTm1 & 0.999 & 10,960--11,374 & 1,519--1,647 & 0.13--0.15 & 0.63--0.68 & n/a \\
delayed & weather & 0.999 & 4,058--4,470 & 1,167--1,274 & 0.26--0.31 & 0.34--0.38 & 0.57--0.62 \\
delayed & weather & 0.95 & 5,420--6,178 & 1,698--1,794 & 0.28--0.33 & 0.47--0.53 & 0.34--0.82 \\
delayed & electricity & 0.999 & 4,291--4,864 & 265--427 & 0.05--0.09 & 0.83--0.89 & n/a \\
delayed & traffic & 0.999 & 3,139--3,208 & 386--404 & 0.12--0.13 & 0.82--0.85 & n/a \\
\bottomrule
\end{tabular}
\end{table}

\begin{table}[t]
\centering
\caption{Symmetric comparison at matched spike-action budgets, on spike contamination at 5 percent, both variants
rerun across spike quantiles, the cutoff up to $0.9999999$ and the full gate down to $0.95$.
The deployed columns use completed-cache records. Appendix~\ref{app:repro} audits their saved
campaign counterparts; every matched column is unchanged to three decimals.
Deployed columns are the survivor-only paired degradation ratios at $q = 0.999$ pooled over
three backbones and three seeds. Matched columns read each variant, per backbone, at the available
quantile nearest the other variant's deployed clean spike-action count in log ratio, listing the
quantiles chosen across backbones and the range of achieved count over target. Brackets give
surviving pairs of 9 where runs diverged. Weather is not budget-matched in either direction. Its
immediate gate counts on TiRex and Chronos-2 cover 18{,}537 of 39{,}378 steps.}
\label{app:tab:budget}
\scriptsize
\setlength{\tabcolsep}{2.5pt}
\begin{tabular}{llcccccccc}
\toprule
& & \multicolumn{2}{c}{deployed, $q=0.999$} & \multicolumn{3}{c}{cutoff at the gate's budget} & \multicolumn{3}{c}{gate at the cutoff's budget} \\
\cmidrule(lr){3-4}\cmidrule(lr){5-7}\cmidrule(lr){8-10}
protocol & stream & cutoff & gate & $q$ & count/target & ratio & $q$ & count/target & ratio \\
\midrule
immediate & ETTh2 & $+0.05$ & $+0.23$ & 0.9999 & 0.90--1.07 & $+0.03$ & 0.98--0.995 & 0.94--1.02 & $+0.16$ \\
immediate & ETTm1 & $-0.01$ & $+6.81$ & 0.999999 & 0.94--0.96 & $+0.03$ & 0.95--0.98 & 0.95--1.07 & $+3.94$ \\
immediate & weather & $-0.00$ & $+0.03$\,[3/9] & 0.9999999 & 1.98--2.99 & $-0.00$\,[8/9] & 0.95 & 0.24--0.42 & $-0.00$\,[3/9] \\
immediate & electricity & $+0.00$ & $-0.31$ & 0.9999 & 0.93--1.03 & $+0.00$ & 0.99--0.995 & 0.96--1.04 & $-0.32$ \\
immediate & traffic & $+0.00$ & $+0.35$ & 0.9999--0.99999 & 0.96--1.08 & $+0.00$ & 0.98 & 0.98--1.03 & $+0.35$ \\
delayed & ETTh2 & $-0.01$ & $+0.18$ & 0.9999 & 0.90--1.06 & $+0.01$ & 0.95--0.995 & 0.98--1.02 & $+0.19$ \\
delayed & ETTm1 & $+0.01$ & $+3.18$ & 0.999999 & 0.94--0.95 & $-0.00$ & 0.95--0.98 & 0.96--1.08 & $+2.73$ \\
delayed & weather & $+0.02$\,[7/9] & $-0.20$\,[7/9] & 0.9999999 & 1.97--2.12 & $+0.03$\,[7/9] & 0.95 & 0.35--0.42 & $-0.14$\,[7/9] \\
delayed & electricity & $-0.00$ & $+0.54$ & 0.9999 & 0.93--1.04 & $-0.00$ & 0.99--0.995 & 1.02--1.06 & $+0.48$ \\
delayed & traffic & $-0.00$ & $+0.43$ & 0.9999--0.99999 & 0.96--1.09 & $-0.00$ & 0.98 & 1.00--1.02 & $+0.48$ \\
\bottomrule
\end{tabular}
\end{table}

Nominal Ljung--Box diagnostics on the first-step innovation column~\citep{ljungbox1978} reject the null
of uncorrelated residuals at $p<0.01$ in 119 of the 135 clean runs of Section~\ref{sec:premise}.
The portmanteau over-rejects under conditional heteroskedasticity, so these innovation-level
diagnostics are inconclusive and the raw fire counts carry the argument there; the Ljung-Box
tests of Sections~\ref{sec:isolation} and~\ref{sec:e6} run on the conformal $p$-streams, a
different object. The rejection is decisive across
the full-gate clean runs on three of the five streams. The largest $p$-value within each cell's runs is below $10^{-8}$ on ETTh2, below
$10^{-15}$ on ETTm1, and below $10^{-10}$ on traffic. Electricity's largest within-cell
$p$-value is $0.25$. The weather cell is the remaining exception. Its full-gate runs report a
Ljung-Box statistic with $p \approx 1$. This is a lower-tail degeneracy. One extreme
innovation drives every sample autocorrelation toward zero, inflating the portmanteau $p$-value
without indicating whiteness, and those degenerate-diagnostic weather runs still fire 115 to 277
times each.

The frozen-scale variant of Section~\ref{sec:frozenr} extends this census. Under immediate reveal on
TiRex it fires 30 to 33 times on ETTh2, 238 to 250 times on ETTm1, 130 to 131 times on weather,
48 to 50 times on electricity, and 36 to 40 times on traffic, where the full gate on the same
backbone fires 32 to 35, 160 to 167, 115 to 120, 51 to 56, and 31 to 33 times. Its median clean run fires 49 times under immediate reveal and 48 under
delayed reveal, against the full gate's 42 and 41 on the same 45 runs, and the full gate's own
per-run counts span 18 to 277 under immediate reveal and 21 to 299 under delayed reveal. A nominal Ljung--Box diagnostic on the
first-step innovation column rejects at $p<0.01$ in 35 of its 45 immediate runs and in 45 of 45
of its delayed runs. Of the 10 immediate runs that do not reject, 9 are the weather runs, which
report $p \approx 1$ under the same lower-tail degeneracy and still fire 122 to 302 times each,
and one is an electricity run at $p = 0.027$.

Under delayed reveal the full gate's clean-stream fires do not fall, so the
firing is not an artifact of immediate reveal alone. Totalled over five streams and three seeds per backbone, fires rise from 1{,}186 to 1{,}726 on TiRex, from 1{,}083
to 1{,}495 on Chronos-2, and from 1{,}610 to 1{,}649 on TimesFM~2.5. Part of the rise is
exposure. The immediate-reveal weather runs on TiRex and Chronos-2 are truncated by clean-stream
divergence at step 18{,}537 of 39{,}378, so their delayed counterparts run 2.1 times as long.
The per-step fire rate still does not fall on any backbone. On weather it moves from
$6.3\times10^{-3}$ to $7.4\times10^{-3}$ on TiRex, from $6.67\times10^{-3}$ to
$6.70\times10^{-3}$ on Chronos-2, and from $6.8\times10^{-3}$ to $7.1\times10^{-3}$ on
TimesFM~2.5. Consecutive origins still overlap in 95 of their 96 entries under delayed reveal,
so the change of protocol removes none of the overlap-induced dependence. Delayed first fires do arrive earlier
on two streams, through a start-up transient. Under delay 96 the first 96 test steps are
predict-only and divide the filter's covariance by the forgetting factor on each, so the first
scored steps come from an inflated-covariance state. On the saved score streams of
Appendix~\ref{app:valid} the classical no-reset wealth process crosses at a median scored index of 8 on
weather and 12 on ETTh2 over the 9 runs of each stream, against 877 and 470 under immediate
reveal, with 9 runs per stream and protocol. Electricity crosses at index 42 under both protocols and traffic at 967 against 977. Figure~\ref{fig:sweep} extends the
census to every variant. The premise failure holds across the configurations tested.

\begin{figure}[t]
\centering
\includegraphics[width=0.9\linewidth]{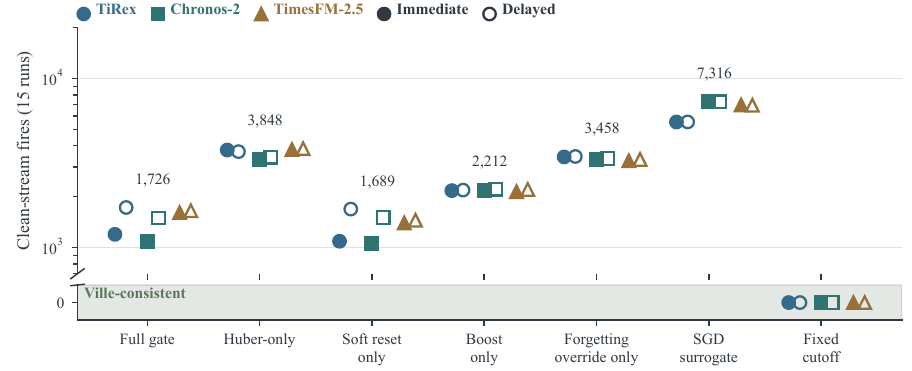}
\caption{Clean-stream fires per variant, totalled over five streams and three seeds per
backbone, immediate reveal filled and delayed reveal hollow. Every variant in this figure that carries the
martingale fires more than a thousand times on every backbone under every protocol tested; the
detector-stride-96 gate, which also carries it, fires 68 times over its 90 clean runs and is not shown. The
fixed-cutoff variant carries no martingale, so it has no fire event to count and its count is zero under both protocols by construction.}
\label{fig:sweep}
\end{figure}

\FloatBarrier
\section{Classical no-reset replay and origin-overlap analyses}
\label{app:valid}

Two shadow variants rerun the deployed loop on the real streams and save every score,
one with all three hyperparameters learning and one with the observation-noise scale held at its
warm-up value. Both run in shadow, so the gate observes every score and acts on none. Across three backbones, five streams, three seeds, and two protocols the bundle holds 180 clean runs
and 4{,}932{,}540 steps, of which 4{,}923{,}900 are scored, each of the 90 delayed-reveal runs
spending its first 96 steps predict-only. An offline replay recomputes the deployed $p$-values and the fire
sequence from those raw scores and reproduces both exactly in 180 of 180 runs, so every number in
Table~\ref{app:tab:valid} is measured on the score sequences the deployed gate itself saw. The
classical no-reset wealth process keeps every score in its calibration set and uses the wealth of
Algorithm~\ref{alg:gate} with no reset, at $\alpha=0.05$, so its log-wealth threshold is
$\log 20$. If a cell's score sequences were exchangeable, Ville's inequality would limit the
expected number of crossing runs to 2.25 per 45-run cell. No such premise is assumed for these
observed streams.
Every cell crosses in 45 of 45. The deployed gate's first fire on the same streams has a median of 515 steps under immediate reveal and 138 under delayed with the scale learned, and 535
and 139 with it held, over the 45 runs of each variant and protocol. The classical no-reset
wealth process's pooled first crossing has a median of 528 and 138 steps with the scale learned and
776 and 139 with it held, so
the medians coincide under delayed reveal and differ by 13 steps under immediate reveal with the
scale learned.

The level and the calibration length are the two settings the validity argument turns on, so both
were swept on these same saved streams. The shadow
variants apply no response, so the score stream does not depend on what the gate decided, and a gate configured differently and fed these scores would have seen exactly them. Over the 90
clean runs with the scale learned, pooling both protocols, the deployed gate fires in 90
of 90 at every calibration length of 100, 250, and 500 and at every level from $0.05$ down to
$10^{-6}$. At $10^{-8}$ it still fires in 90 of 90 at lengths 100 and 250 and in 87 of 90 at 500.
The classical no-reset wealth process crosses in 90 of 90 down to $10^{-6}$ and in 80 of 90 at $10^{-8}$.
The clean-stream firing is therefore not a property of the level or the window length this campaign chose, since a threshold of $10^{8}$ leaves it in place. The swept lengths span the range the
adapter's configuration admits, and the growing calibration set of the no-reset variant is the
unbounded end of that same axis. In an exploratory, post hoc analysis, the betting family was swept on the same 90 streams at level
$0.05$ and length 250. The deployed mixture
over the epsilon grid crosses in 90 of 90 and fires in 90 of 90, and a single fixed bet of $0.9$
does the same. A fixed bet of $0.5$ crosses in 43 and fires in 53 of 90, and a fixed bet of $0.1$
crosses in 5 and fires in 5 of 90, near the nominal reference of 4.5. The betting family is therefore
not immaterial. The exponent 0.1, the one that stays at its expected count, bets most on very
small $p$-values and loses wealth on every $p$-value above 0.077, so on these streams, whose
$p$-values are small but rarely that small, its wealth collapses; at a pseudo-onset placed at 60
percent of each run its log-wealth sits 10{,}642 to 43{,}491 below the threshold of $\log 20$,
a deficit that 960 steps cannot recover, since a
calibration set of several thousand scores caps each step's gain near 7. This deficit makes
recovery within that window implausible, but it is not a power evaluation. The exponent 0.9,
whose increment stays between 0.96 and 1.43 for $p$ between
0.01 and 0.5, is the closest to not betting and crosses everywhere. No exponent's power against a
change was measured. On the 180 shadow runs, nominal Kolmogorov--Smirnov marginal-uniformity and
Ljung--Box serial-dependence diagnostics both reject at level 0.05 for every deployed and
classical no-reset $p$-stream. The median nominal Kolmogorov--Smirnov $p$-values are
$3\times10^{-31}$ and $2\times10^{-8}$. Because observations within a real stream are dependent,
these values are descriptive diagnostics rather than calibrated inferential $p$-values.

\begin{table}[t]
\centering
\caption{Classical no-reset wealth process replayed on the saved real score streams, per stream and protocol. Each cell pools three backbones and three seeds, so every median and range
covers 9 runs. First crossing is the test step at which the wealth first reaches
$\nicefrac{1}{\alpha}$, and $\max \log M$ is the peak log-wealth over the run. The left pair of
columns is the variant with all three hyperparameters learning, the right pair the variant with the
observation-noise scale held at its warm-up value. All 90 runs in each pair cross.}
\label{app:tab:valid}
\scriptsize
\setlength{\tabcolsep}{2.5pt}
\begin{tabular}{llrrrr}
\toprule
& & \multicolumn{2}{c}{scale learned} & \multicolumn{2}{c}{scale held} \\
\cmidrule(lr){3-4}\cmidrule(lr){5-6}
stream & protocol & first crossing & $\max \log M$ & first crossing & $\max \log M$ \\
\midrule
ETTh2 & immediate & 470 (469 to 470) & 19.5 (16.6 to 57.3) & 667 (660 to 672) & 123.0 (77.1 to 169.3) \\
ETTm1 & immediate & 528 (212 to 539) & 56.0 (18.2 to 56.6) & 2{,}634 (230 to 2{,}705) & 121.1 (52.8 to 206.1) \\
weather & immediate & 877 (776 to 881) & 279.9 (278.2 to 301.8) & 877 (776 to 881) & 327.9 (314.5 to 348.6) \\
electricity & immediate & 42 (11 to 44) & 47.6 (28.9 to 59.6) & 43 (11 to 46) & 234.6 (88.7 to 443.2) \\
traffic & immediate & 977 (965 to 1{,}503) & 22.7 (15.5 to 23.2) & 1{,}542 (1{,}486 to 1{,}559) & 154.4 (112.3 to 203.4) \\
\midrule
ETTh2 & delayed 96 & 108 (108 to 392) & 39.0 (24.4 to 67.1) & 108 (108 to 561) & 166.6 (124.3 to 201.4) \\
ETTm1 & delayed 96 & 615 (121 to 625) & 61.2 (46.7 to 63.4) & 654 (121 to 2{,}777) & 132.2 (72.0 to 221.0) \\
weather & delayed 96 & 104 (100 to 113) & 348.0 (325.5 to 362.1) & 104 (100 to 113) & 398.3 (362.3 to 412.7) \\
electricity & delayed 96 & 138 (107 to 141) & 49.9 (33.3 to 65.3) & 139 (107 to 143) & 239.0 (93.1 to 454.6) \\
traffic & delayed 96 & 1{,}063 (1{,}055 to 1{,}585) & 23.7 (16.6 to 24.3) & 1{,}631 (1{,}245 to 1{,}634) & 142.2 (111.2 to 203.5) \\
\bottomrule
\end{tabular}
\end{table}

Two further score streams were replayed through the same construction, each reproducing its saved fires exactly in 90 of 90 runs. The first is the SGD adapter's shadow stream, the surrogate score of Appendix~\ref{app:spec} saved
with the gate observing and never acting, so that the learner path is the ungated SGD
adapter's. It crosses in 45 of 45 runs under each protocol, 9 of 9 on every stream, at median
first crossings of 293 under immediate reveal (198 on ETTm1 to 2{,}582 on electricity) and 389
under delayed reveal (294 to 2{,}678). A nominal Ljung--Box diagnostic rejects its no-reset $p$-value stream at
$p<0.01$ in 45 of 45. Thinned offline at stride 96 it crosses in 0.008 of 4{,}320 pairs under
either protocol against 0.651 for consecutive blocks of the same count, and 0.144, 0.357, 0.670,
and 0.724 for bursts of 2, 4, 8, and 16 under immediate reveal (0.142, 0.401, 0.649, and 0.726
delayed). At strides 16 and 32 it crosses in 0.354 and 0.126, above the Kalman stream's 0.230
at stride 16 and below its 0.193 at stride 32, with ETTm1 at 0.993 at stride 16. The live detector-stride-96 gate of Section~\ref{sec:cutoff} saves the stream its wealth process saw, 135 to 543 scores per run, median 204. On that stream the classical no-reset wealth process crosses in 12 of 45 immediate and 10 of 45 delayed runs, 0 of 9 on ETTh2, electricity, and
traffic under either protocol, 3 and 1 of 9 on ETTm1, and 9 of 9 on weather, with a median peak log-wealth of 0.30 and 0.33 against the threshold $\log 20 = 3.00$. The weather crossings
under immediate reveal fall at steps 18{,}432 to 20{,}064, within one stride of the transient at 18{,}536. Under delayed reveal they fall at 1{,}248 to 1{,}344 on TiRex and 12{,}384 to
12{,}480 on the other two backbones. The same gated learner's every-origin stream crosses in 45
of 45 under each protocol. Counted per scored origin, the live detector-stride-96 gate fires 26 times over
12{,}861 scores under immediate reveal and 42 over 12{,}816 under delayed, 2.0 and 3.3 per
thousand, against the full gate's 3{,}890 over 1{,}108{,}089 and 4{,}870 over 1{,}228{,}815, 3.5
and 4.0 per thousand. Per calendar step the two read 0.021 against 3.5 per thousand under
immediate reveal, so the fall in fire count is mostly the 96-fold fall in exposure, with a
reduction per score of 42 and 17 percent.

The feedback-stride-96 protocol thins the adapter's feedback as well as the martingale's. Every
origin is forecast and evaluated for MSE, but only every 96th delayed packet produces a gate score
and adapter update. A run of $T$ test steps therefore supplies $\lfloor (T-96)/96 \rfloor + 1$
scores to the martingale, 134 to 542 per run
(Table~\ref{app:tab:stride}, the per-cell record). Clean-stream firing falls from 45 of 45 runs,
median 41 fires per run under delayed reveal, to 7 of 36 non-diverged runs, one fire each, in
three cells. Two confounds move with the stride. The filter takes 95 predict-only steps between
reveals, and 3 of 15 clean cells then cross the divergence guard for every variant, the ungated
filter included; those cells are left with 9, 7, and 21 scored steps, and their nine gated clean
runs are excluded from the 36 non-diverged runs, because a run that sees at most 21 scores
carries no evidence for the martingale. The martingale also sees 96 times fewer scores. The Huberized variants
coincide in every zero-fire cell, so the feedback-stride-96 spike gains are Huberization alone.

\begin{table}[t]
\centering
\caption{Non-overlapping-origin protocol, per backbone and stream, clean cells. Scored steps per run is what the martingale sees under stride 96. The divergence column gives the guard step where a
cell truncates, identically for every variant including the ungated filter. The last two columns give
the full gate's fire count for seeds 0, 1, and 2 under stride 96 and under delayed reveal at
stride 1 on the same cell. The stride-1 column pools to 45 of 45 runs firing at a median of 41
fires per run over those 45 runs.}
\label{app:tab:stride}
\footnotesize
\setlength{\tabcolsep}{5pt}
\begin{tabular}{llrrrr}
\toprule
backbone & stream & scored steps & divergence & fires, stride 96 & fires, stride 1 \\
\midrule
TiRex & ETTh2 & 134 & & 1, 1, 1 & 37, 37, 36 \\
Chronos-2 & ETTh2 & 134 & & 0, 0, 0 & 22, 22, 21 \\
TimesFM~2.5 & ETTh2 & 134 & & 1, 1, 1 & 37, 34, 35 \\
TiRex & ETTm1 & 9 & 942 & 0, 0, 0 & 175, 143, 172 \\
Chronos-2 & ETTm1 & 542 & & 0, 0, 0 & 156, 142, 149 \\
TimesFM~2.5 & ETTm1 & 542 & & 0, 0, 0 & 169, 170, 171 \\
TiRex & weather & 7 & 747 & 0, 0, 0 & 299, 285, 295 \\
Chronos-2 & weather & 21 & 2{,}073 & 0, 0, 0 & 289, 264, 238 \\
TimesFM~2.5 & weather & 410 & & 0, 0, 0 & 277, 285, 277 \\
TiRex & electricity & 204 & & 0, 0, 0 & 52, 57, 42 \\
Chronos-2 & electricity & 204 & & 0, 0, 0 & 37, 38, 38 \\
TimesFM~2.5 & electricity & 204 & & 0, 0, 0 & 39, 43, 41 \\
TiRex & traffic & 135 & & 0, 0, 0 & 31, 32, 33 \\
Chronos-2 & traffic & 135 & & 0, 0, 0 & 28, 23, 28 \\
TimesFM~2.5 & traffic & 135 & & 0, 1, 0 & 25, 23, 23 \\
\bottomrule
\end{tabular}
\end{table}

Table~\ref{app:tab:thin} is the offline thinning of the saved shadow scores. For each stride
$k$ and each offset $o$ in $0$ to $k-1$, the run's scored steps are thinned to every $k$th
starting at $o$, and both constructions replay on that stream with the run's own window size and
$\alpha$. A consecutive block of the same length starting at the same offset replays alongside,
with the same random seed, so each thinned slice is compared with equally many overlapping scores
from the same run. Stride 1 at offset 0 reproduces the saved deployed fires exactly. The unit
is the run-offset pair, 45 runs times $k$ offsets per cell, and offsets of one run share its
stream, so the fractions are descriptive, and this table is the record under tie-break stream 0
(Table~\ref{app:tab:burst} averages three streams). At stride 96 with the scale learned, the
four streams other than weather cross in 0.005 to 0.014 of pairs under either protocol, and weather in 0.132
under immediate reveal and 0.191 under delayed reveal. With the scale held, electricity crosses
in 0.152 and 0.154, weather in 0.167 and 0.234, and the other three streams in 0.035 to 0.059.

\begin{table}[t]
\centering
\caption{Fraction of run-offset pairs in which the classical no-reset wealth process crosses
$\nicefrac{1}{\alpha}$ on the thinned score stream, against the same fraction for a consecutive
block of the same length from the same run, pooled over three backbones, five streams, and three
seeds, under tie-break stream 0. Under the premise, Ville's inequality bounds the crossing probability of one run chosen in advance by 0.05. The pairs are correlated, so the fractions are descriptive. The pairs column counts the 45 runs times $k$
offsets in each cell.}
\label{app:tab:thin}
\footnotesize
\setlength{\tabcolsep}{4pt}
\begin{tabular}{rccccr}
\toprule
& \multicolumn{2}{c}{scale learned} & \multicolumn{2}{c}{scale held} & \\
\cmidrule(lr){2-3}\cmidrule(lr){4-5}
stride & immediate & delayed 96 & immediate & delayed 96 & pairs \\
\midrule
1 & 1.000 vs 1.000 & 1.000 vs 1.000 & 1.000 vs 1.000 & 1.000 vs 1.000 & 45 \\
2 & 1.000 vs 1.000 & 1.000 vs 1.000 & 1.000 vs 1.000 & 1.000 vs 1.000 & 90 \\
4 & 0.800 vs 1.000 & 0.933 vs 1.000 & 1.000 vs 1.000 & 1.000 vs 1.000 & 180 \\
8 & 0.489 vs 1.000 & 0.597 vs 1.000 & 0.981 vs 1.000 & 1.000 vs 1.000 & 360 \\
16 & 0.232 vs 0.825 & 0.268 vs 0.849 & 0.817 vs 0.832 & 0.896 vs 0.850 & 720 \\
32 & 0.194 vs 0.719 & 0.203 vs 0.801 & 0.511 vs 0.682 & 0.548 vs 0.745 & 1{,}440 \\
48 & 0.124 vs 0.728 & 0.145 vs 0.795 & 0.302 vs 0.671 & 0.325 vs 0.746 & 2{,}160 \\
96 & 0.032 vs 0.631 & 0.044 vs 0.673 & 0.087 vs 0.615 & 0.106 vs 0.659 & 4{,}320 \\
\bottomrule
\end{tabular}
\end{table}

Table~\ref{app:tab:burst} holds the score count fixed and varies only the clustering, answering
whether the contrast in Table~\ref{app:tab:thin} could come from the early placement of the
consecutive block. For every offset of each stride and three tie-break streams, the thinned
count $n$ is taken as bursts of $b$ consecutive scored origins spread evenly over the whole run,
with $b=1$ the thinned slice and $b \ge n$ the early block, and a consecutive block of $n$ scores
is replayed at 8 start positions spread from the beginning to the end of the run. At stride 96
the crossing fraction rises with $b$ on every variant and protocol from $b=1$ to $b=16$ and falls at
$b=32$ on three of the four cells, and consecutive blocks placed late in the run cross as often
as early ones. The across-stream standard deviation is at most 0.015 in every cell and 0.006 at
stride 96 with the scale learned, and the $b=1$ column reproduces the all-offset rows of
Table~\ref{app:tab:thin} to within 0.007, the largest gap 0.268 against 0.262 with the scale
learned at stride 16 under delayed reveal. Taking each of the 15 backbone-stream pairs as one
unit of 288 pairs, bursts of 16 cross more often than single origins in 15 of 15 units under
every variant and protocol, by at least 0.41, and leaving any one stream out keeps the pooled
fraction at $b=16$ between 0.655 and 0.751 and at $b=1$ between 0.007 and 0.127, the low end
when weather is left out. Bursts change which origins are selected along with their clustering,
and each start position draws its own random ranks, so the analysis is observational.

\begin{table}[t]
\centering
\caption{Burst dose-response and temporal-coverage analysis on the saved shadow scores, fraction
of run-offset pairs in which the classical no-reset wealth process crosses, 45 runs times every offset of
the stride per cell, 720, 1{,}440, and 4{,}320 pairs at strides 16, 32, and 96, pooled over three
backbones and five streams and averaged over three tie-break streams. Bursts of $b$ consecutive
origins carry the thinned slice's own score count and are spread over the whole run. The last
column gives the range over 8 start positions and three streams of a consecutive block of that
count. Fractions are descriptive.}
\label{app:tab:burst}
\footnotesize
\setlength{\tabcolsep}{3.5pt}
\begin{tabular}{llrrrrrrrc}
\toprule
scale & stride & $b=1$ & $b=2$ & $b=4$ & $b=8$ & $b=16$ & $b=32$ & early block & blocks by start \\
\midrule
learned, immediate & 16 & 0.230 & 0.338 & 0.473 & 0.596 & 0.749 & 0.883 & 0.827 & 0.60 to 1.00 \\
learned, immediate & 32 & 0.193 & 0.249 & 0.316 & 0.523 & 0.712 & 0.690 & 0.721 & 0.60 to 1.00 \\
learned, immediate & 96 & 0.032 & 0.164 & 0.372 & 0.544 & 0.699 & 0.589 & 0.632 & 0.36 to 0.80 \\
learned, delayed 96 & 16 & 0.262 & 0.417 & 0.568 & 0.637 & 0.882 & 0.974 & 0.849 & 0.73 to 1.00 \\
learned, delayed 96 & 32 & 0.203 & 0.297 & 0.383 & 0.745 & 0.923 & 0.799 & 0.800 & 0.80 to 1.00 \\
learned, delayed 96 & 96 & 0.044 & 0.185 & 0.359 & 0.652 & 0.707 & 0.774 & 0.674 & 0.42 to 1.00 \\
held, immediate & 16 & 0.821 & 0.868 & 0.838 & 0.812 & 0.800 & 0.918 & 0.831 & 0.60 to 1.00 \\
held, immediate & 32 & 0.514 & 0.547 & 0.543 & 0.548 & 0.756 & 0.782 & 0.684 & 0.47 to 1.00 \\
held, immediate & 96 & 0.089 & 0.242 & 0.443 & 0.571 & 0.702 & 0.675 & 0.617 & 0.27 to 0.96 \\
held, delayed 96 & 16 & 0.896 & 0.931 & 0.877 & 0.879 & 0.850 & 0.946 & 0.850 & 0.80 to 1.00 \\
held, delayed 96 & 32 & 0.546 & 0.581 & 0.627 & 0.738 & 0.881 & 0.907 & 0.744 & 0.67 to 1.00 \\
held, delayed 96 & 96 & 0.110 & 0.256 & 0.436 & 0.657 & 0.720 & 0.714 & 0.661 & 0.53 to 0.93 \\
\bottomrule
\end{tabular}
\end{table}

Figure~\ref{fig:acf} reports the real score streams' autocorrelation by lag against the
oracle reuse prediction of Proposition~\ref{prop:reuse}. A moving sum
sharing $96-k$ of its 96 terms has autocorrelation $(96-k)/96$ at lag $k$, zero at and beyond
the horizon. The saved scores of the learned-scale shadow variant under immediate reveal fall from 0.95 at lag 1
to 0.48, 0.26, and 0.12 at lags 24, 48, and 72, pooled over 45 runs. They reach zero between lag
66 on ETTh2 and lag 116 on weather, at 89 pooled, faster than the oracle's linear decline on
three of the five streams. Beyond the horizon they stay within $-0.17$ and $+0.05$, apart from
electricity's weekly peak of $0.29$ at lag 168, over 9 runs per stream. The
pooled decline over lags 90 to 95 is four times that over lags 97 to 102, 0.008 against 0.002
per lag, and a geometric decay matched at lag 1 would give 0.29 and 0.085 at lags 24 and 48.
The streams with a positive floor just beyond the horizon, weather at 0.05 and traffic at 0.03
with the scale learned and electricity at 0.05 to 0.11 with it held, are the residual streams of
Table~\ref{app:tab:thin}. The delayed-reveal streams carry the same labels shifted and give the
same curves to two decimals.

Table~\ref{tab:overlap} reports the reuse intervention of Section~\ref{sec:e6}. Conditional on a timestamp-indexed mean path $g$, both settings share the forecast stream and the marginal law of the label at origin $i$ and horizon $j$,
\begin{equation}
f_i[j] = g(i+j+1),\qquad y_i = f_i + \nu_i,\qquad \nu_i \sim \mathcal{N}(0, \sigma^2 I_{96}),\qquad \sigma = 0.05.
\label{eq:condnull}
\end{equation}
The fresh-label setting draws $\nu_i$ independently at every origin. The reused-label setting uses $\nu_i[j] = \eta_{i+j}$ from one white path $\eta$, so $y_i[j] = y_{i+1}[j-1]$ exactly. Each seed
draws the initial state, the pattern phase, the warm-up noise, $\eta$, and the independent noise
in that order, and one call returns both settings, so the two share everything but the reuse. The oracle pair scores at the truth, $e_i = y_i - f_i$ and $S = \sigma^2 I$ in Algorithm~\ref{alg:gate}, so fresh-label scores are i.i.d.\ $\chi^2_{96}$. The first randomized
$p$-value of an empty calibration set is 0.5 by construction, which only makes the wealth
conservative at that step. Ville's bound requires tie-break randomness independent of the labels.
The finite implementation uses three deterministic pseudorandom streams. Stream 0 reuses the run
seed, and streams 1 and 2 use offset seeds. All three agree in every variant. This agreement checks
reproducibility only and does not prove probabilistic independence. The learning pair runs the deployed stepper in shadow with all three
hyperparameters learning, a 2{,}000-step warm-up, and the noise scale initialized at $\sigma^2$,
the two transient remedies of Section~\ref{sec:isolation} applied together. Each variant runs 60
runs of 20{,}000 test steps. The offline replay reproduces the learning variants' deployed
$p$-values and fire sequences exactly in 60 of 60 runs per variant, and the oracle variants' deployed
values come from that replay.

The classical no-reset wealth process is replayed under three tie-break streams, and each of the
96 stride-96 offsets, 208 or 209 scores each, is matched against a consecutive block of the same
count from the same start step under the same seed. That block sits at the start of the run, so
for the learning pair the contrast may carry a time-location component. The oracle scores are
stationary, so there it cannot. For the learning pair thinning removes the direct label reuse but not dependence carried through the adapter state, so its thinned result is an
empirical crossing-rate restoration with no bound behind it. The oracle scorer holds the truth
and not the adapter's warm-up state, because a scorer frozen at warm-up keeps a covariance that
is not small, so $H_i P H_i^\top$ swings with the pattern and modulates the score scale at the
pattern's period even under independent labels, which would confound the reuse contrast. Under the
independent variants the deployed $p$-stream is uniform, median Kolmogorov-Smirnov~\citep{massey1951ks} $p$ of 1.00 and
0.999, and Ljung-Box rejects at $p<0.01$ in 0.03 of oracle runs and 0.48 of learning runs, so
the learning adapter leaves detectable serial dependence without crossings. Under the reused
variants it rejects in 60 of 60 with a lag-one autocorrelation of 0.98.

The oracle pair admits a closed form. With $\xi_t = \eta_t^2/\sigma^2$, the reused-label score at origin $i$ is
\begin{equation}
s_i = \sum_{j=0}^{m-1}\xi_{i+j},
\label{eq:movingsum}
\end{equation}
a moving sum of $m$ terms sharing $m-1$ with its successor, and the fresh-label score is the same sum over a fresh row.

\begin{proof}[Proof of Proposition~\ref{prop:reuse}, stated in Section~\ref{sec:e6}]
The fresh-label scores are one function of disjoint rows of an i.i.d.\ array, hence
i.i.d. For (ii), the $\xi_t$ are i.i.d.\ with variance $v \in (0, \infty)$; $s_i$ and
$s_{i+k}$ share the $m-k$ terms $\xi_{i+k}, \ldots, \xi_{i+m-1}$ and are otherwise sums of
disjoint terms, so $\operatorname{cov}(s_i, s_{i+k}) = (m-k)v$ for $k < m$ and $0$ for
$k \ge m$, while $\operatorname{var}(s_i) = mv$. For (i), exchangeability would give
$(s_1, s_2)$ and $(s_1, s_{m+1})$ the same joint law. The pair $(s_1, s_{m+1})$ is independent,
being functions of the disjoint blocks $\xi_1, \ldots, \xi_m$ and $\xi_{m+1}, \ldots, \xi_{2m}$.
Write $s_1 = A + B$ and $s_2 = B + C$ with $A = \xi_1$, $B = \sum_{t=2}^{m}\xi_t$, and
$C = \xi_{m+1}$ independent. Were $s_1$ and $s_2$ independent, their characteristic functions
would satisfy $\phi_A(u)\,\phi_B(u+w)\,\phi_C(w) = \phi_A(u)\,\phi_B(u)\,\phi_B(w)\,\phi_C(w)$
for all $u, w$. On a neighbourhood of the origin where $\phi_A$ and $\phi_C$ do not vanish
this gives $\phi_B(u+w) = \phi_B(u)\phi_B(w)$, and $w = -u$ yields $|\phi_B(u)| = 1$ on that
neighbourhood. Modulus one at $u$ makes $e^{iuB}$ almost surely constant, so $B$ lies almost
surely in a lattice of spacing $2\pi/u$. Two incommensurable $u$ in the neighbourhood force $B$
to be a point mass. But $B$ is a sum of $m-1 \ge 1$ i.i.d.\ copies of $\xi_1$, and
$|\phi_B| = |\phi_{\xi_1}|^{m-1} = 1$ would make $\xi_1$ itself degenerate, against the
assumption. So $(s_1, s_2)$ is dependent and the sequence is not exchangeable.
\end{proof}

The saved oracle scores agree with (ii). The median lag-one autocorrelation over 60 runs is
0.989, 0.958, and 0.995 at $m = 96$, 24, and 192 against $(m-1)/m = 0.990$, 0.958, and 0.995,
and at lag $m$ it is $-0.003$, $+0.006$, and $-0.004$. The Student-t family has no fourth
moment, so (ii) does not apply there and its 0.990 and $-0.011$ are descriptive. Under the
Gaussian AR(1) path the terms are dependent with finite covariances, $\operatorname{cov}(s_1, s_2) > 0$ while
$\operatorname{cov}(s_1, s_{1+k}) \to 0$ as $k \to \infty$, and exchangeability would equate
every pair's covariance, so (i) holds there too. Under the heteroscedastic family neither variant
is identically distributed. The proposition removes the premise and says nothing about how
often the conformal wealth process then crosses, which Table~\ref{tab:overlap} measures. With the adapter learning, the scores are no longer a moving sum, and the learning pair remains
a measurement.

The thinned variant of that table has a stated sufficient condition.

\begin{corollary}\label{cor:thin}
Under the assumptions of Proposition~\ref{prop:reuse}, the thinned scores $t_k = s_{1+(k-1)m}$,
$k \ge 1$, are i.i.d. Let the $k$-th smoothed conformal $p$-value rank $t_k$ against itself and
every earlier thinned score,
$p_k = \big(\#\{j \le k : t_j > t_k\} + u_k\,\#\{j \le k : t_j = t_k\}\big)/k$, with tie-break variables $u_k$ i.i.d.\ uniform on $[0,1]$ and independent of $(\eta_t)$. Then the
$p_k$ are i.i.d.\ uniform on $[0,1]$. For the grid
$\varepsilon_1,\ldots,\varepsilon_K \in (0,1)$ of Algorithm~\ref{alg:gate}, replace the first
$p$-value by $\widetilde p_1=1/2$ and set $\widetilde p_k=p_k$ for $k\ge2$. Then
\begin{equation}
E_n^{(r)}=\prod_{j=1}^{n}\varepsilon_r\widetilde p_j^{\varepsilon_r-1},
\qquad
M_n=\frac{1}{K}\sum_{r=1}^{K}E_n^{(r)}
\label{eq:mixture-wealth}
\end{equation}
is a nonnegative supermartingale with $M_0=1$, and
$\Pr\big(\sup_n M_n\ge\nicefrac{1}{\alpha}\big)\le\alpha$.
\end{corollary}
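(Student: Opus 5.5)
The plan has three stages. First I show the thinned scores are i.i.d. Next I show their smoothed conformal $p$-values are i.i.d.\ uniform. Finally I read off the supermartingale property and apply Ville's inequality.

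\textbf{Thinned scores.} By \eqref{eq:movingsum}, $t_k=s_{1+(k-1)m}=\sum_{j=0}^{m-1}\xi_{1+(k-1)m+j}$ is a fixed measurable function of the block $(\xi_{(k-1)m+1},\ldots,\xi_{km})$. Distinct $k$ give disjoint blocks of an i.i.d.\ sequence, so the $t_k$ are independent. They are identically distributed because the function is the same for every $k$ and the blocks have equal law. This is the argument already used for the fresh-label scores in the proof of Proposition~\ref{prop:reuse}. The family $(u_k)$ is independent of $(\eta_t)$, so $(t_k,u_k)_{k\ge1}$ is an i.i.d.\ sequence of pairs.

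\textbf{Uniformity of the $p$-values.} This is the step I expect to be the main obstacle, because it must handle ties and joint independence, not just the marginal law. I will use the standard validity theorem for smoothed conformal $p$-values under exchangeability \citep{vovk2003testing,vovk2022algorithmic}, and I would verify it by backward conditioning as follows.
\begin{itemize}[leftmargin=*,itemsep=1pt,topsep=2pt,parsep=0pt]
\item Let $\mathcal{G}_k$ be generated by the bag (multiset) $\{t_1,\ldots,t_k\}$ together with $t_{k+1},t_{k+2},\ldots$ and $u_{k+1},u_{k+2},\ldots$.
\item By exchangeability of $(t_1,\ldots,t_k)$, conditional on $\mathcal{G}_k$ the value $t_k$ is a uniformly random element of the bag.
\item The randomized rank $p_k$ spreads each tie class over an interval of length equal to its multiplicity divided by $k$. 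Together with an independent uniform $u_k$, this makes $p_k$ conditionally uniform on $[0,1]$ given $\mathcal{G}_k$.
\item The earlier values $p_1,\ldots,p_{k-1}$ are $\mathcal{G}_{k-1}$-measurable, and $\mathcal{G}_{k-1}\supseteq\mathcal{G}_k$ together with the information needed for $p_k$. Hence $p_k$ is uniform and independent of $\sigma(p_{k+1},p_{k+2},\ldots)$.
\item Iterating downward from any finite horizon $n$ gives that $p_1,\ldots,p_n$ are i.i.d.\ uniform.
\end{itemize}
Note that $\#\{j\le k:t_j=t_k\}\ge1$ because $t_k$ counts itself, so the formula in the statement is the usual smoothed $p$-value.

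\textbf{Supermartingale and Ville.} Let $\mathcal{F}_n=\sigma(p_1,\ldots,p_n)$. For $j\ge2$, each factor $\varepsilon_r\widetilde p_j^{\,\varepsilon_r-1}$ is independent of $\mathcal{F}_{j-1}$, and its mean is $\int_0^1\varepsilon_r p^{\varepsilon_r-1}\,dp=1$. So $E^{(r)}_n$ is a martingale from $n=1$ onward. The first factor is the deterministic number $\varepsilon_r2^{1-\varepsilon_r}$. Its logarithm has derivative $1/\varepsilon-\log 2>0$ on $(0,1)$ and equals $1$ at $\varepsilon=1$, so the factor is at most $1$. Hence $\mathbb{E}[E^{(r)}_1]\le 1=E^{(r)}_0$, and each $E^{(r)}$ is a nonnegative supermartingale started at $1$. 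A convex combination of such processes is again one, so $M_n$ in \eqref{eq:mixture-wealth} is a nonnegative supermartingale with $M_0=1$. Ville's inequality \citep{ville1939etude} then gives $\Pr(\sup_nM_n\ge1/\alpha)\le\alpha$.
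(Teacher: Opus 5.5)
Your proof follows the paper's argument step for step. Disjoint blocks make the thinned scores i.i.d., and the standard conformal validity theorem turns them into i.i.d.\ uniform $p$-values. Every factor after the first then has conditional mean $\int_0^1\varepsilon_r p^{\varepsilon_r-1}\,dp=1$, the deterministic first factor $\varepsilon_r2^{1-\varepsilon_r}$ is at most one, and averaging over the grid plus Ville's inequality finishes.

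The paper simply cites the validity theorem. Your backward-conditioning check of it is the right idea, but your fourth bullet has the measurability reversed. The earlier value $p_{k-1}$ is conditionally uniform given $\mathcal{G}_{k-1}$, so it cannot be $\mathcal{G}_{k-1}$-measurable. What is true, and what your conclusion actually uses, is that the later values $p_{k+1},p_{k+2},\ldots$ are $\mathcal{G}_k$-measurable. This holds because the bag of $t_1,\ldots,t_{k+1}$ is the bag of $t_1,\ldots,t_k$ with $t_{k+1}$ added, and both $t_{k+1}$ and $u_{k+1}$ belong to $\mathcal{G}_k$. Conditional uniformity of $p_k$ given $\mathcal{G}_k$ then makes $p_k$ uniform and independent of $(p_{k+1},p_{k+2},\ldots)$. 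Conditioning first on $\mathcal{G}_1$, then on $\mathcal{G}_2$, and so on factors $\Pr(p_1\in A_1,\ldots,p_n\in A_n)$ into the product of the Lebesgue measures of the $A_k$.

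There is also a small wording slip in your last paragraph: it is the factor $\varepsilon_r2^{1-\varepsilon_r}$, not its logarithm, that equals $1$ at $\varepsilon_r=1$.
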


\begin{proof}
The score $t_k$ is a function of the block $\xi_{1+(k-1)m}, \ldots, \xi_{km}$ alone, the blocks
are disjoint, and the $\xi_t$ are i.i.d., so the thinned scores are independent with a common
law. An i.i.d.\ sequence is exchangeable, and on an exchangeable sequence with tie-break variables
i.i.d.\ uniform and independent of it the smoothed conformal $p$-values are i.i.d.\ uniform
\citep{vovk2003testing,vovk2022algorithmic}. For each fixed grid point $\varepsilon_r$ and every
$n\ge2$,
\[
\mathbb{E}\!\left[\varepsilon_r p_n^{\varepsilon_r-1}\mid
p_1,\ldots,p_{n-1}\right]
=\int_0^1\varepsilon_r p^{\varepsilon_r-1}\,dp=1.
\]
At the deterministic first step,
$E_1^{(r)}=\varepsilon_r2^{1-\varepsilon_r}\le1=E_0^{(r)}$ because this expression increases
to one on $(0,1]$. Hence every $(E_n^{(r)})_{n\ge0}$ is a nonnegative supermartingale,
conservative at the first step and with conditionally mean-one increments thereafter. Their fixed
average $(M_n)$ is also a nonnegative supermartingale with $M_0=1$. Ville's inequality
\citep{ville1939etude} therefore gives the stated bound.
\end{proof}

The corollary covers the scorer held at the truth under ideal independent tie-break randomness.
The seeded streams in the finite experiment are reproducible implementation approximations. With the adapter learning, a thinned
score still depends on every earlier label through the state. The learning variant's low observed
crossing frequency on the Gaussian null is a measurement and is not a consequence of the corollary.
The single-horizon frozen residuals of Table~\ref{app:tab:horizons} share no label and still
cross, so the corollary reaches none of the real streams.

Table~\ref{app:tab:overlapfam} repeats the intervention under three further noise families at
horizon 96 and under Gaussian noise at horizons 24 and 192, each a campaign of 60 runs per variant
with the thinning stride equal to the horizon. Student-t noise has 3 degrees of freedom scaled
to standard deviation $\sigma$. The autoregressive family runs an AR(1) path with coefficient
0.5 along timestamps, so the correlation inside a label is the same in both variants and the oracle
scorer's $S=\sigma^2 I$ ignores it. The heteroscedastic family scales the noise by
$1+0.5\sin(2\pi\tau/5{,}000)$ along timestamps, so the fixed-scale oracle's independent variant is
not exchangeable and crosses in 60 of 60 by construction, while the learning adapter tracks the
scale with a lag and its independent variant crosses in 23 to 25 of 60 by tie-break stream, 21 of
60 under all three. Heavy tails alone lift the learning adapter's independent variant to 11 of 60 by
stream and 9 of 60 under all three, its $p$-stream serially dependent in 60 of 60 runs. Under Student-t noise 4 of the 60 reused-label learning runs first ended at the
divergence guard, cumulative MSE above 0.01, after 201 to 2{,}775 scores, all four having
crossed at steps 17 to 144. No other variant of any family tripped it. The truncated prefixes therefore
retain all 58 crossings. The table reports a guard-disabled completion as a sensitivity diagnostic.
It gives the same 58 crossings, a thinned fraction of 0.004, and a consecutive fraction of 0.841,
compared with 0.783 on the truncated prefixes. The per-seed streams used by the four variants agree in
60 of 60 under every family and reproduce on the run platform. The Student-t sampler's gamma
rejection step is platform dependent, so 4 of 60 Student-t seeds regenerate with different draws
on another machine; the table uses the retained matched draws.

\begin{table}[t]
\centering
\caption{The reuse intervention under further noise families and horizons, 60 runs of 20{,}000
steps per variant. Crossings are runs that ever cross $\nicefrac{1}{\alpha}$ under each of three
tie-break streams (expected count at most 3 of 60 under Ville's bound), independent then reused labels. Thinned and
consecutive fractions pool 60 runs times $m$ offsets, 1{,}440, 5{,}760, and
11{,}520 pairs at horizons 24, 96, and 192, independent then reused, and are descriptive. The
heteroscedastic family is not a null for the fixed-scale oracle.}
\label{app:tab:overlapfam}
\footnotesize
\setlength{\tabcolsep}{3.5pt}
\begin{tabular}{llrcccc}
\toprule
noise & scorer & $m$ & crossings, independent & crossings, reused & thinned & consecutive \\
\midrule
Gaussian & truth & 96 & 2, 3, 3 & 59, 58, 59 & 0.013 / 0.009 & 0.011 / 0.735 \\
Gaussian & learning & 96 & 1, 2, 1 & 55, 55, 54 & 0.013 / 0.003 & 0.024 / 0.719 \\
Student-t & truth & 96 & 2, 1, 1 & 59, 59, 59 & 0.014 / 0.021 & 0.012 / 0.860 \\
Student-t & learning & 96 & 11, 11, 11 & 58, 58, 58 & 0.012 / 0.004 & 0.042 / 0.841 \\
AR(1) & truth & 96 & 2, 2, 3 & 59, 59, 59 & 0.014 / 0.014 & 0.010 / 0.750 \\
AR(1) & learning & 96 & 0, 0, 0 & 60, 60, 60 & 0.014 / 0.010 & 0.017 / 0.856 \\
heteroscedastic & truth & 96 & 60, 60, 60 & 60, 60, 60 & 0.400 / 0.374 & 0.977 / 0.899 \\
heteroscedastic & learning & 96 & 23, 25, 24 & 57, 57, 57 & 0.005 / 0.004 & 0.044 / 0.777 \\
Gaussian & truth & 24 & 0, 0, 0 & 58, 59, 58 & 0.016 / 0.004 & 0.007 / 0.854 \\
Gaussian & learning & 24 & 1, 1, 1 & 54, 53, 52 & 0.017 / 0.003 & 0.016 / 0.833 \\
Gaussian & truth & 192 & 1, 1, 1 & 58, 58, 58 & 0.011 / 0.018 & 0.012 / 0.614 \\
Gaussian & learning & 192 & 0, 2, 0 & 57, 57, 57 & 0.011 / 0.002 & 0.012 / 0.576 \\
\bottomrule
\end{tabular}
\end{table}

Table~\ref{app:tab:resample} is the real-stream intervention of Section~\ref{sec:e6}. For each
clean run of the learned-scale shadow variant the test block's frozen residual vectors $r_t = y_t - f_t$ are permuted across origins within blocks of $B$ consecutive origins by a uniform permutation
seeded from the run seed and $B$, and the label at origin $t$ becomes $f_t + r_{\pi(t)}$. Every
origin keeps its forecast, the multiset of residual vectors is the stream's own, and the
adapter, its warm-up, and the gate run unchanged on the reordered stream, so each block is a new cell with its own configuration and none is a replay. The permutation removes the
original order within a block, and with it the alignment between each forecast and its own
residual. The block size bounds how far residual regimes are mixed and sets the number of block
boundaries, so the treatment is not reuse alone and the block size is not a one-dimensional
dose.

The adapter's test-block MSE on the reordered stream is 0.83 to 1.59 times its
real-label value on electricity, 0.80 to 1.39 on traffic, 0.53 to 3.40 on ETTh2, 0.64 to 6.28
on ETTm1, and 0.04 to 383 on weather, median 1.06 over the 360 runs. The weather cells under
delayed reveal at $B$ of 192 and 512 sit at 4.2 to 383, medians 18 and 21, where the
reordered residual stream turns the delayed corrections harmful. The number of label timestamps shared by adjacent origins falls from 95 to 39.5 to 40.0, 16.6
to 17.2, 4.5 to 4.8, and 0.16 to 0.71 by stream at $B$ of 192, 512, 2{,}048, and the whole run,
each within 0.3 of its expectation. The score extract and the classical no-reset wealth replay reproduce the saved $p$-values and fires in 450 of 450 runs. Fourteen of the 360 permuted runs, all weather
under delayed reveal at $B$ of 192, 512, and 2{,}048 (6, 7, and 1), first ended at the
divergence guard at steps 18{,}537 to 18{,}550 of 39{,}378, the cumulative MSE having passed
100 times the seasonal-naive reference computed on the unpermuted series. Thirteen had crossed
at steps 661 to 903 before that. The truncated prefixes determine the reported crossing counts
and first-crossing medians. The remaining truncated run does not cross in completion. Guard-disabled
completions for the 27 delayed weather cells at those blocks provide full-length score paths as a
sensitivity diagnostic, with every run scoring all 39{,}282 steps. ETTm1
crosses in 0 of 9 at every block and ETTh2 in at most 3 of 9. Electricity holds 6 of 9 on two
backbones at every finite block, at steps 2{,}196 to 6{,}508 at $B=2{,}048$, and 1 of 9 with
the whole run permuted. Traffic crosses in 9 of 9 at $B$ of 192 and 512, 3 of 9 at 2{,}048, and
0 of 9 with the whole run permuted. Weather crosses in 9 of 9, 9 of 9, 0 and 1 of 9, and 4 and 7 of 9.
The whole-run remainder sits at steps 14 to 307, at the start of the run.

The single-horizon intervention of Section~\ref{sec:e6} records, on every clean run of the
learned-scale shadow variant, the per-horizon terms $e_{t,h}^2/S_{t,hh}$ of Algorithm~\ref{alg:gate} at all 96 horizons, in a rerun whose pooled score reproduces the saved stream exactly in 90 of 90
runs, so the single-horizon streams are decompositions of the saved stream and not a second
campaign. Each is replayed through both constructions with the run's own seed, window, and
level. Table~\ref{app:tab:coord} gives three horizons in full and Table~\ref{app:tab:horizons}
the distribution over all 96. The pooled score crosses in 45 of 45
under each protocol at a median peak log-wealth of 47.6 and 49.9. Horizon 1 crosses in 27 of 45
under each protocol, in 0 of 9 on electricity and ETTh2 and 9 of 9 on ETTm1, traffic, and
weather, at a median peak of 9.8 and 9.6. Horizon 48 crosses in 44 of 45 under each, 8 of 9 on
traffic and 9 of 9 elsewhere, and horizon 96 in 36 and 39 of 45, 3 of 9 on traffic under both
protocols and 6 then 9 of 9 on ETTh2. The deployed gate fires in 10 and 16 of 45 runs at
horizon 1, 44 and 44 at horizon 48, and 31 and 36 at horizon 96, at a median of 0, 3, and 2 to 3
fires per run against 84 and 85 on the pooled score.

Over the 96 horizons the classical no-reset wealth process crosses in 21 to 90 of 90 runs, median 72 and
6{,}319 of 8{,}640 in all, 3{,}079 of 4{,}320 under immediate reveal and 3{,}240 under delayed,
with one horizon, 49, at 90 of 90 and 23 horizons at or below the 54 of horizon 1. Horizons 2 to
8 cross least, in 195 of 630, horizons 9 to 24 in 697 of 1{,}440, and horizons 25 to 96 in
5{,}373 of 6{,}480. The median peak log-wealth by horizon runs from 0.3 at horizon 3 to 19.3 at
horizon 79, the median first crossing from 1{,}070 steps at horizon 1 to 148 at horizon 96, and
the deployed gate fires in 21 to 88 of 90 runs by horizon at a median of 0 to 5 fires per
run. Weather crosses at every horizon in 18 of 18, ETTm1 at every horizon in at least 3,
electricity at none of horizons 1 to 7 and nearly all beyond, ETTh2 at none of horizons 1 to 6
and 15 to 22 and below 18 at 64 horizons, and traffic below 18 at 94 horizons and at 0 on 32 of
them. A nominal Ljung--Box diagnostic rejects every one of the 8{,}640 single-horizon
no-reset $p$-value streams, crossing or not, so dependence is visible at every horizon on every
stream. A nominal Kolmogorov--Smirnov marginal-uniformity diagnostic does not reject 1{,}084 of
them; neither diagnostic is calibrated for inference on these dependent real-stream sequences.

A single-horizon score is $\chi^2_1$ under the scorer's own null where the pooled score is
$\chi^2_{96}$, and the conformal $p$-value is a rank, so the marginal law leaves the null
$p$-stream uniform for either score and is not what separates them. Whether the aggregation of
one term in place of a 96-term sum separates them is answered by two analyses on the same
saved terms, which hold the one-term statistic fixed and vary the reuse
(Table~\ref{app:tab:hcontrols}). The diagonal stream reads at each origin the horizon that
lands on the last timestamp of its block of 96 origins, so all 96 read one label timestamp, the
most shared label a one-term stream can carry. It crosses in 90 of 90 runs and fires in 90 of
90, at a median peak log-wealth of 98.0 against the pooled score's 48.9 on the same runs, 79.1
under immediate reveal and 110.4 under delayed, at a median first crossing of 121 steps and 80
fires per run. The random stream draws one horizon uniformly at each origin, so adjacent origins
share a label entry with probability near $\nicefrac{1}{96}$. Over three draw streams per run it
crosses in 181 of 270 and fires in 123, at a median peak of 6.6, 5.9 under immediate and 7.7
under delayed reveal, and a median first crossing of 640 steps. By stream it crosses in 54 of 54
on weather at a median peak of 79.5, 53 of 54 on ETTm1 at 19.9, 43 of 54 on electricity at 5.3,
27 of 54 on ETTh2 at 3.1, and 4 of 54 on traffic at 0.1. Thinning a single term to origins 96
apart, over eight offsets, crosses in 0.037, 0.075, and 0.035 of the 720 run-offset pairs at
horizons 1, 48, and 96, against 0.032 and 0.044 for the thinned pooled score, with five
stream-horizon cells above the level, ETTh2 at horizon 48 at 35 of 144, weather at horizons 1,
48, and 96 at 23, 12, and 15, and electricity at horizon 96 at 8. Neither the diagonal nor any random
stream passes the nominal Ljung--Box diagnostic. Restoring the shared label to a one-term stream restores every
crossing at twice the pooled wealth, and removing it leaves two thirds of the streams crossing
at a fifteenth of that wealth, so reuse is the amplifier and the 96-term aggregation is not
necessary for the separation, and on traffic the shared label is nearly the whole of the
crossing.

Two further analyses complete the grid of one-term against 96-term statistics with and without a
shared label (Table~\ref{app:tab:hcontrols2}). The antidiagonal stream reverses the diagonal's
horizon cycle, so a label is read by at most two origins 48 apart where the diagonal's are 96 apart, and crosses in
58 of 90 at a median peak of 4.9 against the diagonal's 90 of 90 at 98.0, in 0 of 18 on traffic
and 18 of 18 on ETTm1 and weather, so the concentration of reuse and not the cycle carries the
diagonal's wealth. The target-sum stream keeps the deployed 96-term sum but groups the saved terms by the timestamp of their target, so each label enters one statistic and consecutive
statistics share none, and crosses in 75 of 90 at 12.4 against the pooled score's 90 of 90 at
48.9 on the same runs, 47.6 and 49.9 by reveal, so removing the shared label from the aggregated statistic lowers its wealth about
fourfold and leaves five sixths of the streams crossing. Read across the grid, the aggregation
without a shared label raises the one-term wealth from 4.9 to 12.4, and the shared label raises
it from 4.9 to 98.0 in the one-term statistic and from 12.4 to 47.6 in the aggregated one. The
corners differ in count and in terms per statistic, so the grid ranks the two factors and does not
decompose them.

\begin{table}[t]
\centering
\caption{Single-horizon crossings over all 96 horizons, learned-scale shadow variant, 18 runs per
stream and horizon (three backbones, three seeds, both protocols). Entries count run-horizon
pairs within the horizon band in which the classical no-reset wealth process ever crosses
$\nicefrac{1}{\alpha}$, out of the pairs in the band; the bound's expected count is 5 percent of
each denominator. Silent horizons counts the horizons the wealth process never crosses on the stream.}
\label{app:tab:horizons}
\scriptsize
\setlength{\tabcolsep}{3pt}
\begin{tabular}{lcccccccc}
\toprule
stream & horizon 1 & 2 to 8 & 9 to 24 & 25 to 48 & 49 to 72 & 73 to 96 & all & silent horizons \\
\midrule
electricity & 0/18 & 1/126 & 279/288 & 432/432 & 427/432 & 412/432 & 1{,}551/1{,}728 & 7 \\
ETTh2 & 0/18 & 2/126 & 33/288 & 391/432 & 271/432 & 378/432 & 1{,}075/1{,}728 & 14 \\
ETTm1 & 18/18 & 54/126 & 84/288 & 426/432 & 432/432 & 432/432 & 1{,}446/1{,}728 & 0 \\
traffic & 18/18 & 12/126 & 13/288 & 195/432 & 149/432 & 132/432 & 519/1{,}728 & 32 \\
weather & 18/18 & 126/126 & 288/288 & 432/432 & 432/432 & 432/432 & 1{,}728/1{,}728 & 0 \\
all & 54/90 & 195/630 & 697/1{,}440 & 1{,}876/2{,}160 & 1{,}711/2{,}160 & 1{,}786/2{,}160 & 6{,}319/8{,}640 & \\
\bottomrule
\end{tabular}
\end{table}

\begin{table}[t]
\centering
\caption{Horizon analyses on the saved single-horizon terms, the one-term statistic held
fixed. Random draws one horizon uniformly at each origin, three draw streams per run (54 per
stream, 270 in all). Diagonal reads at each origin the horizon that lands on the last timestamp
of its block of 96 origins, so the block reads one label (18 per stream, 90 in all). Thinned
keeps one term every 96 origins over eight offsets (144 run-offset pairs per stream, 720 in
all); its fractions describe, offsets of one run sharing its stream. Crossings count streams in
which the classical no-reset wealth process ever crosses $\nicefrac{1}{\alpha}$, fires the streams in which the deployed gate fires at least once, and peak is the median over streams of the peak log-wealth.}
\label{app:tab:hcontrols}
\footnotesize
\setlength{\tabcolsep}{3pt}
\begin{tabular}{lccccccccc}
\toprule
& \multicolumn{3}{c}{random horizon} & \multicolumn{3}{c}{diagonal} & \multicolumn{3}{c}{thinned, horizons 1, 48, 96} \\
stream & crossings & fires & peak & crossings & fires & peak & \multicolumn{3}{c}{crossing fraction} \\
\midrule
electricity & 43/54 & 31 & 5.3 & 18/18 & 18 & 111.6 & 0.000 & 0.014 & 0.056 \\
ETTh2 & 27/54 & 2 & 3.1 & 18/18 & 18 & 367.9 & 0.000 & 0.243 & 0.007 \\
ETTm1 & 53/54 & 34 & 19.9 & 18/18 & 18 & 98.0 & 0.000 & 0.021 & 0.007 \\
traffic & 4/54 & 2 & 0.1 & 18/18 & 18 & 25.1 & 0.028 & 0.014 & 0.000 \\
weather & 54/54 & 54 & 79.5 & 18/18 & 18 & 36.1 & 0.160 & 0.083 & 0.104 \\
all & 181/270 & 123 & 6.6 & 90/90 & 90 & 98.0 & 0.037 & 0.075 & 0.035 \\
\bottomrule
\end{tabular}
\end{table}

\begin{table}[t]
\centering
\caption{Two horizon analyses completing the grid of Table~\ref{app:tab:hcontrols}, 18 per stream
(three backbones, three seeds, both protocols) and 90 in all. Antidiagonal reads at each origin the horizon that reverses the diagonal's cycle, so
a label is read by at most two origins 48 apart. Target sum keeps the 96-term sum but groups the
saved terms by the timestamp of their target, so each label enters one statistic. Columns as
in Table~\ref{app:tab:hcontrols}; the last two rows split the totals by feedback protocol.}
\label{app:tab:hcontrols2}
\footnotesize
\setlength{\tabcolsep}{4pt}
\begin{tabular}{lcccccc}
\toprule
& \multicolumn{3}{c}{antidiagonal} & \multicolumn{3}{c}{target sum} \\
\cmidrule(lr){2-4}\cmidrule(lr){5-7}
stream & crossings & fires & peak & crossings & fires & peak \\
\midrule
electricity & 10/18 & 4 & 3.3 & 16/18 & 15 & 4.7 \\
ETTh2 & 12/18 & 0 & 3.8 & 18/18 & 12 & 12.4 \\
ETTm1 & 18/18 & 6 & 26.7 & 18/18 & 18 & 57.6 \\
traffic & 0/18 & 0 & -0.2 & 5/18 & 5 & 1.6 \\
weather & 18/18 & 18 & 95.1 & 18/18 & 18 & 117.0 \\
all & 58/90 & 28 & 4.9 & 75/90 & 68 & 12.4 \\
immediate & 27/45 & 9 & 4.0 & 37/45 & 34 & 10.6 \\
delayed & 31/45 & 19 & 5.4 & 38/45 & 34 & 15.1 \\
\bottomrule
\end{tabular}
\end{table}

A no-learner analysis asks what the adapter contributes. Each frozen backbone's own
residuals on the same test block are scored with the same split and cached forecasts, the scale
of every horizon fixed at its training-block mean squared residual, and no state learned
anywhere, so nothing is reused across origins beyond what the overlapping horizons force in the
pooled sum. The pooled score and all 96 single-horizon terms are replayed through both
constructions with the run's own seed, window, and level over three backbones, five streams, and
three tie-break seeds, and Table~\ref{app:tab:streamonly} gives the counts. The classical no-reset
wealth process crosses in 45 of 45 runs on the pooled score, at a median first crossing of step 19
and a median peak log-wealth of 430, and in 4{,}320 of 4{,}320 single-horizon streams, at
median peaks of 33.7 to 148.1 by horizon against the adapter's 9.6 to 15.9 in
Table~\ref{app:tab:coord}. The deployed gate fires in 45 of 45 pooled runs, at a median of
131 fires per run against 84 and 85 for the adapter's pooled score, and in 15 to 44 of 45 runs by
horizon, 18 at horizon 1, 44 at 48, and 36 at 96. A nominal Ljung--Box diagnostic rejects all
4{,}320 single-horizon no-reset $p$-value streams, while a nominal Kolmogorov--Smirnov
marginal-uniformity diagnostic does not reject 310. These are descriptive diagnostics because
the real-stream sequences are dependent. The residuals' lag-1 autocorrelation, median over horizons, is 0.72 to 0.98 by
stream, and at horizon 1 it lies between $-0.13$ and $+0.31$, where the stream still crosses
in 45 of 45, late and low on electricity at a median first crossing of 6{,}880 and a median peak
of 3.7, and early on ETTm1 at 59. Where the adapter's horizon-1 stream crosses in 0 of 9 on
electricity and ETTh2, the raw stream under this fixed scale crosses in 9 of 9 on both; under
the matched trailing scale below it crosses in 0 of 9 there as well, so that contrast belongs to
the scale and not to the one-step correction. The dependence the gate meets is
in the forecast-error process before any adaptation. Whether the adapter's streams carry less of
it cannot be read from the fixed-scale peaks, because that scoring fixes each horizon's scale on
the training block while the adapter divides by a time-varying innovation covariance, and holding
the adapter's own scale moves its peak wealth by a factor of 2 to 7 on the same streams
(Table~\ref{app:tab:valid}).

A second scoring therefore divides each horizon's residual by a trailing window of the 250 most recently revealed squared residuals, the length of the deployed calibration window, seeded from the training block, the normalization nearest the adapter's own
(Table~\ref{app:tab:streamonly-trailing}). Under it the raw residuals cross in 39 of 45 pooled
runs at a median peak log-wealth of 59.1, against the adapter's 45 of 45 at 47.6 and 49.9, and in
2{,}991 of 4{,}320 single-horizon streams, 12 of 45 at horizon 1, 87 of 315 at horizons 2 to 8,
488 of 720 at 9 to 24, 802 of 1{,}080 at 25 to 48, and 1{,}602 of 2{,}160 at 49 to 96. The
trailing scale removes most of the crossing at horizon 1, where the fixed scale crossed in 45 of
45, and on weather it leaves the pooled raw stream crossing in 3 of 9 at a median peak of 0.4
where the adapter's pooled stream crosses in 9 of 9, so on the stream that carries the transient
the adapter adds dependence that the raw residuals show much more weakly under a matched scale. On the other
four streams the raw stream still crosses in 9 of 9 pooled runs and the two carry about the same.
At windows of 96 and 1{,}000 steps the pooled crossings are 36 and 40 of 45, the single-horizon
crossings 2{,}663 and 3{,}218 of 4{,}320, and the median peaks 33.7 and 125.1, with weather at 0
and 4 of 9, so the window moves the wealth and not the reading.

A refiner with causal value asks whether the premise fails only where adaptation loses. The ORCA
residual refiner, rerun under the delayed reveal with its forecast at
every origin saved, is scored on its own residuals with the split, cached forecasts, training-block
scale, and tie-break seeds of the no-learner analysis, so the two streams share every origin and differ
in the residual alone (Table~\ref{app:tab:refiner-self}). The classical no-reset wealth process crosses in 45 of
45 clean runs on the pooled score, 9 of 9 on every stream, at a median first crossing of step 57
against the frozen stream's 19 on the same cells, later on 25 runs, earlier on 15, and at the same step
on 5, with a median peak log-wealth of 397 against 430 and a median run-by-run ratio of 0.96. The
refiner lowers the peak on ETTm1, 284 against 400, and raises it on traffic and weather, 701 and 549
against 665 and 430. Under the trailing scale it crosses in 37 of 45 and in 1 of 9 on weather, where
the raw residuals cross in 39 and 3 of 9. Thinned to one origin in 96 at the run's drawn offset it
crosses in 24 of 45, at a median first crossing of 11{,}929 and a median peak of 3.1 against the
threshold of 3.0, where the frozen stream's single offset crosses in 26 of 45
(Appendix~\ref{app:power}). The deployed gate fires in 45 of 45 runs, at a median of 145 fires per run
against 131. The ridge refiner used as the linear foil
crosses in 45, 39, and 33 of 45 under the three scorings at a median peak ratio of 1.10. The premise
therefore fails on the score stream of an adapter that keeps its causal gain exactly as it fails on the
streams of the adapters that lose and on the stream with no learner, and in the pooled count the
refiner's correction neither removes the dependence the overlapping residual carries nor adds to it.

\begin{table}[t]
\centering
\caption{The no-learner analysis on the real streams. Each frozen backbone's own residuals on the
test block are scored with the scale of every horizon fixed at its training-block mean squared
residual, and the pooled score and all 96 single-horizon terms are replayed through both
constructions with the run's own seed, window, and level, 9 runs per stream (three backbones,
three tie-break seeds). Lag-1 is the residuals' lag-1 autocorrelation, median over horizons then
range over backbones, and at horizon 1 alone. Crossings count runs in which the classical no-reset
wealth process ever crosses $\nicefrac{1}{\alpha}$, pooled then over all 96 horizons (the bound's
expected count is at most 0.45 of 9 and 43.2 of 864), fires the runs in which the deployed gate fires at least once, pooled then at horizons 1, 48, and 96. First crossing and peak
log-wealth are medians over runs of the pooled score.}
\label{app:tab:streamonly}
\footnotesize
\setlength{\tabcolsep}{3pt}
\begin{tabular}{lcccccccc}
\toprule
stream & lag-1 & lag-1, horizon 1 & pooled & 96 horizons & fires & fires 1, 48, 96 & first crossing & peak \\
\midrule
electricity & 0.91 to 0.93 & $-0.07$ to 0.00 & 9/9 & 864/864 & 9/9 & 0, 9, 9 & 50 & 463 \\
ETTh2 & 0.97 to 0.98 & $-0.03$ to +0.16 & 9/9 & 864/864 & 9/9 & 0, 9, 9 & 8 & 375 \\
ETTm1 & 0.97 to 0.98 & +0.05 to +0.21 & 9/9 & 864/864 & 9/9 & 9, 9, 9 & 10 & 400 \\
traffic & 0.72 to 0.74 & $-0.13$ to $-0.02$ & 9/9 & 864/864 & 9/9 & 0, 8, 0 & 1{,}514 & 665 \\
weather & 0.94 & +0.03 to +0.31 & 9/9 & 864/864 & 9/9 & 9, 9, 9 & 690 & 430 \\
all &  &  & 45/45 & 4{,}320/4{,}320 & 45/45 & 18, 44, 36 & 19 & 430 \\
\bottomrule
\end{tabular}
\end{table}

\begin{table}[t]
\centering
\caption{The no-learner analysis under a trailing scale. Each horizon's residual is divided by the
mean of the 250 most recently revealed squared residuals at that horizon, seeded from the training
block and sliding causally, and the pooled score and all 96 single-horizon terms are replayed as in
Table~\ref{app:tab:streamonly}, 9 runs per stream. Crossings count runs in which the classical no-reset
wealth process ever crosses $\nicefrac{1}{\alpha}$, pooled, over all 96 horizons, and at horizons 1,
48, and 96. First crossing and peak log-wealth are medians over runs of the pooled score, the first
crossing over the runs that cross.}
\label{app:tab:streamonly-trailing}
\footnotesize
\setlength{\tabcolsep}{4pt}
\begin{tabular}{lccccc}
\toprule
stream & pooled & 96 horizons & horizons 1, 48, 96 & first crossing & peak \\
\midrule
electricity & 9/9 & 805/864 & 0, 9, 9 & 49 & 61.3 \\
ETTh2 & 9/9 & 712/864 & 0, 8, 9 & 8 & 63.1 \\
ETTm1 & 9/9 & 864/864 & 9, 9, 9 & 14 & 69.4 \\
traffic & 9/9 & 171/864 & 3, 7, 0 & 942 & 19.8 \\
weather & 3/9 & 439/864 & 0, 3, 9 & 21 & 0.4 \\
all & 39/45 & 2{,}991/4{,}320 & 12, 36, 36 & 17 & 59.1 \\
\bottomrule
\end{tabular}
\end{table}

\begin{table}[t]
\centering
\caption{The refiner's own score stream. The ORCA residual refiner is scored on its own residuals
under the delayed reveal with the split, cached forecasts,
training-block scale, and tie-break seeds of Table~\ref{app:tab:streamonly}, 9 runs per stream.
Crossings count runs in which the classical no-reset wealth process ever crosses $\nicefrac{1}{\alpha}$
under the fixed scale, under the trailing scale of Table~\ref{app:tab:streamonly-trailing}, and
under the fixed scale thinned to one origin in 96 at the run's drawn offset (the bound's expected
count is at most 0.45 of 9). First crossing and peak log-wealth are medians over runs of the
fixed-scale pooled score, the first crossing over the runs that cross. The last two columns give
the frozen stream's values on the same cells.}
\label{app:tab:refiner-self}
\footnotesize
\setlength{\tabcolsep}{4pt}
\begin{tabular}{lccccccc}
\toprule
stream & fixed & trailing & thinned & first crossing & peak & frozen first crossing & frozen peak \\
\midrule
electricity & 9/9 & 9/9 & 3/9 & 55 & 423 & 50 & 463 \\
ETTh2 & 9/9 & 9/9 & 1/9 & 8 & 360 & 8 & 375 \\
ETTm1 & 9/9 & 9/9 & 5/9 & 12 & 284 & 10 & 400 \\
traffic & 9/9 & 9/9 & 8/9 & 1{,}483 & 701 & 1{,}514 & 665 \\
weather & 9/9 & 1/9 & 7/9 & 700 & 549 & 690 & 430 \\
all & 45/45 & 37/45 & 24/45 & 57 & 397 & 19 & 430 \\
\bottomrule
\end{tabular}
\end{table}

\begin{table}[t]
\centering
\caption{Single-horizon scoring on the real streams, learned-scale shadow variant, 45 clean runs
per protocol (three backbones, five streams, three seeds), immediate then delayed reveal. Crossings count runs in which the classical no-reset
wealth process ever crosses $\nicefrac{1}{\alpha}$ (the bound's expected count is at most 2.25 of 45). First
crossing is the median step over crossing runs, peak the median over runs of the maximum
log-wealth, fires the runs in which the deployed gate fires at least once, and per run the
median fire count.}
\label{app:tab:coord}
\footnotesize
\setlength{\tabcolsep}{3.5pt}
\begin{tabular}{lccccc}
\toprule
score & crossings & first crossing & peak log-wealth & fires & per run \\
\midrule
pooled, 96 horizons & 45/45, 45/45 & 528, 138 & 47.6, 49.9 & 45/45, 45/45 & 84, 85 \\
horizon 1 & 27/45, 27/45 & 1{,}072, 1{,}013 & 9.8, 9.6 & 10/45, 16/45 & 0, 0 \\
horizon 48 & 44/45, 44/45 & 460.5, 167 & 12.2, 14.2 & 44/45, 44/45 & 3, 3 \\
horizon 96 & 36/45, 39/45 & 63.5, 153 & 11.4, 15.9 & 31/45, 36/45 & 2, 3 \\
\bottomrule
\end{tabular}
\end{table}

\begin{table}[t]
\centering
\caption{Residual-vector permutation on the real streams, learned-scale shadow variant, 45 clean
runs per cell (three backbones, five streams, three seeds), immediate then delayed reveal.
Shared entries is the realized mean number of label timestamps shared by adjacent origins,
range over streams. Crossings count runs in which the classical no-reset wealth process ever crosses
$\nicefrac{1}{\alpha}$ (the bound's expected count is at most 2.25 of 45). First crossing is the median step over
crossing runs. Thinned, consecutive, and burst fractions pool 45 runs times 96 offsets at stride
96 under tie-break stream 0 and are descriptive. Bursts are 16 consecutive origins at the
thinned count.}
\label{app:tab:resample}
\footnotesize
\setlength{\tabcolsep}{3.5pt}
\begin{tabular}{lcccccc}
\toprule
block $B$ & shared entries & crossings & first crossing & thinned & consecutive & bursts of 16 \\
\midrule
real labels & 95 & 45/45, 45/45 & 528, 138 & 0.032, 0.044 & 0.631, 0.673 & 0.701, 0.702 \\
192 & 39.5 to 40.0 & 24/45, 27/45 & 954, 1{,}186 & 0.019, 0.019 & 0.039, 0.039 & 0.054, 0.060 \\
512 & 16.6 to 17.2 & 24/45, 24/45 & 697, 780 & 0.018, 0.019 & 0.013, 0.014 & 0.090, 0.067 \\
2{,}048 & 4.5 to 4.8 & 9/45, 10/45 & 6{,}371, 6{,}398 & 0.012, 0.013 & 0.020, 0.016 & 0.018, 0.022 \\
whole run & 0.16 to 0.71 & 5/45, 8/45 & 215, 207 & 0.008, 0.007 & 0.060, 0.062 & 0.007, 0.010 \\
\bottomrule
\end{tabular}
\end{table}

\FloatBarrier
\section{Divergence counts by variant and protocol}
\label{app:divergence}

Three backbones, five streams, ten contamination settings, and three seeds give 450 runs per variant
per protocol for each of the six original variants. Divergence counts fall sharply when the loop is made
causal. Figure~\ref{fig:dumbbell} reports the six original variants. Table~\ref{tab:divergence} adds the
nine later variants. The full gate goes from 60 divergences under immediate reveal to 2 under delayed reveal.
Eight Kalman divergences remain under delayed reveal, two full gate, one ungated filter,
three Huber-only, and two fixed cutoff. Seven of the eight belong to one stream and setting, the weather stream at seed 1 under the 5 percent spike injector, with the divergence guard triggered
at exactly step 200. The eighth belongs to the Huber-only variant on ETTm1 under the packet-freeze injector,
triggered at step 8{,}982.

\textbf{Divergence-inclusive sensitivity.} For the 45 paired runs behind Figure~\ref{fig:spikeprofile},
an adverse outcome is either divergence of the contaminated run or, when both runs survive, a
larger final MSE than its clean twin. Under immediate and delayed reveal, respectively, adverse
outcomes occur in 42 and 45 runs for the ungated filter, 32 and 22 for Huber-only, 34 and 40 for
the full gate, and 29 and 18 for the fixed cutoff. These counts include contaminated-run
divergences of 0 and 1, 0 and 2, 6 and 2, and 0 and 2 in the same order; no clean-only run
diverges. This sensitivity assigns no numerical loss to a truncated run and does not replace the
separate divergence and survivor-only results.

Four later variants extend the table. The frozen-scale variant diverges in 60 of 450 runs under
immediate reveal and 2 of 450 under delayed reveal, matching the full gate on both, with the
immediate divergences falling on TiRex and Chronos-2 alone. Its clean weather cell finishes at a
median 84.8 on TiRex and 211.2 on Chronos-2, both on 3 of 3 diverged seeds, against the full
gate's 76.4 and 161.8 on the same cells. The capped-policy variant diverges in 1
of 450 immediate runs, at weather under the 10 percent spike injector on TiRex, and in none of
its clean cells. It cuts boosted steps per run from per-backbone medians of 1{,}262 to 1{,}584 to
medians of 950 to 1{,}200. On the four non-weather streams its clean test-block MSE lands between 23
percent below and 6 percent above the ungated filter's on every backbone. The variant that leaves the spike check live during recovery diverges in 47 of
450, 15 on TiRex, 30 on Chronos-2, and 2 on TimesFM~2.5, with 5 of its 9 clean weather runs among
them. Under delayed reveal both capped variants diverge in 2 of 450 runs, at the same two weather
cells as the full gate, and in none of their clean cells.

The three single-component variants complete the table. Under immediate reveal the soft-reset variant
diverges in 60 of 450 runs, the full gate's own count, the boost variant in 1, and the override variant
in 0. Under delayed reveal each diverges in 2 of 450, at the two weather cells above, and in
none of its clean cells. On the clean weather cell under delayed reveal, in the order TiRex, Chronos-2, and TimesFM~2.5,
the soft-reset variant finishes at 0.98 to 1.03, 8.43 to 8.50, and 1.15 to 2.35. The boost variant
finishes at 0.34 to 0.37, 5.62 to 5.69, and 0.69 to 0.72, and the override variant at 0.33 to 0.34,
2.65 to 2.90, and 0.65 to 0.72. The full gate finishes at 0.50 to 0.53, 8.32 to 9.14, and 1.18 to
1.23, and the ungated filter at 0.29, 5.75, and 0.52. On the four other clean streams the
soft-reset variant's test-block MSE sits 0.12, 0.19, 0.04, and 0.01 above the ungated
filter's on electricity, ETTh2, ETTm1, and traffic, means over nine runs, against 0.13, 0.20,
0.01, and 0.01 for the full gate, 0.15, 0.10, 0.00, and 0.00 for the boost variant, and 0.01, 0.02,
0.01, and $-0.01$ for the override variant. Tables~\ref{app:tab:transient-imm}
and~\ref{app:tab:transient-del} list the first three seeds of the clean weather cell for every variant
under each protocol.

Table~\ref{app:tab:seeddeltas} gives the paired per-seed differences from the ungated filter
under delayed reveal over twenty seeds per backbone, seven variants rerun on that cell alone under
the same configuration and the three variants added later over the same seeds. Over the twenty seeds
the soft-reset variant's mean excess over the ungated filter is 0.61, 3.57, and 1.37 on TiRex,
Chronos-2, and TimesFM~2.5 against the full gate's 0.23, 2.98, and 1.11, positive in 20 of 20
seeds for both variants. The soft-reset variant exceeds the full gate in the same seed in 18, 14, and 13
of 20 (two-sided sign test 0.0004, 0.12, and 0.26). The table reports 30 sign tests without
adjustment; the TiRex margin survives a Bonferroni factor of 30 at 0.012, and the other two fail
without one. On Chronos-2 the override variant and the capped policy finish below the ungated filter
in 20 of 20 seeds, by 2.89 and 2.86, and on TimesFM~2.5 the capped policy does in 18 of 20, by
0.16, while on TiRex every deployed variant finishes above it in 20 of 20 and the two
detector-stride-96 variants in 14 of 20. The gate without its soft reset finishes below the full gate seed for
seed in 20 of 20 on every backbone and below the soft-reset variant in 20 of 20; it sits 0.04 above
the ungated filter on TiRex in 20 of 20 and below it on Chronos-2 and TimesFM~2.5 in 20 and 19 of
20. The combined variant finishes within 0.001 of the detector-stride-96 gate or below it in 20 of 20 seeds on
every backbone, strictly below in 16 and 18 of 20 on TiRex and TimesFM~2.5.

\begin{table}[t]
\centering
\caption{Final cumulative MSE on the clean weather cell under immediate reveal, every variant, the
three seeds per backbone in seed order. A dagger marks a run that ended at the divergence guard,
whose value is the cumulative MSE at truncation. Section~\ref{sec:attribution} and
Section~\ref{sec:cutoff} quote their immediate-reveal ranges and medians from these rows.}
\label{app:tab:transient-imm}
\scriptsize
\setlength{\tabcolsep}{3pt}
\begin{tabular}{lccc}
\toprule
variant & TiRex & Chronos-2 & TimesFM~2.5 \\
\midrule
ungated filter & 2.032 / 2.032 / 2.032 & 3.575 / 3.575 / 3.575 & 0.361 / 0.361 / 0.361 \\
Huber-only & 2.003 / 2.005 / 2.001 & 3.519 / 3.520 / 3.517 & 0.358 / 0.357 / 0.359 \\
full gate & 80.6$^\dagger$ / 75.4$^\dagger$ / 76.4$^\dagger$ & 172.6$^\dagger$ / 161.8$^\dagger$ / 89.7$^\dagger$ & 3.051 / 2.571 / 3.061 \\
fixed cutoff & 0.374 / 0.374 / 0.374 & 0.231 / 0.231 / 0.231 & 3.413 / 3.413 / 3.413 \\
soft-reset variant & 83.1$^\dagger$ / 83.1$^\dagger$ / 70.4$^\dagger$ & 187.9$^\dagger$ / 216.1$^\dagger$ / 189.1$^\dagger$ & 2.500 / 2.696 / 3.026 \\
boost variant & 2.246 / 2.167 / 2.154 & 4.046 / 4.183 / 3.899 & 0.383 / 0.380 / 0.378 \\
override variant & 0.358 / 0.356 / 0.358 & 1.066 / 1.067 / 1.064 & 0.341 / 0.341 / 0.342 \\
frozen scale & 84.8$^\dagger$ / 87.5$^\dagger$ / 83.8$^\dagger$ & 211.2$^\dagger$ / 208.8$^\dagger$ / 228.0$^\dagger$ & 2.577 / 2.693 / 2.583 \\
capped policy & 0.309 / 0.287 / 0.275 & 0.358 / 0.347 / 0.350 & 0.159 / 0.927 / 0.933 \\
capped policy, spike check live & 31.1$^\dagger$ / 4.925 / 44.3$^\dagger$ & 32.5$^\dagger$ / 40.8$^\dagger$ / 31.6$^\dagger$ & 7.163 / 0.505 / 4.184 \\
detector-stride-96 gate & 0.396 / 0.396 / 0.396 & 0.231 / 0.231 / 0.231 & 3.413 / 3.413 / 3.413 \\
full gate, no soft reset & 0.257 / 0.247 / 0.249 & 0.415 / 0.383 / 0.434 & 0.141 / 0.136 / 0.138 \\
detector-stride-96, capped response & 0.362 / 0.362 / 0.362 & 0.231 / 0.231 / 0.231 & 3.413 / 3.413 / 3.413 \\
\bottomrule
\end{tabular}
\end{table}

\begin{table}[t]
\centering
\caption{Final cumulative MSE on the clean weather cell under delayed reveal, every variant, the three
seeds per backbone in seed order. No run diverges. Section~\ref{sec:attribution} quotes its
delayed-reveal ranges from these rows.}
\label{app:tab:transient-del}
\scriptsize
\setlength{\tabcolsep}{3pt}
\begin{tabular}{lccc}
\toprule
variant & TiRex & Chronos-2 & TimesFM~2.5 \\
\midrule
ungated filter & 0.286 / 0.286 / 0.286 & 5.753 / 5.753 / 5.753 & 0.517 / 0.517 / 0.517 \\
Huber-only & 0.337 / 0.342 / 0.343 & 5.554 / 5.587 / 5.571 & 0.683 / 0.378 / 0.558 \\
full gate & 0.530 / 0.502 / 0.513 & 9.143 / 8.322 / 9.066 & 1.223 / 1.184 / 1.231 \\
fixed cutoff & 0.167 / 0.167 / 0.167 & 0.181 / 0.181 / 0.181 & 0.128 / 0.128 / 0.128 \\
soft-reset variant & 1.006 / 0.984 / 1.027 & 8.430 / 8.497 / 8.498 & 2.346 / 2.090 / 1.147 \\
boost variant & 0.368 / 0.342 / 0.352 & 5.616 / 5.681 / 5.695 & 0.718 / 0.691 / 0.693 \\
override variant & 0.326 / 0.337 / 0.329 & 2.650 / 2.656 / 2.897 & 0.650 / 0.651 / 0.716 \\
frozen scale & 1.260 / 1.193 / 1.116 & 8.457 / 8.492 / 8.456 & 2.320 / 2.002 / 2.731 \\
capped policy & 0.341 / 0.342 / 0.343 & 2.841 / 2.788 / 2.868 & 0.294 / 0.363 / 0.359 \\
capped policy, spike check live & 0.388 / 0.446 / 0.403 & 1.505 / 0.922 / 1.454 & 1.114 / 0.265 / 0.255 \\
detector-stride-96 gate & 0.167 / 0.933 / 0.167 & 0.181 / 0.181 / 0.181 & 0.128 / 0.128 / 0.128 \\
full gate, no soft reset & 0.315 / 0.325 / 0.326 & 2.635 / 2.321 / 2.017 & 0.302 / 0.324 / 0.299 \\
detector-stride-96, capped response & 0.167 / 0.731 / 0.167 & 0.181 / 0.181 / 0.181 & 0.128 / 0.128 / 0.128 \\
\bottomrule
\end{tabular}
\end{table}

\begin{table}[t]
\centering
\caption{Paired per-seed differences in test-block MSE from the ungated filter at the clean
weather cell under delayed reveal, conditional on this fixed weather path, with twenty seeds (0 to 19)
per backbone. The interval is a bootstrap 95 percent
interval of the mean over seeds, positive counts the seeds whose difference is above zero out of
the nonzero ones, and the last column is a two-sided sign test. Seeds move the adapter's initial
perturbation and the conformal tie-break stream only. The gate without its soft reset shares the
deployed configuration and its ungated reference; the detector-stride-96 gate and its capped variant
use a matched ungated reference over the same twenty seeds.}
\label{app:tab:seeddeltas}
\scriptsize
\setlength{\tabcolsep}{3pt}
\begin{tabular}{llrrrrrrr}
\toprule
backbone & variant & $n$ & ungated & variant & difference & interval & positive & sign $p$ \\
\midrule
TiRex & full gate & 20 & 0.29 & 0.52 & $+0.23$ & +0.20 to +0.26 & 20/20 & $<2\times10^{-6}$ \\
TiRex & soft-reset variant & 20 & 0.29 & 0.90 & $+0.61$ & +0.53 to +0.69 & 20/20 & $<2\times10^{-6}$ \\
TiRex & boost variant & 20 & 0.29 & 0.35 & $+0.07$ & +0.06 to +0.07 & 20/20 & $<2\times10^{-6}$ \\
TiRex & override variant & 20 & 0.29 & 0.33 & $+0.05$ & +0.04 to +0.06 & 20/20 & $<2\times10^{-6}$ \\
TiRex & capped policy & 20 & 0.29 & 0.36 & $+0.08$ & +0.06 to +0.09 & 20/20 & $<2\times10^{-6}$ \\
TiRex & capped policy, spike check live & 20 & 0.29 & 0.45 & $+0.16$ & +0.15 to +0.18 & 20/20 & $<2\times10^{-6}$ \\
TiRex & Huber-only & 20 & 0.29 & 0.34 & $+0.05$ & +0.05 to +0.05 & 20/20 & $<2\times10^{-6}$ \\
TiRex & full gate, no soft reset & 20 & 0.29 & 0.32 & $+0.04$ & +0.04 to +0.04 & 20/20 & $<2\times10^{-6}$ \\
TiRex & detector-stride-96 gate & 20 & 0.29 & 0.52 & $+0.24$ & +0.12 to +0.36 & 14/20 & 0.115 \\
TiRex & detector-stride-96, capped response & 20 & 0.29 & 0.41 & $+0.13$ & +0.03 to +0.22 & 14/20 & 0.115 \\
Chronos-2 & full gate & 20 & 5.75 & 8.74 & $+2.98$ & +2.66 to +3.29 & 20/20 & $<2\times10^{-6}$ \\
Chronos-2 & soft-reset variant & 20 & 5.75 & 9.32 & $+3.57$ & +3.08 to +4.04 & 20/20 & $<2\times10^{-6}$ \\
Chronos-2 & boost variant & 20 & 5.75 & 5.60 & $-0.15$ & $-0.20$ to $-0.11$ & 0/20 & $<2\times10^{-6}$ \\
Chronos-2 & override variant & 20 & 5.75 & 2.86 & $-2.89$ & $-2.96$ to $-2.82$ & 0/20 & $<2\times10^{-6}$ \\
Chronos-2 & capped policy & 20 & 5.75 & 2.89 & $-2.86$ & $-2.92$ to $-2.78$ & 0/20 & $<2\times10^{-6}$ \\
Chronos-2 & capped policy, spike check live & 20 & 5.75 & 3.52 & $-2.23$ & $-3.46$ to $-0.93$ & 4/20 & 0.012 \\
Chronos-2 & Huber-only & 20 & 5.75 & 5.57 & $-0.18$ & $-0.19$ to $-0.16$ & 0/20 & $<2\times10^{-6}$ \\
Chronos-2 & full gate, no soft reset & 20 & 5.75 & 2.33 & $-3.42$ & $-3.52$ to $-3.31$ & 0/20 & $<2\times10^{-6}$ \\
Chronos-2 & detector-stride-96 gate & 20 & 5.75 & 0.18 & $-5.57$ & $-5.57$ to $-5.57$ & 0/20 & $<2\times10^{-6}$ \\
Chronos-2 & detector-stride-96, capped response & 20 & 5.75 & 0.18 & $-5.57$ & $-5.57$ to $-5.57$ & 0/20 & $<2\times10^{-6}$ \\
TimesFM~2.5 & full gate & 20 & 0.52 & 1.63 & $+1.11$ & +0.94 to +1.28 & 20/20 & $<2\times10^{-6}$ \\
TimesFM~2.5 & soft-reset variant & 20 & 0.52 & 1.89 & $+1.37$ & +1.18 to +1.57 & 20/20 & $<2\times10^{-6}$ \\
TimesFM~2.5 & boost variant & 20 & 0.52 & 0.70 & $+0.19$ & +0.17 to +0.22 & 20/20 & $<2\times10^{-6}$ \\
TimesFM~2.5 & override variant & 20 & 0.52 & 0.67 & $+0.15$ & +0.14 to +0.16 & 20/20 & $<2\times10^{-6}$ \\
TimesFM~2.5 & capped policy & 20 & 0.52 & 0.36 & $-0.16$ & $-0.20$ to $-0.10$ & 2/20 & 0.0004 \\
TimesFM~2.5 & capped policy, spike check live & 20 & 0.52 & 0.58 & $+0.06$ & $-0.07$ to +0.21 & 11/20 & 0.824 \\
TimesFM~2.5 & Huber-only & 20 & 0.52 & 0.62 & $+0.10$ & +0.06 to +0.14 & 18/20 & 0.0004 \\
TimesFM~2.5 & full gate, no soft reset & 20 & 0.52 & 0.32 & $-0.20$ & $-0.22$ to $-0.18$ & 1/20 & $<5\times10^{-5}$ \\
TimesFM~2.5 & detector-stride-96 gate & 20 & 0.52 & 0.27 & $-0.25$ & $-0.39$ to $-0.06$ & 2/20 & 0.0004 \\
TimesFM~2.5 & detector-stride-96, capped response & 20 & 0.52 & 0.23 & $-0.29$ & $-0.39$ to $-0.16$ & 2/20 & 0.0004 \\
\bottomrule
\end{tabular}
\end{table}

Across every variant and cell saved for these campaigns, 284 run records are marked diverged, and
their guard steps concentrate on two events. The weather transient accounts for 181 records at
step 18{,}536, 15 at step 18{,}537, and 10 at step 18{,}538. A second family accounts for 15
records at step 12{,}911 and 11 at step 12{,}912, all of them the two SGD variants on traffic under
immediate reveal, on TiRex and Chronos-2. Nine of the 284 trigger at step 200, the
divergence guard's own first check, so the guard's onset step accounts for 9 of 284 records. The
full gate without its soft reset and the combined variant add 13, 8 at the weather transient, 4 at step 200, and 1 at step
8{,}983 on ETTm1, none on a clean stream.

\begin{table}[t]
\centering
\caption{Divergence counts over 450 runs per variant (3 backbones $\times$ 5 streams $\times$ 3
seeds $\times$ 10 contamination settings), immediate versus delayed reveal. The middle four rows
are the variants introduced in Sections~\ref{sec:frozenr} and~\ref{sec:cutoff}, the next three
the single-component variants of Section~\ref{sec:attribution}, and the last two the full gate
without its soft reset (Section~\ref{sec:attribution}) and the combined variant of Section~\ref{sec:cutoff}.}
\label{tab:divergence}
\footnotesize
\begin{tabular}{lrr}
\toprule
variant & immediate & delayed \\
\midrule
full gate & 60 & 2 \\
SGD surrogate & 6 & 0 \\
ungated SGD & 20 & 17 \\
ungated filter & 0 & 1 \\
Huber-only & 2 & 3 \\
fixed cutoff & 0 & 2 \\
frozen scale & 60 & 2 \\
capped policy & 1 & 2 \\
capped policy, spike check live & 47 & 2 \\
detector-stride-96 gate & 6 & 2 \\
soft-reset variant & 60 & 2 \\
boost variant & 1 & 2 \\
override variant & 0 & 2 \\
full gate, no soft reset & 1 & 3 \\
detector-stride-96, capped response & 7 & 2 \\
\bottomrule
\end{tabular}
\end{table}

\begin{figure}[t]
\centering
\includegraphics[width=0.85\linewidth]{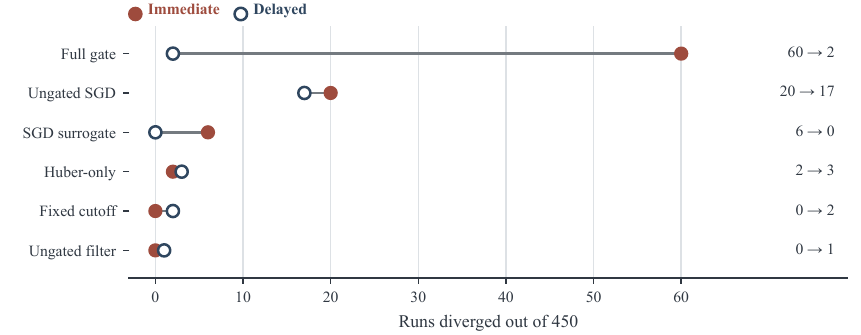}
\caption{Runs diverged out of 450 per variant (3 backbones $\times$ 5 streams $\times$ 3 seeds
$\times$ 10 contamination cells), immediate reveal filled and delayed reveal hollow. The full
gate falls from 60 to 2. Eight Kalman divergences remain under delayed reveal, two full
gate, one ungated filter, three Huber-only, and two fixed cutoff. Seven of the eight are one
cell family, weather at seed 1 under the 5 percent spike injector with guard
onset at step 200, and the eighth is the Huber-only variant on ETTm1 under the packet-freeze injector.}
\label{fig:dumbbell}
\end{figure}

\section{Full degradation tables}
\label{app:tables}

Figure~\ref{fig:spikeprofile} in the main text displays the spike injector at 5 percent
across five streams, four variants, and both feedback protocols, under the survivor-only
convention of Appendix~\ref{app:repro}. Tables~\ref{app:tab:spike-full},
\ref{app:tab:level-shift-full}, and \ref{app:tab:stuck-full} below give that figure's
numeric record and extend the identical convention to the level-shift and packet-freeze injectors,
at the same 5 percent contamination fraction and the same four variants. Those variants are the
ungated filter, the Huber-only variant, the full gate, and the fixed-cutoff variant.
Table~\ref{app:tab:later-variants} gives the same spike cell for the later variants. The frozen-scale variant's delayed-reveal values on the same cell are ETTh2
$+0.50$, ETTm1 $+1.49$, weather $-0.30\,[7/9]$, electricity $+0.12$, and traffic $+0.24$.

\begin{table}[t]
\centering
\caption{Spike at 5 percent, survivor-only paired degradation ratios for the frozen-scale variant of
Section~\ref{sec:frozenr}, the two capped policies and the detector-stride-96 gate of
Section~\ref{sec:cutoff}, the full gate without its soft reset (Section~\ref{sec:attribution}), and
the combined variant of Section~\ref{sec:cutoff}, pooled over three backbones and three seeds (9 pairs per
stream and setting), under immediate reveal for all six and under delayed reveal for the five later ones. Check live is the capped policy with spike classification live during recovery, no reset the full gate
without its soft reset, and combined the detector-stride-96 gate with the capped response. Bracketed counts
mark cells where fewer than 9 of 9 pairs survived.}
\label{app:tab:later-variants}
\resizebox{\textwidth}{!}{\scriptsize
\setlength{\tabcolsep}{2.2pt}
\begin{tabular}{lrrrrrrrrrrr}
\toprule
& \multicolumn{6}{c}{immediate} & \multicolumn{5}{c}{delayed 96} \\
\cmidrule(lr){2-7}\cmidrule(lr){8-12}
stream & frozen & capped & check live & thinned & no reset & combined & capped & check live & thinned & no reset & combined \\
\midrule
ETTh2 & $-0.03$ & $+0.17$ & $+0.15$ & $+0.05$ & $+0.21$ & $+0.05$ & $+0.04$ & $-0.05$ & $-0.01$ & $+0.15$ & $-0.01$ \\
ETTm1 & $+1.85$ & $+4.68$ & $+1.45$ & $-0.01$ & $+5.70$ & $-0.01$ & $+1.38$ & $-0.07$ & $+0.02$ & $+2.36$ & $+0.02$ \\
weather & $+0.18$\,[3/9] & $+0.58$ & $-0.33$\,[4/9] & $-0.02$ & $+0.42$\,[8/9] & $+0.01$ & $+0.15$\,[7/9] & $+0.30$\,[7/9] & $+0.25$\,[7/9] & $+0.06$\,[7/9] & $+0.12$\,[7/9] \\
electricity & $-0.12$ & $-0.19$ & $+0.05$ & $+0.00$ & $-0.23$ & $+0.00$ & $+0.33$ & $-0.05$ & $-0.00$ & $+0.37$ & $-0.00$ \\
traffic & $+0.19$ & $-0.00$ & $+0.21$ & $+0.00$ & $+0.10$ & $+0.00$ & $+0.20$ & $+0.05$ & $-0.00$ & $+0.32$ & $-0.00$ \\
\bottomrule
\end{tabular}}
\end{table}

\begin{table}[t]
\centering
\caption{Spike at 5 percent, survivor-only paired degradation ratios from the completed forecast
cache, immediate versus delayed reveal. A pair enters a cell's mean only when both runs finish;
bracketed counts mark fewer than 9 of 9 surviving pairs.}
\label{app:tab:spike-full}
\resizebox{\textwidth}{!}{\footnotesize
\begin{tabular}{lrrrrrrrr}
\toprule
& \multicolumn{4}{c}{immediate} & \multicolumn{4}{c}{delayed} \\
\cmidrule(lr){2-5} \cmidrule(lr){6-9}
stream & ungated filter & Huber-only & full gate & fixed cutoff & ungated filter & Huber-only & full gate & fixed cutoff \\
\midrule
ETTh2 & $+3.32$ & $+0.15$ & $+0.23$ & $+0.05$ & $+2.41$ & $+0.05$ & $+0.18$ & $-0.01$ \\
ETTm1 & $+2.70$ & $+0.63$ & $+6.81$ & $-0.01$ & $+4.80$ & $-0.19$ & $+3.18$ & $+0.01$ \\
weather & $+0.01$ & $-0.07$ & $+0.03$\,[3/9] & $-0.00$ & $+0.02$\,[8/9] & $-0.16$\,[7/9] & $-0.20$\,[7/9] & $+0.02$\,[7/9] \\
electricity & $+0.04$ & $+0.01$ & $-0.31$ & $+0.00$ & $+0.47$ & $-0.01$ & $+0.54$ & $-0.00$ \\
traffic & $+3.73$ & $+0.10$ & $+0.35$ & $+0.00$ & $+2.64$ & $+0.26$ & $+0.43$ & $-0.00$ \\
\bottomrule
\end{tabular}}
\end{table}

\begin{table}[t]
\centering
\caption{Level shift at 5 percent, survivor-only paired degradation ratios from the completed
forecast cache, immediate versus delayed reveal. A pair enters a cell's mean only when both runs
finish; bracketed counts mark fewer than 9 of 9 surviving pairs.}
\label{app:tab:level-shift-full}
\resizebox{\textwidth}{!}{\footnotesize
\begin{tabular}{lrrrrrrrr}
\toprule
& \multicolumn{4}{c}{immediate} & \multicolumn{4}{c}{delayed} \\
\cmidrule(lr){2-5} \cmidrule(lr){6-9}
stream & ungated filter & Huber-only & full gate & fixed cutoff & ungated filter & Huber-only & full gate & fixed cutoff \\
\midrule
ETTh2 & $+7.79$ & $+16.98$ & $+19.03$ & $+10.04$ & $+1.41$ & $+3.76$ & $+1.18$ & $+2.20$ \\
ETTm1 & $+15.02$ & $+10.60$ & $+13.50$ & $+4.15$ & $+1.86$ & $+1.81$ & $+1.71$ & $+2.34$ \\
weather & $-0.00$ & $-0.00$ & $+0.04$\,[3/9] & $-0.05$ & $-0.00$ & $+0.20$ & $+0.09$ & $-0.01$ \\
electricity & $+1.16$ & $+1.13$ & $+1.20$ & $+2.20$ & $+0.73$ & $+0.78$ & $+0.82$ & $+1.47$ \\
traffic & $+9.54$ & $+18.35$ & $+10.57$ & $+0.97$ & $+5.94$ & $+7.63$ & $+5.84$ & $+0.65$ \\
\bottomrule
\end{tabular}}
\end{table}

\begin{table}[t]
\centering
\caption{Packet-freeze contamination at 5 percent, survivor-only paired degradation ratios from
the completed forecast cache, immediate versus delayed reveal. A pair enters a cell's mean only
when both runs finish; bracketed counts mark fewer than 9 of 9 surviving pairs.}
\label{app:tab:stuck-full}
\resizebox{\textwidth}{!}{\footnotesize
\begin{tabular}{lrrrrrrrr}
\toprule
& \multicolumn{4}{c}{immediate} & \multicolumn{4}{c}{delayed} \\
\cmidrule(lr){2-5} \cmidrule(lr){6-9}
stream & ungated filter & Huber-only & full gate & fixed cutoff & ungated filter & Huber-only & full gate & fixed cutoff \\
\midrule
ETTh2 & $+0.22$ & $+3.24$ & $+8.29$ & $+12.26$ & $+0.13$ & $+0.42$ & $+0.85$ & $+1.97$ \\
ETTm1 & $+0.26$ & $+11.60$ & $+16.98$ & $+13.76$ & $+0.11$ & $+3.80$\,[8/9] & $+3.97$ & $+7.27$ \\
weather & $+0.00$ & $+0.00$ & $-0.02$\,[3/9] & $-0.00$ & $+0.00$ & $+0.02$ & $+0.02$ & $+0.00$ \\
electricity & $-0.08$ & $+1.41$ & $+1.74$ & $+1.72$ & $+0.36$ & $+0.87$ & $+0.55$ & $+0.94$ \\
traffic & $+1.75$ & $+7.18$ & $+6.14$ & $+6.38$ & $+1.03$ & $+4.50$ & $+3.31$ & $+4.15$ \\
\bottomrule
\end{tabular}}
\end{table}

\begin{figure}[t]
\centering
\includegraphics[width=\linewidth]{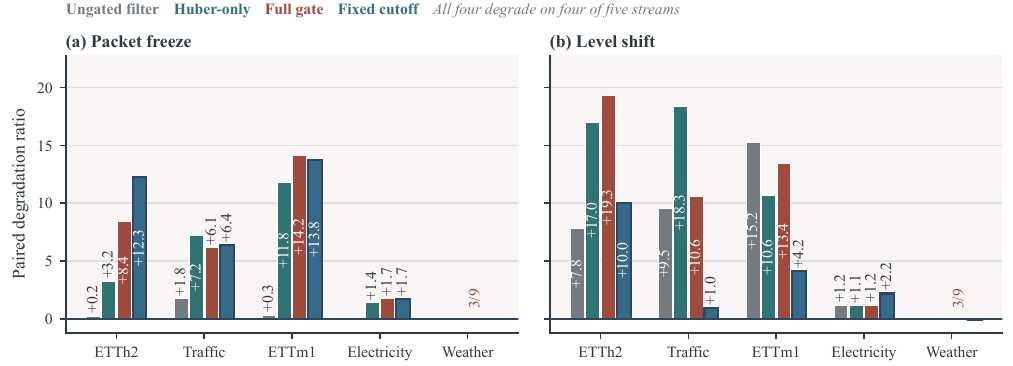}
\caption{The two negative results under immediate reveal, survivor-only ratios at 5 percent
contamination. (a) The packet-freeze injector degrades every gated variant, the fixed cutoff included,
beyond the ungated filter on the four non-weather streams. (b) The level-shift injector
degrades every variant on the four non-weather streams. Bracketed counts mark cells where fewer
than 9 of 9 pairs survived, the full gate's weather cells in both panels.}
\label{fig:negatives}
\end{figure}

The weather cells fall outside the sustained-contamination negatives of
Section~\ref{sec:cutoff}. Every variant stays near zero there under both injectors, the fixed cutoff
at $-0.0001$ under the packet-freeze injector. Outside the Kalman variants, the SGD surrogate's immediate
traffic freeze cell reads $-0.21$ over the 5 of its 9 pairs that survived. Under the
level-shift injector at 5 percent the SGD variants on traffic lose most of their pairs to
divergence.

Contamination fractions of 1 percent and 10 percent exist in the same result files and are
not tabulated here. The per-step transient trace of the component ablation at the weather event
is carried by Figure~\ref{fig:anatomy} in the main text.

Table~\ref{app:tab:sustained-all} extends the packet-freeze and level-shift tables to every variant under
immediate reveal. Section~\ref{sec:cutoff} reads the ETTh2 column of the level-shift half, from
the ungated filter's $+7.79$ to the frozen-scale variant's $+34.36$, as degrading all thirteen
tabulated variants, and quotes the fixed cutoff's freeze row, $+1.72$ to $+13.76$.

\begin{table}[t]
\centering
\caption{Survivor-only paired degradation ratios at 5 percent under immediate reveal for every
variant, packet-freeze and level-shift injectors, the four non-weather streams, 9 pairs per cell (three
backbones, three seeds). Bracketed counts mark cells
with fewer than 9 surviving pairs. The four primary variants repeat Tables~\ref{app:tab:stuck-full}
and~\ref{app:tab:level-shift-full}, and the detector-stride-96 gate uses its matched configuration.}
\label{app:tab:sustained-all}
\scriptsize
\setlength{\tabcolsep}{2.5pt}
\begin{tabular}{lrrrrrrrr}
\toprule
& \multicolumn{4}{c}{freeze} & \multicolumn{4}{c}{level shift} \\
\cmidrule(lr){2-5}\cmidrule(lr){6-9}
variant & electricity & ETTh2 & ETTm1 & traffic & electricity & ETTh2 & ETTm1 & traffic \\
\midrule
ungated filter & $-0.08$ & $+0.22$ & $+0.26$ & $+1.75$ & $+1.16$ & $+7.79$ & $+15.02$ & $+9.54$ \\
Huber-only & $+1.41$ & $+3.24$ & $+11.60$ & $+7.18$ & $+1.13$ & $+16.98$ & $+10.60$ & $+18.35$ \\
full gate & $+1.74$ & $+8.29$ & $+16.98$ & $+6.14$ & $+1.20$ & $+19.03$ & $+13.50$ & $+10.57$ \\
fixed cutoff & $+1.72$ & $+12.26$ & $+13.76$ & $+6.38$ & $+2.20$ & $+10.04$ & $+4.15$ & $+0.97$ \\
ungated SGD & $+0.19$ & $+0.52$ & $+0.44$ & $+0.02$\,[3/9] & $+1.64$ & $+12.84$ & $+15.04$ & $+0.12$\,[3/9] \\
SGD surrogate & $+0.15$ & $+0.49$ & $+0.42$ & $-0.21$\,[5/9] & $+1.70$ & $+11.31$ & $+13.72$ & $+0.15$\,[6/9] \\
soft-reset variant & $+1.60$ & $+7.88$ & $+8.08$ & $+4.35$ & $+1.27$ & $+8.84$ & $+11.17$ & $+10.84$ \\
boost variant & $+2.11$ & $+7.70$ & $+13.09$ & $+7.46$ & $+1.39$ & $+11.58$ & $+18.27$ & $+14.22$ \\
override variant & $+1.43$ & $+4.92$ & $+8.78$ & $+5.35$ & $+1.10$ & $+17.96$ & $+11.70$ & $+12.30$ \\
frozen scale & $+0.12$ & $+0.59$ & $+0.54$ & $+3.71$ & $+1.20$ & $+34.36$ & $+8.35$ & $+10.71$ \\
capped policy & $+2.15$ & $+7.86$ & $+15.83$ & $+5.80$ & $+2.49$ & $+29.07$ & $+13.46$ & $+10.40$ \\
capped policy, spike check live & $+1.99$ & $+7.98$ & $+12.36$ & $+7.24$ & $+1.29$ & $+13.93$ & $+11.78$ & $+1.81$ \\
detector-stride-96 gate & $+1.00$ & $+6.59$ & $+1.71$ & $+3.03$ & $+1.38$ & $+10.02$ & $+2.88$ & $+1.43$ \\
\bottomrule
\end{tabular}
\end{table}

\section{Adapter gain over the frozen backbone and its retention}
\label{app:gain}

Every degradation ratio in the paper is taken to the ungated filter, so a reference variant that
never adapts, the frozen backbone's own forecast through the identity calibration, states what
the adapter is worth and lets every other variant be read in units of that worth. The variant reads no
label, so it is immune to label contamination, and one clean cell per backbone and stream is its
reference for every cell. Gain is the quantity of~\eqref{eq:metrics}, taken for the ungated filter's clean run; retention reads every other variant's run $L$ in units of that gain, with $L_{\text{ungated}}$ the ungated filter's clean run,
\begin{equation}
\text{retention} = \frac{L_{\text{frozen}} - L}{L_{\text{frozen}} - L_{\text{ungated}}},
\label{eq:retention}
\end{equation}
so one keeps the whole clean gain, zero is no better than
never adapting, and a negative value is worse. Retention is undefined where the ungated filter has
no gain, which excludes weather and every delayed cell; no run entering
Table~\ref{app:tab:retention-full} diverged. Each variant is paired with the clean ungated run of the
same backbone and protocol.

\begin{table}[t]
\centering
\caption{Conditional gain retention for all thirteen variants over the 36 diagnostic
immediate-reveal runs with positive ungated clean gain, excluding weather. Entries are medians,
with the number below zero in parentheses; contamination is 5 percent. This selected cohort is a
conditional decision analysis, not evidence of general effectiveness.}
\label{app:tab:retention-full}
\small
\setlength{\tabcolsep}{6pt}
\begin{tabular}{lcccc}
\toprule
variant & clean & vector spike & vector shift & vector freeze \\
\midrule
ungated filter & 1.00 (0) & $-0.35$ (19) & $-2.96$ (36) & 0.89 (9) \\
Huber-only & 0.90 (0) & 0.78 (3) & $-3.46$ (35) & $-2.49$ (32) \\
full gate & 1.06 (0) & 0.71 (12) & $-2.82$ (35) & $-3.78$ (36) \\
fixed cutoff & 0.75 (3) & 0.74 (1) & $-2.87$ (26) & $-8.02$ (36) \\
soft-reset variant & 1.06 (0) & 0.82 (7) & $-3.02$ (35) & $-3.12$ (36) \\
boost variant & 1.05 (0) & 0.74 (13) & $-3.18$ (34) & $-3.67$ (36) \\
override variant & 0.94 (0) & 0.84 (0) & $-2.92$ (33) & $-3.09$ (34) \\
frozen scale & 0.98 (0) & 0.96 (4) & $-3.31$ (33) & 0.62 (9) \\
capped policy & 1.04 (0) & 0.96 (3) & $-2.82$ (32) & $-3.46$ (35) \\
capped, spike check live & 0.98 (0) & 0.85 (3) & $-2.61$ (26) & $-4.40$ (36) \\
gate, no soft reset & 1.02 (0) & 0.91 (8) & $-2.76$ (31) & $-3.33$ (35) \\
fallback, first version & 0.83 (0) & 0.46 (9) & $-2.90$ (36) & 0.82 (9) \\
fallback, second version & 0.98 (0) & 0.03 (9) & $-2.71$ (36) & 0.88 (9) \\
\bottomrule
\end{tabular}
\end{table}

Under immediate reveal the adapter removes 15 to 83 percent of the frozen backbone's error on 12
of 15 cells and multiplies it 2.9 to 22 times on weather on every backbone
(Table~\ref{app:tab:gain}); the weather transient of Section~\ref{sec:attribution} is therefore an
adapter-made cost that not adapting avoids entirely, and no variant of the main campaign reaches the
frozen backbone's 0.125 to 0.159 there; the variant of Appendix~\ref{app:causal} that pins the
forgetting factor and holds the noise scale does, at a gain of 0.47. Under causal delayed reveal the adapter is worse than the frozen
backbone on every cell, by 2 to 96 percent of the frozen error on the four non-weather streams, so
the protocol under which the divergences vanish is one under which this adapter has no value to
protect.

Over the 36 immediate runs with a gain (Table~\ref{app:tab:retention-full}), the gated variants keep
0.90 to 1.06 of the gain on clean streams and the fixed cutoff 0.75, its price for refusing a
third of the updates. Under 5 percent spikes the ungated filter keeps $-0.35$, worse than never
adapting in 19 of 36 runs, the capped policy 0.96, the full gate 0.71 with 12 runs below the
frozen backbone, and the fixed cutoff 0.74 with one such run. Under the feedback-vector shift every variant loses about
three times the gain. Under the packet-freeze intervention the ungated filter keeps 0.89 and the frozen-scale
gate 0.62. The full gate, fixed cutoff, capped policy, and active component-response variants lose
2.8 to 8 times the gain, so those response variants, and the fixed cutoff most, turn a mild cost into
a large one. Across the 15 immediate cells the adapter's gain rises with the residual lag-1
autocorrelation of the frozen backbone's own errors (Spearman 0.64 across 15 clustered cells; traffic is lowest at
0.72 to 0.74 and gain 0.15 to 0.21, ETTh2 and ETTm1 highest at 0.97 to 0.98 and 0.60 to 0.83,
weather the exception), the cells are clustered by stream, and the correlation is descriptive.

The first version of the fallback mixture, the one fixed before the runs, fails its own rule. Over the 36 immediate runs with a gain it keeps 0.83 of the gain on clean streams, no run at 0.9, and 0.46 under spikes with 9 runs below the frozen backbone against 19 for the ungated
filter (Table~\ref{app:tab:retention-full}). Under causal delayed reveal it is worse than the frozen
backbone on 36 of 45 runs, where the ungated filter is worse on all 45. On clean weather under
immediate reveal it finishes at 1.02, 1.80, and 0.19 on TiRex, Chronos-2, and TimesFM~2.5
against the frozen 0.140, 0.159, and 0.125. Its weight on
the adapter averages 0.36 to 0.69 over every cell and 0.49 on weather; as configured, its regret
bound does not bind, because the exp-concavity behind the rate $1/(8B^2)$ needs every forecast and
label inside the range the rate was set for, which the adapter's weather transient leaves, and a fixed share of $10^{-3}$ adds about $10^{-3}\,n/\eta$ to the static regret bound, 2.5 per step on weather
against a bound of 0.04 per step. The second version of Appendix~\ref{app:spec} sets the rate from
the frozen residual's training range, the scale the mixture's loss is measured on, clips to the moving band, and drops the fixed share.

The second version repairs the guarantee and confirms the delayed result. Its band binds on at most 0.4 percent of steps and the fed label leaves it on at most 0.14 percent, so the bound of
$8D^2\log 2$ on the cumulative excess over the better expert is in force, and the measured excess
stays within it on all 90 clean runs under both protocols, at most 0.54 of the bound, with the mixture
below both experts on 21 of them. Under immediate reveal it keeps 0.98 of the adapter's clean gain at the median and at least 0.90
on each of the 36 runs, and it finishes below the frozen backbone on all 45 runs. On clean
weather it finishes at 0.071, 0.077, and 0.066 on TiRex, Chronos-2, and TimesFM~2.5, against the
frozen 0.140, 0.159, and 0.125 and the ungated 2.03, 3.58, and 0.36. Under 5 percent spikes it
keeps 0.03 of the gain at the median with 9 runs below the frozen backbone, so it protects the
floor and keeps little of the ceiling. Under the level shift its weights follow the fed labels
and it loses 2.71 times the gain on every cell (Table~\ref{app:tab:retention-full}). Under
causal delayed reveal its clean error is within 0.3 percent of the frozen backbone's at the median
and within 17.7 percent at worst, on weather, against the first version's 863 percent; it is worse
than the frozen backbone on 33 of 45 runs by at most that margin, and its final weight on the
adapter is 0.000 on ETTh2 and ETTm1 on every backbone and 0.003 to 0.360 elsewhere. The mixture
learns what Table~\ref{app:tab:gain} states, that this adapter has no clean value under causal
reveal, and pays a bounded price for learning it (Table~\ref{app:tab:fallback2}); its regime tracking is
in Table~\ref{app:tab:regime-causal}.

\begin{table}[t]
\centering
\caption{Second-version fallback mixture on the 45 clean runs of each reveal. Gain is one minus the
mixture's clean MSE over the frozen backbone's; excess is the mixture's cumulative clipped fed-label
loss less the better expert's, over its bound $8D^2\log 2$; below both counts runs where the mixture
beats both experts; the last column is the range of the final weight on the adapter.}
\label{app:tab:fallback2}
\scriptsize
\setlength{\tabcolsep}{4pt}
\begin{tabular}{lrrrrrrr}
\toprule
& \multicolumn{3}{c}{gain over frozen} & & & & \\
\cmidrule(lr){2-4}
reveal & median & min & max & worse & excess / bound & below both & final weight \\
\midrule
immediate & 0.493 & 0.131 & 0.824 & 0 of 45 & 0.46 & 9 & 0.708--1.000 \\
delayed 96 & $-0.003$ & $-0.177$ & 0.055 & 33 of 45 & 0.54 & 12 & 0.000--0.360 \\
\bottomrule
\end{tabular}
\end{table}

\begin{table}[t]
\centering
\caption{Frozen backbone's clean MSE per backbone and stream, and the ungated adapter's gain over
it, one minus the ratio of clean MSEs, under immediate and delayed reveal (three seeds, identical
to three decimals because the clean ungated run is not replicated across seeds). All rows use
the completed forecast cache.}
\label{app:tab:gain}
\footnotesize
\begin{tabular}{llrrr}
\toprule
backbone & stream & frozen MSE & gain, immediate & gain, delayed \\
\midrule
TiRex & ETTh2 & 0.316 & 0.61 & $-0.73$ \\
TiRex & ETTm1 & 0.080 & 0.83 & $-0.93$ \\
TiRex & weather & 0.140 & $-13.51$ & $-1.05$ \\
TiRex & electricity & 0.502 & 0.23 & $-0.16$ \\
TiRex & traffic & 0.126 & 0.20 & $-0.30$ \\
Chronos-2 & ETTh2 & 0.377 & 0.68 & $-0.60$ \\
Chronos-2 & ETTm1 & 0.084 & 0.79 & $-0.30$ \\
Chronos-2 & weather & 0.159 & $-21.45$ & $-35.12$ \\
Chronos-2 & electricity & 0.667 & 0.33 & $-0.02$ \\
Chronos-2 & traffic & 0.110 & 0.21 & $-0.33$ \\
TimesFM~2.5 & ETTh2 & 0.321 & 0.63 & $-0.96$ \\
TimesFM~2.5 & ETTm1 & 0.082 & 0.77 & $-0.61$ \\
TimesFM~2.5 & weather & 0.125 & $-1.89$ & $-3.14$ \\
TimesFM~2.5 & electricity & 0.520 & 0.25 & $-0.22$ \\
TimesFM~2.5 & traffic & 0.113 & 0.15 & $-0.24$ \\
\bottomrule
\end{tabular}
\end{table}

\section{Causal reveal, residual predictability, and the adapter's forgetting}
\label{app:causal}

The composite coordinate reveal of \citet{kim2025tafas} and \citet{lau2025dsof} uses the packet in
Table~\ref{app:tab:feedback-packets}. At reveal step $r$, its horizon-$h$ block pairs the newly
observed label with the forecast issued at $r-h$, so that block's lag is exactly $h$. The first 96
steps are predict-only because the implementation forms a complete 96-block packet before an
update. Immediate reveal instead uses the 96 future labels of origin $r$, 95 of which also occur
in the next 95 origins, whereas delayed reveal uses the complete label of origin $r-96$. All three
updates use the state and predicted covariance at reveal step $r$. All 96 composite rows forecast
the same physical target time, so their forecast errors are strongly dependent. The implementation
nevertheless models the innovation covariance across horizon blocks as diagonal. The results below
therefore describe this composite diagonal-covariance update, not every coordinate-reveal estimator.
A second variant widens each horizon's block from $[f_{t,h}\;\;1]$ to $[f_{t,h}\;\;1\;\;r_{t-1}[1]\;\;r_{t-h}[h]\;\;\bar r_t]$, with $r_t = y_t - f_t$ the frozen residual vector of origin $t$ as in Appendix~\ref{app:valid}, so $r_{t-j}[j]$ is the residual of the forecast made $j$ origins earlier for the label observed at $t$, and $\bar r_t$ is the mean of $r_{t-j}[j]$ over $j = 1$ to $24$; each is divided by its
training-block root mean square and given a prior variance of 0.1. The features are therefore the
freshest observed errors, where \citet{dai2026orca} condition a residual predictor on the model's
input and output and wait for the full horizon. Both variants run as their own configurations, and the frozen backbone, the ungated filter, and the fallback mixture run
under each; the gated variants are not wired to the composite coordinate reveal. The rule for the widened variant, fixed before the runs, asked for a clean gain of at least 0.10 over the frozen backbone on at least 10 of 15
cells under the composite coordinate reveal with no diverged seed.

Under the composite coordinate reveal the adapter without features is worse than the frozen backbone on
14 of 15 cells, median gain $-0.34$, from $-0.58$ on TiRex ETTh2 to $+0.04$ on Chronos-2 ETTm1.
The loss holds in every horizon band, $-0.13$ at horizons 1 to 8, $-0.19$ at 9 to 24, $-0.26$ at
25 to 48, and $-0.34$ at 49 to 96 (Table~\ref{app:tab:causal}). Delayed reveal gives $-0.60$ on
15 of 15 and immediate reveal $+0.25$ on 12 of 15. The widened variant trips the divergence guard in 85 of 90 clean seeds, at or
before the guard's first check at step 200 in 42 of 45 under delayed reveal and after 124 to
13{,}789 steps under the composite coordinate reveal, with a non-finite hyperparameter state at step 0 on ETTm1 in 12
seeds, and the five surviving seeds finish 2.1 to 28.5 times the frozen error, so the rule fails at
0 of 15. Wrapped in the fallback mixture the widened variant forecasts as the frozen backbone, median
gain $-0.010$ and a regime ratio of 1.000.

An offline oracle separates information from estimation. On each cell it fits, on the training
block, a per-horizon affine correction of the frozen forecast, the same correction with the three
fresh residual features, the same with all 96 lagged residuals, and a ceiling that conditions on the
label adjacent to the target, the information immediate reveal supplies, and scores each on the test
block against the frozen backbone (Table~\ref{app:tab:oracle}). The adjacent label recovers 84 to
95 percent of the frozen error on four streams and 53 to 57 percent on traffic. Everything causally
available recovers at most 23 percent, reaches 0.10 on 4 of 15 cells with all 96 lags (TimesFM~2.5 traffic prints as 0.100 and
sits at 0.0997), and is
negative on weather with all lags; on ETTm1 the affine correction alone is worse than the frozen
backbone by 5 to 11 percent. A ridge recursive least-squares fit of the same corrections at
forgetting factor 0.999, run online under the composite coordinate reveal, stays within $-0.01$ to $+0.14$ of
the frozen backbone on ETTh2, electricity, and traffic and becomes unstable on ETTm1 and weather, further still with the
features, with no Kalman filter and no hyperparameter learning involved; its rule, fixed in advance as a gain of 0.10 on 10 of 15 cells, fails at 0 of 15. The adjacent label therefore carries enough
information to account for the scale of the immediate-reveal gain, and the causal residual summaries
tested here carry little on these streams; the oracle is a training-block fit and an offline
recursion, so it bounds information and does not reproduce the live adapter's estimation. \citet{dai2026orca} report single-digit to 16 percent error reductions for a residual
predictor on the same backbones and streams under the delayed full-horizon reveal, with a different
horizon average, context, and normalization; the negative results here are for this adapter family.

The adapter's own forgetting is part of the mechanism for its distance from the oracle, and only
part. Its learned forgetting factor sits at a median of 0.949 to 0.997 on the four non-weather
streams and at the floor of 0.90 on weather, under immediate and composite coordinate reveal alike, an effective memory of 20 to 330 origins against 1{,}000 at 0.999, and the hyperparameter learner moves it by
less than 0.02 between the two reveals on every cell (Table~\ref{app:tab:forgetting}). Under
immediate reveal the learned observation noise shrinks to 0.07 to 0.59 of its start on ETTm1,
electricity, and weather, and under the composite coordinate reveal it grows 1.5 to 1.9 times on weather.

A fixed-forgetting ablation separates the factor from the noise scale and from the reveal
(Table~\ref{app:tab:fixed-forgetting}). Six variants on the same ungated runner pin the forgetting factor at the oracle's 0.999 for the
whole run, hold the observation-noise scale or the process-noise scale at its warm-up value over
the test block, or combine these. The last variant holds the factor and both scales, so its test
block runs with no hyperparameter learning. A check of the saved trajectories confirms that no pinned or
held quantity moved in any run of either campaign. Pinning the factor raises the
clean gain on every one of the 12 non-weather cells, by 0.02 to 0.30 under the composite coordinate reveal and
0.10 to 0.73 under delayed reveal, lifts the pooled gain from $-0.34$ to $-0.15$ and from $-0.60$ to
$-0.05$, and removes every divergence. No run with the pinned factor diverges under any reveal, 270
runs, where the learned factor diverges on 9 of 9 weather seeds under the composite coordinate reveal. The rule fixed in advance, a gain of at least $-0.05$ on 10 of the 12 non-weather cells under the
composite coordinate reveal, still fails at 4 of 12. The pinned adapter loses 16 to 37 percent on ETTh2 and up
to 15 on electricity and traffic where the ridge recursion at the same factor sits within $-0.01$ to
$+0.14$; that comparator differs in its estimator, its state model, and its features at once, so it
locates a remainder without attributing it.

Three further variants hold the process-noise scale at its warm-up value, with the factor learned, with the factor pinned, and with the factor pinned and both
scales held, so the last runs the test block with no hyperparameter learning at the oracle's rate. That variant
reaches within $-0.05$ of the frozen backbone on 4 of the 12 non-weather cells under the composite coordinate reveal, at a pooled gain of $-0.13$ over the 15 cells with no diverged seed, and its rule, the same floor on 10 of 12, fails. Holding the process noise on its own moves the learned variant by $-0.06$ to $+0.11$ on the
non-weather cells and leaves its nine weather divergences in place. Against the variant with the
factor pinned alone, holding both scales adds $-0.02$ to $+0.11$ under the composite coordinate reveal and
$-0.02$ to $+0.12$ under delayed reveal, medians $+0.02$ and $+0.03$. On weather under the
composite coordinate reveal it loses 7.5 to 72 times the frozen error, against 2.9 to 21 with the factor
pinned alone. With the scales set at warm-up and no learning over the test block, the state model
itself loses under the reveal, 10 to 28 percent on ETTh2 and up to 17 percent on electricity and
traffic. Holding the observation-noise scale does not help on its own, a median change of
$-0.04$ and $-0.08$ under the two causal reveals, and together with the pinned factor it gives
back part of the factor's gain.

On weather the fast forgetting accounts for the divergence and leaves the failure. At 0.999 the adapter finishes within $-0.02$ to $+0.14$ of
the frozen backbone under delayed reveal and at 2.9 to 21 times its error under the composite coordinate reveal; under immediate reveal the factor alone brings weather to within $-0.33$ to $+0.08$ of the
frozen backbone, and the two pins together turn the learned variant's 2.9 to 22 times into a gain of
0.35 to 0.60, every one of the 15 cells positive.

\begin{table}[t]
\centering
\caption{The ungated adapter's learned hyperparameters on clean streams, median over steps then over
three seeds, range over the three backbones. Forgetting factor $\lambda$ (floor 0.90) and the ratio
of the learned observation-noise scale at the end of the run to its start, under immediate and
composite coordinate reveal.}
\label{app:tab:forgetting}
\footnotesize
\begin{tabular}{lcccc}
\toprule
& \multicolumn{2}{c}{$\lambda$ median} & \multicolumn{2}{c}{noise scale, end over start} \\
\cmidrule(lr){2-3}\cmidrule(lr){4-5}
stream & immediate & coordinate & immediate & coordinate \\
\midrule
ETTh2 & 0.956--0.981 & 0.949--0.979 & 1.27--1.80 & 1.29--1.55 \\
ETTm1 & 0.981--0.988 & 0.994--0.997 & 0.07--0.59 & 0.12--0.61 \\
weather & 0.902--0.905 & 0.900--0.901 & 0.19--0.19 & 1.52--1.85 \\
electricity & 0.994--0.996 & 0.993--0.994 & 0.29--0.59 & 0.30--0.61 \\
traffic & 0.983--0.985 & 0.978--0.982 & 2.20--2.37 & 2.15--2.49 \\
\bottomrule
\end{tabular}
\end{table}

\begin{table}[t]
\centering
\caption{Fixed-forgetting ablation of the ungated adapter. Clean gain over the frozen backbone, one
minus the ratio of clean MSEs, median over seeds then over the three backbones per stream and over
the 15 cells (pooled), by reveal; positive is the number of cells with a positive median gain and
diverged the number of the 45 clean seeds that diverged or reached a non-finite hyperparameter state, a
diverged seed counting as $-\infty$. The learned row is the ungated adapter of
Table~\ref{app:tab:causal}; $\lambda$ 0.999 pins the forgetting factor for the whole run, $R$ held
keeps the observation-noise scale at its warm-up value over the test block, both applies the two,
$Q$ held keeps the process-noise scale at its warm-up value over the test block with the factor and the
observation noise learned, $\lambda$ 0.999, $Q$ held adds the pinned factor, and all held pins the factor
and holds both scales, so the filter runs the test block with no hyperparameter learning; the process noise is
learned in the first four variants of each reveal.}
\label{app:tab:fixed-forgetting}
\scriptsize
\setlength{\tabcolsep}{3.5pt}
\begin{tabular}{llrrrrrrrr}
\toprule
reveal & variant & ETTh2 & ETTm1 & weather & electricity & traffic & pooled & positive & diverged \\
\midrule
immediate & learned & 0.63 & 0.79 & $-13.51$ & 0.25 & 0.20 & 0.25 & 12 & 0 \\
immediate & $\lambda$ 0.999 & 0.52 & 0.71 & $-0.19$ & 0.17 & 0.04 & 0.17 & 13 & 0 \\
immediate & $R$ held & 0.69 & 0.47 & $-13.54$ & 0.27 & 0.28 & 0.29 & 12 & 0 \\
immediate & both & 0.61 & 0.40 & 0.47 & 0.22 & 0.21 & 0.40 & 15 & 0 \\
immediate & $Q$ held & 0.62 & 0.74 & $-13.55$ & 0.27 & 0.21 & 0.27 & 12 & 0 \\
immediate & $\lambda$ 0.999, $Q$ held & 0.40 & 0.65 & $-0.19$ & 0.16 & 0.03 & 0.16 & 13 & 0 \\
immediate & all held & 0.47 & 0.59 & $-0.34$ & 0.17 & 0.05 & 0.17 & 12 & 0 \\
\midrule
coordinate & learned & $-0.54$ & $-0.00$ & $-\infty$ & $-0.24$ & $-0.34$ & $-0.34$ & 1 & 9 \\
coordinate & $\lambda$ 0.999 & $-0.36$ & 0.05 & $-11.78$ & $-0.15$ & $-0.11$ & $-0.15$ & 3 & 0 \\
coordinate & $R$ held & $-0.55$ & 0.03 & $-\infty$ & $-0.36$ & $-0.52$ & $-0.43$ & 3 & 9 \\
coordinate & both & $-0.45$ & 0.09 & $-11.38$ & $-0.23$ & $-0.41$ & $-0.30$ & 4 & 0 \\
coordinate & $Q$ held & $-0.47$ & 0.01 & $-\infty$ & $-0.25$ & $-0.36$ & $-0.36$ & 2 & 9 \\
coordinate & $\lambda$ 0.999, $Q$ held & $-0.20$ & 0.08 & $-9.02$ & $-0.14$ & $-0.09$ & $-0.11$ & 3 & 0 \\
coordinate & all held & $-0.27$ & 0.08 & $-46.73$ & $-0.15$ & $-0.12$ & $-0.13$ & 4 & 0 \\
\midrule
delayed 96 & learned & $-0.73$ & $-0.61$ & $-3.14$ & $-0.16$ & $-0.30$ & $-0.60$ & 0 & 0 \\
delayed 96 & $\lambda$ 0.999 & $-0.44$ & $-0.19$ & 0.04 & $-0.04$ & $-0.04$ & $-0.05$ & 3 & 0 \\
delayed 96 & $R$ held & $-0.86$ & $-0.10$ & $-3.49$ & $-0.25$ & $-0.45$ & $-0.45$ & 1 & 0 \\
delayed 96 & both & $-0.58$ & $-0.01$ & $-0.38$ & $-0.15$ & $-0.32$ & $-0.28$ & 2 & 0 \\
delayed 96 & $Q$ held & $-0.62$ & $-0.72$ & $-3.14$ & $-0.23$ & $-0.31$ & $-0.55$ & 0 & 0 \\
delayed 96 & $\lambda$ 0.999, $Q$ held & $-0.22$ & $-0.14$ & 0.04 & $-0.06$ & $-0.03$ & $-0.06$ & 3 & 0 \\
delayed 96 & all held & $-0.31$ & $-0.12$ & 0.00 & $-0.06$ & $-0.05$ & $-0.06$ & 3 & 0 \\
\bottomrule
\end{tabular}
\end{table}

Excluding weather, the adapter's causal value is regime tracking; on weather it fails. Under the regime shift of Appendix~\ref{app:power} at magnitude 1.0, the post-onset fed-label
error of the adapter without features is 0.16 to 0.73 of the frozen backbone's on the 12
non-weather cells under the composite coordinate reveal and 0.17 to 0.70 under delayed reveal. On weather
under the composite coordinate reveal it is 165 to 323 times the frozen backbone's. There the fallback
mixture caps the ratio at 1.05 to 1.07, and at 1.02 to 1.18 under delayed reveal, while keeping
0.17 to 0.71 elsewhere (Table~\ref{app:tab:regime-causal}).

\begin{table}[t]
\centering
\caption{Clean gain of the ungated adapter over the frozen backbone, one minus the ratio of clean
MSEs, by reveal and feature set, median over seeds then over the three backbones per stream and over
the 15 cells (pooled); positive is the number of cells with a positive median gain and diverged the
number of the 45 clean seeds that diverged or reached a non-finite hyperparameter state. A diverged seed
counts as $-\infty$.}
\label{app:tab:causal}
\scriptsize
\setlength{\tabcolsep}{3.5pt}
\begin{tabular}{llrrrrrrrr}
\toprule
reveal & features & ETTh2 & ETTm1 & weather & electricity & traffic & pooled & positive & diverged \\
\midrule
immediate & none & 0.63 & 0.79 & $-13.51$ & 0.25 & 0.20 & 0.25 & 12 & 0 \\
coordinate & none & $-0.54$ & $-0.00$ & $-\infty$ & $-0.24$ & $-0.34$ & $-0.34$ & 1 & 9 \\
delayed 96 & none & $-0.73$ & $-0.61$ & $-3.14$ & $-0.16$ & $-0.30$ & $-0.60$ & 0 & 0 \\
coordinate & fresh & $-\infty$ & $-\infty$ & $-\infty$ & $-\infty$ & $-\infty$ & $-\infty$ & 0 & 40 \\
delayed 96 & fresh & $-\infty$ & $-\infty$ & $-\infty$ & $-\infty$ & $-\infty$ & $-\infty$ & 0 & 45 \\
\bottomrule
\end{tabular}
\end{table}

\begin{table}[t]
\centering
\caption{Offline oracle of residual predictability, pooled gain over the frozen backbone on the test
block. The first three rows per backbone are least-squares fits on the training block, the fourth
conditions on the label adjacent to the target, the information immediate reveal supplies, and the
last two are ridge recursive least-squares fits at forgetting factor 0.999 run online under the
composite coordinate reveal; $<-1$ marks an unstable fit.}
\label{app:tab:oracle}
\scriptsize
\setlength{\tabcolsep}{4pt}
\begin{tabular}{llrrrrr}
\toprule
correction & backbone & ETTh2 & ETTm1 & weather & electricity & traffic \\
\midrule
affine, training-block fit & TiRex & 0.033 & $-0.070$ & 0.093 & 0.039 & 0.023 \\
 & Chronos-2 & 0.043 & $-0.105$ & 0.210 & 0.108 & 0.010 \\
 & TimesFM~2.5 & 0.027 & $-0.052$ & 0.007 & 0.037 & 0.004 \\
\addlinespace
affine, three fresh residuals & TiRex & 0.055 & 0.012 & 0.045 & 0.069 & 0.030 \\
 & Chronos-2 & 0.068 & $-0.041$ & 0.174 & 0.155 & 0.029 \\
 & TimesFM~2.5 & 0.041 & 0.023 & $-0.029$ & 0.061 & 0.007 \\
\addlinespace
affine, 96 fresh residuals & TiRex & 0.057 & 0.037 & $-0.006$ & 0.136 & 0.101 \\
 & Chronos-2 & 0.059 & $-0.040$ & $-0.053$ & 0.230 & 0.056 \\
 & TimesFM~2.5 & 0.059 & 0.079 & $-0.084$ & 0.159 & 0.100 \\
\addlinespace
affine on the adjacent label & TiRex & 0.945 & 0.944 & 0.871 & 0.838 & 0.571 \\
 & Chronos-2 & 0.933 & 0.924 & 0.873 & 0.858 & 0.557 \\
 & TimesFM~2.5 & 0.945 & 0.940 & 0.871 & 0.840 & 0.528 \\
\addlinespace
recursive affine & TiRex & 0.030 & $<-1$ & $<-1$ & 0.035 & 0.011 \\
 & Chronos-2 & 0.092 & $<-1$ & $<-1$ & 0.138 & $-0.002$ \\
 & TimesFM~2.5 & 0.006 & $<-1$ & $<-1$ & 0.036 & $-0.008$ \\
\addlinespace
recursive affine, three fresh & TiRex & 0.023 & $<-1$ & $<-1$ & 0.043 & 0.021 \\
 & Chronos-2 & 0.082 & $<-1$ & $<-1$ & $-0.116$ & 0.017 \\
 & TimesFM~2.5 & $-0.001$ & $<-1$ & $<-1$ & $-0.046$ & $-0.007$ \\
\bottomrule
\end{tabular}
\end{table}

\begin{table}[t]
\centering
\caption{Regime tracking under the causal reveals, post-onset fed-label MSE of the variant over the frozen
backbone's under the magnitude-1.0 regime shift, median over seeds, for the ungated adapter and the
second-version fallback mixture.}
\label{app:tab:regime-causal}
\footnotesize
\begin{tabular}{llrrrr}
\toprule
& & \multicolumn{2}{c}{coordinate reveal} & \multicolumn{2}{c}{delayed reveal} \\
\cmidrule(lr){3-4}\cmidrule(lr){5-6}
backbone & stream & ungated & fallback & ungated & fallback \\
\midrule
TiRex & ETTh2 & 0.44 & 0.43 & 0.49 & 0.47 \\
TiRex & ETTm1 & 0.28 & 0.27 & 0.53 & 0.46 \\
TiRex & weather & 322.52 & 1.06 & 2.04 & 1.18 \\
TiRex & electricity & 0.73 & 0.71 & 0.70 & 0.68 \\
TiRex & traffic & 0.19 & 0.20 & 0.19 & 0.20 \\
Chronos-2 & ETTh2 & 0.46 & 0.46 & 0.51 & 0.51 \\
Chronos-2 & ETTm1 & 0.26 & 0.25 & 0.36 & 0.35 \\
Chronos-2 & weather & 164.73 & 1.05 & 36.02 & 1.02 \\
Chronos-2 & electricity & 0.68 & 0.67 & 0.63 & 0.61 \\
Chronos-2 & traffic & 0.18 & 0.19 & 0.18 & 0.18 \\
TimesFM~2.5 & ETTh2 & 0.42 & 0.41 & 0.53 & 0.50 \\
TimesFM~2.5 & ETTm1 & 0.27 & 0.26 & 0.43 & 0.38 \\
TimesFM~2.5 & weather & 213.33 & 1.07 & 4.13 & 1.17 \\
TimesFM~2.5 & electricity & 0.68 & 0.66 & 0.67 & 0.65 \\
TimesFM~2.5 & traffic & 0.16 & 0.17 & 0.17 & 0.18 \\
\bottomrule
\end{tabular}
\end{table}

\section{ORCA diagnostic and simplified ports, not head-to-head method rankings}
\label{app:baselines}

The adapter conclusion of Section~\ref{sec:cutoff} is measured on one adapter family. This
appendix evaluates five released refiners on the same 15 cells, the same cached forecasts,
origins, train-block scale, divergence guard, seeds, and reveals, against the same frozen
reference. The five are the refiners evaluated in the released ORCA study~\citep{dai2026orca}: the
ORCA residual refiner, simplified ports of ELF~\citep{lee2025elf}, of the output
calibration of TAFAS \citep{kim2025tafas}, and of DSOF \citep{lau2025dsof}, and its ridge
baseline from the context and base forecast to the residual. Every hyperparameter uses its
released default. The collection phase buffers 3{,}000 training snapshots and, for the TAFAS and
DSOF ports, 300 validation snapshots before the first fit. The ports keep less than the papers.
The ELF port weights its learners by
mean absolute error and starts without the seasonal-naive learner, the TAFAS port keeps the output
calibration module alone and drops the input calibration and the periodicity-aware schedule, and
the DSOF port keeps the fast residual student and the slow replay stream; the optional
confidence router, off at its released default, stays off in every port. All five adapt on complete
horizon-length targets only.

Each cell is one stride-one stream of origins, the training block, the validation block, and the
test block, in order. At every origin the refiner first maps the cached point forecast and the
96-step context to its refined forecast, and then receives every snapshot whose closure step is
the current step; a snapshot is the context, the cached forecast, the refiner's own earlier
forecast, and the full 96-step label of one origin. Warm-up origins close 96 steps after issue
under every reveal, so the three reveals of a cell share their warm-up exactly. Test origins
close at issue under the immediate reveal, which exposes their complete labels noncausally, and
96 steps after issue under the composite coordinate and delayed reveals. The refiners consume complete labels only, so the
composite coordinate and delayed reveals carry identical information for them; the coordinate lane is a cross-process determinism check, and its refined forecasts reproduced the delayed lane's bit for
bit on 225 of 225 runs. Causality is checked from the closure delays every record carries; a second check fails any refiner whose
forecasts reproduce the frozen backbone exactly, one that never trained. The ELF, TAFAS, and ridge refiners draw no random number, so their three seeds
reproduce one forecast exactly on every cell under every reveal, and each of their cells is one run; the ORCA
and DSOF refiners differ across seeds everywhere. The refiners warm on the training and validation
blocks with the same 96-step closure, where the Kalman variants other than the warmed configuration use only the training block with
each label fed at once; the refiners therefore hold the validation block's labels before the test block, those
Kalman variants never see them, and no variant sees a test label before its closure. The refined test-block forecasts pass through the same divergence tracker and cumulative score as
every other variant, and a non-finite forecast is scored as it stands; none occurred in the 675 runs.

The rule was fixed before the campaign ran. It passes if the ORCA refiner's clean gain over the
frozen backbone is at least 0 on at least 10 of the 12 non-weather cells under the delayed reveal,
with three seeds on every cell and no diverged or non-finite seed. Under a pass, a published
black-box adapter recovers value under the causal reveal on these streams at this horizon and
context, so the loss of Section~\ref{sec:cutoff} is a property of the Kalman adapter family. Under
a fail, at released defaults, on these backbones, streams, horizon, and context, the strongest
cited black-box adapter also loses to the frozen backbone under the causal reveal, so the loss
extends beyond the Kalman adapter. The same count for the other four variants, the immediate-minus-delayed difference per variant, and each variant's per-cell delta against the Kalman variants are readings.

The rule passes. The ORCA refiner's median gain is at or above 0 on 12 of the 12 non-weather
cells under the delayed reveal, from 0.05 to 0.29, pooled 0.14 over the 15 cells, with no
diverged seed (Table~\ref{app:tab:baselines}). The gains are medians over three seeds; every seed of the ORCA refiner is at or above 0 on the 12 non-weather cells, the lowest at 0.05. The per-seed records are retained for release upon publication. The ELF, TAFAS, and ridge rows carry no seed spread because their three seeds reproduce one forecast.
By the reading fixed before the campaign, the pass scopes the loss of Section~\ref{sec:cutoff} to the Kalman adapter
family on these cells, at this horizon and context, under these reveals and defaults; the threats below name
what the evaluation does not separate from family.

The other four variants are readings. Under the delayed reveal the ELF port reaches at least 0 on 11
of the 12 non-weather cells, the TAFAS port on 8, the DSOF port on 8, and the ridge refiner on
12; no delayed-reveal seed of the 225 diverges or finishes non-finite, 135 of them distinct runs once each
deterministic variant's cell counts once. Under immediate reveal the
pooled gains are 0.21 for the ORCA refiner, 0.04 for the ELF port, 0.04 for the TAFAS port, 0.08
for the DSOF port, and 0.03 for the ridge refiner, so the immediate-minus-delayed difference of
the median non-weather gain runs from 0.00 to 0.05 across the
five variants against 0.78 for the Kalman adapter and 0.49 with its factor pinned. The contrast is
descriptive. For the three refiners that move the median by 0.03 to 0.04 under either reveal it cannot separate
the effect of immediate access from how little they adapt; the ORCA refiner moves it by 0.17 to 0.22 and gains 0.05 under immediate rather than delayed access. Against the Kalman adapter with its factor pinned at 0.999, the per-cell delta over the 12
non-weather cells under the delayed reveal runs from 0.17 to 0.58 for the ORCA refiner ($-0.06$
to 0.58 over the 15 cells) and from $-0.07$ to 0.54 across the other four. Under the composite coordinate reveal the Kalman variant consumes one label entry an origin after issue while a refiner sees nothing
until the horizon, so no delta is read there. On weather
the five variants reach $-6.88$ to 0.16 under the delayed reveal, the three lowest cells all the DSOF
port's ($-1.56$, $-0.85$, and $-6.88$ on TiRex, Chronos-2, and TimesFM~2.5) and the other four
variants within $-0.09$ to 0.16, against $-3.14$ for the learned Kalman adapter and 0.04 with its factor pinned. Read as a whole, the five variants say that the causal correction available on these streams at
context 96 is small, 0.02 to 0.04 pooled for four of the five. A plain ridge on the context and
base forecast clears the floor on 12 of 12 non-weather cells where the TAFAS and DSOF ports clear
it on 8. This campaign therefore bounds the Kalman adapter's loss and does not rank the published
methods.

Six limitations bound the interpretation. The context is 96 steps; ORCA reports its numbers at 520. The
target channel alone is forecast and adapted. The refiners read the 96-step context beside the cached forecast and warm on the validation block as well, where
the Kalman adapter updates on the stacked forecast alone and warms on the training block. A prespecified warm-up check uses both blocks and scores the Kalman adapter unchanged (the warmed rows of Table~\ref{app:tab:baselines}).
Under the delayed reveal it clears the floor on 0 of the 12 non-weather cells, as the reference variant does, moves a
cell by $-0.07$ to 0.04 with the median cell moving by less than 0.01, and moves the leaky gain by 0.03, so the
check separates warm-up from family and leaves inputs unseparated.

Every variant is scored on the train-block standard-deviation scale of its stream, whereas ORCA reports results using a train-split
mean-absolute scale, so its reported percentages are a different quantity; the released learning
rates, penalties, and combiner rate were chosen at that scale and run here at this one, a default value and a different behavior. ELF, TAFAS, and DSOF run as the released simplified ports
at their default settings. The ridge refiner reads the 96-step context beside the base forecast and
gains 0.13 on ETTm1 under the delayed reveal, where the recursive ridge on causal residual summaries of
Section~\ref{sec:cutoff} becomes unstable on that stream; the two fits differ in features, estimator, and penalty, and this
campaign does not separate them. \citet{dai2026orca} report up to 16 percent error reductions for the
residual refiner on the same backbones and streams under the delayed full-horizon reveal, with a
different horizon average, context, and normalization; here its per-stream medians run from
$-0.01$ on weather to 0.24 on ETTm1, a different profile from the paper's, and this campaign does
not explain the difference.

\begin{table}[t]
\centering
\caption{Released refiners and simplified method ports beside the Kalman adapter. Clean gain over the frozen
backbone, one minus the ratio of clean MSEs, median over seeds then over the three backbones per
stream and over the 15 cells (pooled), under immediate and delayed reveal; positive is the number
of the 15 cells with a positive median gain, within the number of the 12 non-weather cells at or above the floor of 0 fixed before the runs, and diverged the number of the 45 clean seeds that diverged
or finished non-finite, a diverged seed counting as $-\infty$. The learned and pinned Kalman rows repeat
Table~\ref{app:tab:fixed-forgetting} with the within count taken at this table's floor. The warmed rows are
the warm-up check, the learned adapter warm-started on the training and validation blocks. The
refiners see identical information under the composite coordinate and delayed reveals and reproduced their
delayed rows exactly, so they have no separate coordinate rows.}
\label{app:tab:baselines}
\scriptsize
\setlength{\tabcolsep}{3.5pt}
\begin{tabular}{llrrrrrrrrr}
\toprule
reveal & variant & ETTh2 & ETTm1 & weather & electricity & traffic & pooled & positive & within & diverged \\
\midrule
immediate & Kalman, learned & 0.63 & 0.79 & $-13.51$ & 0.25 & 0.20 & 0.25 & 12 & 12 & 0 \\
immediate & Kalman, warmed & 0.63 & 0.82 & $-14.24$ & 0.25 & 0.20 & 0.25 & 12 & 12 & 0 \\
immediate & Kalman, $\lambda$ 0.999 & 0.52 & 0.71 & $-0.19$ & 0.17 & 0.04 & 0.17 & 13 & 12 & 0 \\
immediate & ORCA & 0.08 & 0.31 & 0.00 & 0.22 & 0.21 & 0.21 & 14 & 12 & 0 \\
immediate & ELF port & 0.03 & 0.13 & 0.04 & 0.05 & 0.00 & 0.04 & 13 & 11 & 0 \\
immediate & TAFAS port & 0.01 & 0.09 & 0.04 & $-0.11$ & 0.02 & 0.04 & 11 & 9 & 0 \\
immediate & DSOF port & 0.01 & 0.08 & $-0.15$ & 0.20 & 0.11 & 0.08 & 11 & 11 & 0 \\
immediate & ridge & 0.03 & 0.17 & 0.10 & 0.03 & 0.00 & 0.03 & 15 & 12 & 0 \\
\midrule
delayed 96 & Kalman, learned & $-0.73$ & $-0.61$ & $-3.14$ & $-0.16$ & $-0.30$ & $-0.60$ & 0 & 0 & 0 \\
delayed 96 & Kalman, warmed & $-0.74$ & $-0.66$ & $-2.78$ & $-0.17$ & $-0.31$ & $-0.65$ & 0 & 0 & 0 \\
delayed 96 & Kalman, $\lambda$ 0.999 & $-0.44$ & $-0.19$ & 0.04 & $-0.04$ & $-0.04$ & $-0.05$ & 3 & 1 & 0 \\
delayed 96 & ORCA & 0.06 & 0.24 & $-0.01$ & 0.14 & 0.18 & 0.14 & 13 & 12 & 0 \\
delayed 96 & ELF port & 0.04 & 0.13 & 0.04 & 0.04 & 0.00 & 0.04 & 13 & 11 & 0 \\
delayed 96 & TAFAS port & 0.01 & 0.08 & 0.04 & $-0.11$ & 0.02 & 0.03 & 10 & 8 & 0 \\
delayed 96 & DSOF port & $-0.01$ & $-0.02$ & $-1.56$ & 0.21 & 0.17 & 0.02 & 8 & 8 & 0 \\
delayed 96 & ridge & 0.02 & 0.13 & 0.03 & 0.03 & 0.00 & 0.03 & 14 & 12 & 0 \\
\bottomrule
\end{tabular}
\end{table}

\FloatBarrier
\section{No-parameter-drift simulation, per-run}
\label{app:null}

The no-parameter-drift simulation (Section~\ref{sec:premise}) runs the full pipeline, the block
Kalman filter, its online hyperparameters, the calibration window, the martingale, and the
drift response, on a fixed-design regression stream with i.i.d.\ residuals and no parameter
drift. Each stream's per-step forecast pattern is a slowly varying sinusoid and its labels add
i.i.d.\ Gaussian noise. The design is time-varying, so the ordered forecast and label pairs are
neither identically distributed nor exchangeable. It uses 60 independent runs of
20{,}000 steps each, with a 500-step warm-up before the gate begins scoring.

Every one of the 60 runs fires at least once. Fires per run range from 33 to 42, with a mean
of 37.9. Ljung-Box at lag 10 rejects the conformal $p$-values at $p<0.01$ in 60 of 60 runs of
each learned-scale variant and in 0 of 60 of each held-scale variant; the warm-up 2{,}000 and
5{,}000 variants reject in 4 and 2 of 60, and the scale set to the truth in 55 while firing in 2.
The classical no-reset wealth process's first crossing coincides with the deployed gate's first fire in 60
of 60 runs of each learned pair, at steps 18 to 67 and 15 to 57 on the permuted stream. The first
fire after warm-up arrives between 18 and 67 steps into the run
(Figure~\ref{app:fig:nulldetail}).

The fixed-construction null runs the identical calibration window, randomized $p$-values, mixture, and
wealth-reset logic on exchangeable synthetic scores with no learner in the loop. It fires in at most
1 of 60 runs, under the same deployed initialization and the same horizon.

The element-level isolation of Section~\ref{sec:isolation} repeats this campaign under
twenty-two variants, all at the same 60 seeds and the same 20{,}000 steps, and all at the same
500-step warm-up except where the last three change it.
\begin{itemize}\setlength{\itemsep}{0pt}\setlength{\parskip}{0pt}
\item \textit{Full pipeline.} Every element active. Reproduces the saved simulation.
\item \textit{Filter frozen.} State and covariance held at the values they reached at the end of warm-up.
Learning of the forgetting factor and the process noise is inert with no time update and is
forced off. The observation-noise scale keeps learning.
\item \textit{Mean frozen.} The prediction and covariance updates run and only the mean
correction is skipped. All three hyperparameters learn and the full response is active.
\item \textit{Seven hyperparameter variants.} Online hyperparameter learning is restricted to
one subset of the forgetting factor, the process noise, and the observation-noise scale. The
filter and the full response are otherwise unchanged. The seven subsets are none, each of the
three alone, and each of the three pairs.
\item \textit{Response withheld.} The martingale computes and records every fire and its
response is never applied to the filter.
\item \textit{Huber-only response.} The full score, martingale, and spike classifier run. A fire
opens or restarts the 50-step recovery window, suppressing spike classification while the window
is open, but applies no covariance reset, process-noise boost, or forgetting-factor override.
\item \textit{Fixed construction.} The whole scorer is frozen and the test stream is randomly
permuted, so the frozen scorer faces an exchangeable sequence. This variant must reproduce the
fixed-construction null.
\item \textit{Full pipeline, permuted.} Every element active, on the randomly permuted test
stream the fixed-construction variant reads. This is the learning half of the matched pair.
\item \textit{No hyperparameters, permuted.} All three online hyperparameters held at their
warm-up values, the filter and the full response otherwise unchanged, on that same permuted
stream. This is the non-learning half of the pair, so the two differ only by hyperparameter
learning at one common input.
\item \textit{Four classical no-reset variants.} Exactly two settings change. The deployed sliding
calibration window is replaced by a calibration set that holds every score since monitoring
began, and the wealth reset on fire is turned off. The betting mixture, the level, and the spike
threshold are left as the campaign built them. The gate runs in shadow, so no response is applied and the filter follows its standard
learning path. The reported event is the first crossing of $\nicefrac{1}{\alpha}$, which
Ville's inequality bounds at probability $\alpha$. The four cells are the observation-noise
scale learned or held, each on the unpermuted and on the permuted stream.
\item \textit{Two longer warm-ups.} The deployed path with the warm-up before scoring extended
to 2{,}000 and to 5{,}000 steps. Each run is still 20{,}000 monitored steps. Both overrides
consume a different draw of each run's random sequence, so each monitors a different test stream
from the default one. The next setting changes only the initialization.
\item \textit{Scale started at truth.} The deployed path at the campaign's 500-step warm-up,
with $R$ started at its true value $0.0025$ before warm-up begins.
\end{itemize}
The thirteen variants of the earlier campaign reproduce exactly in the final campaign record.

\begin{table}[t]
\centering
\caption{Element-level isolation of the no-parameter-drift simulation by group, 60 runs of 20{,}000 steps each; the first row is the fixed-construction null with
no learner. Firing counts runs of 60 that fire at least once or, for the no-reset variants, ever cross
$\nicefrac{1}{\alpha}$; fires per run is a mean over the group's deployed-gate variants, and the
other columns are medians over 60 runs. At each step, $R$ is the mean of its 96 coordinates; the
two $R$ columns average that trajectory over the first and last tenth of each run (true coordinate
value $2.5\times10^{-3}$). Ljung-Box medians below machine precision print as $<10^{-15}$, and the learner-free
first row has no $R$. Variants per group in parentheses; the eight scale-held variants of the text are the
permuted-stream row and the seven of the next, and two variants in each of the next two groups also run on the
permuted stream. Table~\ref{tab:isolation} lists every variant.}
\label{tab:groups}
\scriptsize
\setlength{\tabcolsep}{3pt}
\begin{tabular}{lccccc}
\toprule
& & & & \multicolumn{2}{c}{$R$ ($10^{-3}$)} \\
\cmidrule(lr){5-6}
group & firing & per run & LB $p$ & first & last \\
\midrule
fixed-construction null & $\le$1/60 & -- & -- & -- & -- \\
frozen scorer on the permuted stream (1) & 1/60 & 0.02 & 0.47 & 7.9 & 7.9 \\
scale held fixed (7) & 1 to 3/60 & 0.02 & 0.41 to 0.47 & 7.9 & 7.9 \\
scale learned online (11) & 60/60 & 36.1 to 40.7 & $<10^{-15}$ & 2.6 to 2.7 & 2.3 to 2.5 \\
converged before monitoring (3) & 0 to 2/60 & 0.00 to 0.05 & $1.1{\times}10^{-5}$ to 0.16 & 2.2 to 2.5 & 2.2 to 2.5 \\
\bottomrule
\end{tabular}
\end{table}

\begin{table}[t]
\centering
\caption{Element-level isolation of the no-parameter-drift simulation. Twenty-two variants of the same campaign,
60 runs of 20{,}000 steps each, hold one element of the loop at its warm-up value, permute the
test stream, swap the deployed gate for the classical no-reset wealth process, or
change the warm-up and the initial noise scale. The firing column counts runs that fire at
least once out of 60, and for the no-reset group, whose wealth never resets, runs that ever cross
$\nicefrac{1}{\alpha}$. The next column gives the mean fires per run, and for that same group
the mean number of steps per run spent above $\nicefrac{1}{\alpha}$. The first-fire column is
the median over the runs that fired, blank where none did, and for the no-reset group it is
the first crossing. The Ljung-Box $p$ at lag 10 on the conformal $p$-values, the standardized
score $s_t$, $R$, and the peak log-wealth are medians over the
60 runs. The $s_t$ columns give each run's mean over the first and last tenth of the run. At each
step, $R$ is the mean of its 96 coordinates; the $R$ columns then average that trajectory over the
first and last tenth. The last column applies to the no-reset group only. Ljung-Box
medians below machine precision are reported as $<10^{-15}$. The warm-up is 500 steps except
where the last group changes it. The true $R$ on this stream is $0.0025$
and the correct value of $s_t$ under a well-specified filter is $m = 96$.
Table~\ref{tab:groups} groups these rows for Section~\ref{sec:isolation}.}
\label{tab:isolation}
\scriptsize
\setlength{\tabcolsep}{1.5pt}
\begin{tabular}{llrrrrccr}
\toprule
variant & what is held & firing & per run & first fire & LB $p$ & $s_t$ & $R$ & $\log M$ \\
\midrule
\multicolumn{9}{l}{\textit{observation-noise scale learned online}} \\
full pipeline & nothing & 60/60 & 37.93 & 46 & $<10^{-15}$ & $90.2 \to 96.1$ & $0.0027 \to 0.0025$ & \\
filter frozen & state, covariance & 60/60 & 38.63 & 45 & $<10^{-15}$ & $90.8 \to 96.1$ & $0.0026 \to 0.0023$ & \\
mean frozen & mean update & 60/60 & 40.73 & 45 & $<10^{-15}$ & $90.3 \to 96.1$ & $0.0026 \to 0.0025$ & \\
response withheld & response action & 60/60 & 37.87 & 46 & $<10^{-15}$ & $91.3 \to 96.1$ & $0.0027 \to 0.0025$ & \\
Huber-only & drift response & 60/60 & 37.87 & 46 & $<10^{-15}$ & $91.3 \to 96.1$ & $0.0027 \to 0.0025$ & \\
noise scale only & forgetting, process & 60/60 & 37.92 & 46 & $<10^{-15}$ & $90.1 \to 96.1$ & $0.0027 \to 0.0025$ & \\
forgetting, noise & process noise & 60/60 & 37.92 & 46 & $<10^{-15}$ & $90.1 \to 96.1$ & $0.0027 \to 0.0025$ & \\
process and noise & forgetting factor & 60/60 & 37.93 & 46 & $<10^{-15}$ & $90.2 \to 96.1$ & $0.0027 \to 0.0025$ & \\
\midrule
\multicolumn{9}{l}{\textit{observation-noise scale held fixed}} \\
no hyperparameters & all three & 1/60 & 0.02 & 79 & 0.46 & $30.3 \to 30.3$ & $0.0079 \to 0.0079$ & \\
forgetting only & process, scale & 1/60 & 0.02 & 79 & 0.46 & $30.3 \to 30.3$ & $0.0079 \to 0.0079$ & \\
process noise only & forgetting, scale & 1/60 & 0.02 & 79 & 0.41 & $30.5 \to 30.5$ & $0.0079 \to 0.0079$ & \\
forgetting, process & noise scale & 1/60 & 0.02 & 79 & 0.41 & $30.5 \to 30.5$ & $0.0079 \to 0.0079$ & \\
fixed construction & everything, permuted & 1/60 & 0.02 & 104 & 0.47 & $30.1 \to 30.1$ & $0.0079 \to 0.0079$ & \\
\midrule
\multicolumn{9}{l}{\textit{matched pair on the permuted stream}} \\
full pipeline & nothing & 60/60 & 36.10 & 41 & $<10^{-15}$ & $90.1 \to 96.1$ & $0.0027 \to 0.0025$ & \\
no hyperparameters & all three & 1/60 & 0.02 & 19 & 0.41 & $30.3 \to 30.3$ & $0.0079 \to 0.0079$ & \\
\midrule
\multicolumn{9}{l}{\textit{Classical no-reset wealth process, gate in shadow}} \\
scale learned & nothing & 60/60 & 19{,}953.8 & 46 & $<10^{-15}$ & $91.3 \to 96.1$ & $0.0027 \to 0.0025$ & 233 \\
scale held & noise scale & 1/60 & 1.20 & 79 & 0.47 & $30.5 \to 30.5$ & $0.0079 \to 0.0079$ & $-0.17$ \\
scale learned & nothing, permuted & 60/60 & 19{,}957.8 & 41 & $<10^{-15}$ & $91.3 \to 96.1$ & $0.0027 \to 0.0025$ & 237 \\
scale held & scale, permuted & 3/60 & 134.60 & 3{,}925 & 0.45 & $30.4 \to 30.5$ & $0.0079 \to 0.0079$ & 0.05 \\
\midrule
\multicolumn{9}{l}{\textit{convergence sensitivity checks, full deployed pipeline}} \\
warm-up 2{,}000 & nothing & 0/60 & 0.00 & & 0.16 & $96.1 \to 96.1$ & $0.0025 \to 0.0025$ & \\
warm-up 5{,}000 & nothing & 0/60 & 0.00 & & 0.11 & $96.1 \to 96.1$ & $0.0025 \to 0.0025$ & \\
scale set to truth & nothing & 2/60 & 0.05 & 17 & $1.1{\times}10^{-5}$ & $96.1 \to 96.1$ & $0.0022 \to 0.0022$ & \\
\bottomrule
\end{tabular}
\end{table}

The matched pair on the permuted stream reads at one common input. With everything learning the
campaign fires in 60 of 60 runs at 36.1 fires per run, its first fire arriving at steps 15, 41,
and 57 for the minimum, median, and maximum. With the forgetting factor, the process noise, and
the observation-noise scale all held fixed it fires in 1 of 60 runs, at 0.02 fires per run, a
median Kolmogorov-Smirnov $p$ of 0.999, a standardized score flat at $30.3$, and the noise scale
flat at $0.0079$. Among the no-reset variants, the learned member's peak log-wealth over a run falls between 206 and 256 with a median of 233, and each run spends 19{,}933 to
19{,}969 of its 20{,}000 steps above the threshold. The held member reaches a median peak log-wealth of $-0.17$ over a range of $-0.44$ to $4.64$. On the permuted stream the learning
member crosses at steps 15, 41, and 57 at a peak log-wealth of 206 to 254, median 237,
and the frozen member crosses at steps 19, 3{,}925, and 4{,}283, at a median peak log-wealth of $0.05$ over a range of $-0.44$ to $8.21$, a median Ljung-Box $p$ of 0.45, and a
median Kolmogorov-Smirnov $p$ of 0.29. After its first crossing the deployed gate's wealth
resets and it fires again, 37.9 times per run in total, and the classical no-reset wealth process stays above
$\nicefrac{1}{\alpha}$ for the rest of the run. The six deployed variants that hold the noise
scale fixed report a median Kolmogorov-Smirnov $p$ of 0.998 to 0.999.

\begin{figure}[t]
\centering
\includegraphics[width=\linewidth]{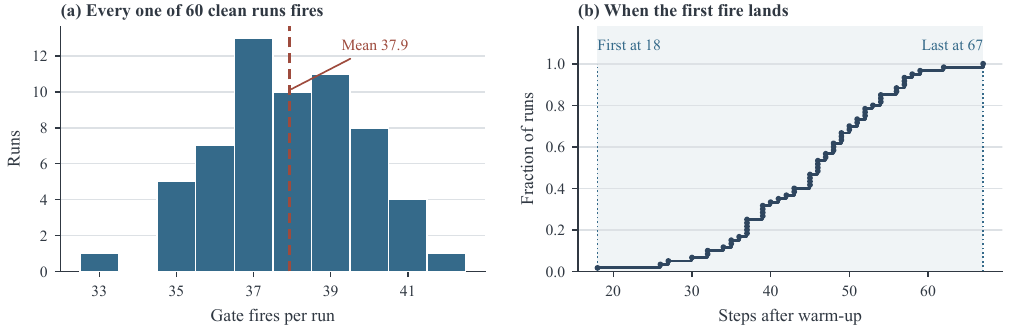}
\caption{Adaptive-null per-run detail over its 60 fixed-design-stream runs. (a) Fires per run,
33 to 42 with mean 37.9. (b) Empirical CDF of the first fire after warm-up, arriving 18 to
67 steps in. No single run drives the 60-of-60 result.}
\label{app:fig:nulldetail}
\end{figure}

\section{E-detector replay}
\label{app:edetector}

Sequential e-detectors offer change detection under an average-run-length
guarantee~\citep{shin2024edetectors}, so a natural question is whether swapping the martingale
for a detector carrying that guarantee repairs the deployed premise. We built
Shiryaev-Roberts~\citep{shiryaev1963quickest,roberts1966charts} and
CUSUM~\citep{page1954cusum} e-detectors from the deployed betting family and replayed them on
the saved $p$-value streams of the 405 saved clean gated runs. Those 405 streams cover six
variants under immediate reveal (the Huber-only variant, the full gate, the SGD surrogate, and the
three single-component variants) and three under delayed reveal (the Huber-only variant, the full
gate, and the SGD surrogate), each over five streams, three backbones, and three seeds.

The replay first reproduces the saved fire trajectory exactly in 405 of 405 runs, so the
detectors read the same evidence the deployed gate read. Both detectors then alarm in all 405
of those streams at every threshold $A \in \{10^{3}, 10^{4}, 10^{5}\}$. The main text's 135 immediate-reveal runs of the three martingale-carrying variants are 135 of 135 among them. Over
those same 135 runs at $A = 10^{4}$ the
Shiryaev-Roberts detector produces 193, 320, and 1{,}464 alarms per run at the minimum,
median, and maximum. For intuition only, the ratios $T/A$ range from 1.3 to 5.2 over these
stream lengths. They are heuristic reference scales: the average-run-length guarantee concerns
the first false alarm and does not bound alarm counts after restarts.

In the two synthetic null checks below, the observed alarm behavior remains within the stated
reference scales. On i.i.d.\ $p$-value streams the restarted
alarm counts stay at or below the corresponding $T/A$ reference at every threshold and stream
length. This is not a formal bound on restarted counts. On i.i.d.\ scores
passed through the deployed calibration window from an empty start, the Shiryaev-Roberts detector alarms
in 15 of 30 runs at $A = 10^{4}$ with at most 2 alarms per run, against the reference value
$T/A=2.0$. These e-detector instantiations therefore inherit the invalidity of the deployed
$p$-value stream. A remedy must either restore exchangeability for this conformal construction or
replace it with evidence valid under a dependence-aware null. E-processes that test exchangeability
directly~\citep{ramdas2022exchangeability} can measure whether a candidate remedy has done so.

A classical detector calibrated under dependence asks the same question from outside the
betting family. A two-sided Page CUSUM~\citep{page1954cusum} with drift 0.5 runs on each run's
standardized scores, its threshold set on the first 6{,}000 scored steps of the same run in two
ways, by a moving-block bootstrap~\citep{kunsch1989jackknife} of that reference at block lengths
1, 96, and 500, and by rescaling the increments with a Newey-West
estimate~\citep{newey1987simple} of the reference's long-run variance at
bandwidths 96, 250, and 500 against simulated i.i.d.\ paths. Both calibrations fail in finite samples in ways measured on i.i.d.\ data at this ratio of
reference to monitoring length. The bootstrap of a short reference is anti-conservative for a
maximum statistic (0.34 and 0.62 of 100 i.i.d.\ runs alarm at blocks 96 and 500), and the
Bartlett estimate runs low when the bandwidth is a large fraction of the reference (0.25 and 0.43
at bandwidths 250 and 500). Only block length 1, the i.i.d.\ calibration, and bandwidth 96 hold
within 0.15 there (0.04 and 0.11), and bandwidth 96 also stays at or below 0.01 on AR(1) and
96-term moving-sum synthetic streams. The reference segments carry a lag-1 autocorrelation of 0.88 to 0.98 and an integrated
autocorrelation time of 11 to 62 steps, and the long-run standard deviation at bandwidth 96 runs
at a median 6.1 times the reference's.

On the 180 saved streams the i.i.d.\ calibration alarms
in 180 of 180 at a median of 167 steps into monitoring, and the dependence-calibrated detector in
144 of 180 at a median of 5{,}076 steps, 36 of 36 on electricity, ETTm1, traffic, and weather and
0 of 36 on ETTh2, identically under both protocols and both scale variants. On the SGD adapter's 90
streams it alarms in 78 of 90, 6 of 18 on ETTh2. A detector built to absorb dependence of the
reuse length still alarms on four of five streams, and the stream it holds is one of the two on
which the adapter's horizon-1 term never crossed (Table~\ref{app:tab:horizons}); the settings
whose i.i.d.\ calibration fails are retained for release upon publication as a sensitivity.

A closely related published monitor is WATCH~\citep{prinster2025watch}, a weighted conformal test
martingale that freezes its calibration set once a secondary martingale on the covariates alone
detects a shift and thereafter weights every calibration $p$-value by an online density ratio
between calibration and target covariates. Replaying it needs the covariate each score was computed under, so twelve summary features of every origin's context are rebuilt from the data on the split the streams were saved on.
Four monitors then run on every saved clean run at $\alpha=0.05$ with the composite jumper and no
wealth reset. They are the plain conformal test martingale on the scores, the covariate
martingale, WATCH with adaptation at the covariate process's threshold of 20, and the same
weighted procedure with adaptation forced at a 250-step burn-in (Table~\ref{app:tab:watch}). On the 180 saved shadow runs, the learned-scale and
frozen-scale shadow variants under both protocols, every monitor crosses $\nicefrac{1}{\alpha}$ in
180 of 180, whereas Ville's inequality bounds the expected number of crossing runs under the null
by 9. The plain wealth process crosses at a median step of
25; WATCH crosses at the same median step with a higher peak, 7{,}339 against 3{,}096 in log-wealth.
The covariate wealth process crosses in every run at a median step of 44, so adaptation triggers in
180 of 180 runs between steps 20 and 78. WATCH had already crossed before reweighting could act in
141 runs; with adaptation forced at step 250, it had crossed in all 180. After adaptation the
density-ratio classifier separates the calibration and target covariates almost perfectly on three
streams, the weights' effective sample size falling to a median of 1.0 to 1.2 calibration points on
electricity, ETTh2, and traffic against 57 on ETTm1 and 16 on weather, and the weighted martingale
keeps growing. On the SGD adapter's 78 aligned shadow streams, every monitor crosses in 78 of 78.
WATCH crosses at a median step of 40 and before adaptation in 37 of 78 runs. The 12 traffic streams
of that bundle's TiRex and Chronos-2 runs, saved with 12{,}912 to 13{,}009 test steps against the
covariates' 13{,}014, fail the alignment check and are excluded from every count. A marginal
covariate change with the conditional law fixed, which weighted conformal monitoring is built to
absorb, is not what breaks these streams. The covariates do shift at once and almost completely,
but reweighting the calibration set leaves the score wealth process crossing everywhere. The covariate representation, the adaptation trigger, and the
burn-in are this paper's choices, so the replay tests whether reweighting this representation
rescues these streams and leaves WATCH under every configuration untested.

\begin{table}[t]
\centering
\caption{WATCH replayed on the saved clean score streams at $\alpha=0.05$ with the composite
jumper and no wealth reset, 180 deployed shadow runs (two shadow variants, both protocols) and the 78
aligned SGD shadow runs (90 less the 12 traffic streams that fail the alignment check). The covariate martingale is WATCH's secondary monitor on nearest-neighbour
distances of the covariates alone. Crossings count runs in which the wealth ever crosses
$\nicefrac{1}{\alpha}$, first is the median first-crossing step, and peak the median maximum
log-wealth. Under the null, Ville's inequality bounds the expected numbers of crossing runs by
9 and 3.9, respectively.}
\label{app:tab:watch}
\footnotesize
\setlength{\tabcolsep}{4pt}
\begin{tabular}{lcccccc}
\toprule
& \multicolumn{3}{c}{deployed shadow streams} & \multicolumn{3}{c}{SGD shadow streams} \\
\cmidrule(lr){2-4}\cmidrule(lr){5-7}
monitor & crossings & first & peak & crossings & first & peak \\
\midrule
conformal test martingale & 180/180 & 25 & 3{,}096 & 78/78 & 37 & 3{,}239 \\
covariate martingale & 180/180 & 44 & 2{,}135 & 78/78 & 42 & 2{,}135 \\
WATCH & 180/180 & 25 & 7{,}339 & 78/78 & 40 & 7{,}057 \\
WATCH, adaptation forced at 250 & 180/180 & 25 & 6{,}355 & 78/78 & 37 & 6{,}595 \\
\bottomrule
\end{tabular}
\end{table}

\section{Firing and response under a sustained regime change}
\label{app:power}

Reducing detector exposure and changing the response alter clean-stream firing in different ways,
so this section tests whether the resulting variants remain responsive to a change the gate should
find. A regime-shift injector moves every label
from a deterministic onset at 60 percent of the test block to its end by a fixed sign and magnitude
of 1.0 or 3.0 in robust-scale units. This follows the level-shift convention and defines a separate
configuration for each magnitude and protocol. The every-origin variants are the ungated filter, full
gate, capped policy, fixed cutoff, shadow variant, and full gate without its soft reset. Together with
the detector-stride-96 pair, they run over five streams, three seeds, and three backbones: 45 runs
per variant, magnitude, and protocol, with a clean cell beside each as background. Of the 1{,}440
regime records, 1{,}428 enter; the 12
excluded are the full gate's immediate-reveal weather runs, which diverge at the transient of
step 18{,}537 before the onset at 23{,}627. We fixed two summaries before running the study
(Table~\ref{app:tab:power}). First, per variant and
magnitude, the fraction of runs whose first post-onset fire lands within 96 steps of the onset
against the fraction of background gaps within 96 steps, the background being the run's own
pre-onset inter-fire gaps pooled with its clean twin's; a variant that fires after the onset no more
often than its background does shows no onset-specific response under this heuristic. Second, the post-onset fed-label MSE against
the ungated filter on the same cell, with the pre-onset ratio as the check that the variants agreed
before the change, which they do to within 0.88 to 1.16. The firing and background fractions use
different sampling units, so this comparison is descriptive and is not a power test.

Under this heuristic analysis, the only deployed fraction above its background reference
after rounding is 0.54 against 0.53 for the immediate-reveal full gate at magnitude 1.0. At the
larger magnitude, its first fire lands within 96 steps in 0.51 of runs against a background of 0.53.
The capped policy and the gate without its soft reset fire within 96 steps in 0.51 to 0.62 of runs
against backgrounds of 0.73 to 0.75; their clean-stream gaps are 6 to 8 steps
at the median. Under delayed reveal, every-origin gates fire within 96 steps in 0.18 to 0.20 of
runs against backgrounds of 0.60 to 0.76, an excess of $-0.40$ to $-0.56$. The two
detector-stride-96 variants never fire within 96 steps except for the capped variant in one of 45
immediate-reveal runs at magnitude 3.0. Their median latencies range from 640 to 8{,}437 steps,
against a background of 0.

The cost tells the variants apart. Under immediate reveal the three every-origin gates adapt as the ungated filter does, with
post-onset ratios of 0.91 to 0.98, because their fires, background or not, reopen the response.
The variants that keep the per-step spike check and have no fire in reach classify the shifted
residuals as spikes and hold the update back; the fixed cutoff pays 1.92 and 8.92 times the
ungated filter's post-onset error at magnitudes 1.0 and 3.0, and the detector-stride-96 pair 1.88 and 5.51
to 5.53.
Under delayed reveal every gated variant pays more than the ungated filter, the every-origin gates
1.10 to 1.38, the cutoff 3.32 at magnitude 3.0, and the detector-stride-96 pair 2.59. On the shadow stream the classical no-reset wealth process has crossed $\nicefrac{1}{\alpha}$ at some step
before the onset in 180 of 180 runs, so a first crossing can report nothing about the change. The deployed gate's first post-onset fire lands at a median of 1 and 20 steps on ETTh2, 192 and 288 on weather, 518
and 615 on ETTm1, 1{,}008 and 1{,}104 on electricity, and 2{,}393 and 2{,}489 on traffic, under
immediate and delayed reveal. The offline detector-stride-96 wealth process crosses after the onset in none of
its 1{,}440 offset replays. The two fractions compared are of different kinds, a run-level first fire against gap-level
background, so the table only describes each variant against its own firing pattern. The
mitigations that removed the clean-stream fires also left the gate largely unresponsive within
one horizon of this change. The spike check provides protection against isolated contamination
by suppressing updates, but the same mechanism also prevents prompt adaptation to a persistent
level shift.

\begin{table}[t]
\centering
\caption{Firing and response under a sustained regime change at magnitudes 1.0 and 3.0, 45 runs per
variant, magnitude, and protocol (the full gate's 6 immediate-reveal weather runs excluded, diverging
before the onset). Fire $\leq 96$ is the fraction of runs whose first post-onset fire lands within
96 steps, background is the fraction of pre-onset and clean-twin inter-fire gaps within 96 steps,
latency is the median first post-onset fire in steps, and cost is the median post-onset fed-label
MSE against the ungated filter on the same cell. Fire $\leq 96$ and background use different sampling units;
their comparison is descriptive and is not a power test. The fixed cutoff carries no martingale.}
\label{app:tab:power}
\scriptsize
\setlength{\tabcolsep}{3pt}
\begin{tabular}{llrcccccccc}
\toprule
& & & \multicolumn{4}{c}{magnitude 1.0} & \multicolumn{4}{c}{magnitude 3.0} \\
\cmidrule(lr){4-7}\cmidrule(lr){8-11}
protocol & variant & $n$ & fire$\leq$96 & background & latency & cost & fire$\leq$96 & background & latency & cost \\
\midrule
immediate & full gate & 39 & 0.54 & 0.53 & 72 & 0.98 & 0.51 & 0.53 & 93 & 0.98 \\
immediate & capped policy & 45 & 0.58 & 0.75 & 45 & 0.94 & 0.62 & 0.75 & 30 & 0.97 \\
immediate & full gate, no soft reset & 45 & 0.51 & 0.73 & 59 & 0.93 & 0.62 & 0.73 & 32 & 0.91 \\
immediate & detector stride 96 & 45 & 0.00 & 0.00 & 8{,}341 & 1.88 & 0.00 & 0.00 & 1{,}562 & 5.51 \\
immediate & detector stride 96, capped & 45 & 0.00 & 0.00 & 640 & 1.88 & 0.02 & 0.00 & 1{,}354 & 5.53 \\
immediate & fixed cutoff & 45 & --- & --- & --- & 1.92 & --- & --- & --- & 8.92 \\
delayed & full gate & 45 & 0.20 & 0.60 & 141 & 1.15 & 0.20 & 0.60 & 137 & 1.38 \\
delayed & capped policy & 45 & 0.20 & 0.76 & 141 & 1.10 & 0.20 & 0.76 & 127 & 1.22 \\
delayed & full gate, no soft reset & 45 & 0.18 & 0.74 & 144 & 1.11 & 0.18 & 0.74 & 128 & 1.30 \\
delayed & detector stride 96 & 45 & 0.00 & 0.00 & 8{,}149 & 1.13 & 0.00 & 0.00 & 1{,}546 & 2.59 \\
delayed & detector stride 96, capped & 45 & 0.00 & 0.00 & 8{,}437 & 1.13 & 0.00 & 0.00 & 2{,}010 & 2.59 \\
delayed & fixed cutoff & 45 & --- & --- & --- & 1.13 & --- & --- & --- & 3.32 \\
\bottomrule
\end{tabular}
\end{table}

Two further replays ask whether alternative sequential constructions respond to this change,
first on the learner's own stream and then on a stream the learner does not touch. Both are
compared with the clean run of the same cell read at the same pseudo-onset, the first revealed
post-onset label, so that the event and twin fractions have the same sampling unit
(Tables~\ref{app:tab:power-learner} and~\ref{app:tab:power-frozen}).

For each run, the one-offset construction draws one of the 96 offsets from the run seed before
reading the scores, then applies the growing-calibration, no-reset wealth process to that thinned
stream. The union construction runs all 96 offset paths and crosses when their maximum wealth
reaches $96/\alpha$; a union bound gives anytime validity if every offset stream satisfies the
classical null. The offset average compares the arithmetic mean of the 96 wealths with
$1/\alpha$. It is a fixed-time e-value under the corresponding nulls, but we do not establish
anytime validity because the paths are adapted to different filtrations. The restarting
e-detector of \citet{shin2024edetectors} uses the preselected offset and alarms at
$A\in\{20,400,4000\}$. When its null conditions hold, the reference guarantee is an
average-run-length bound for the first alarm, not Ville's crossing-probability bound or a bound
on the number of alarms after restarting. A post-onset event is the first crossing or alarm at or
after the pseudo-onset and within the stated window. A no-reset path already above its threshold
at the onset is not counted as a new event.

On the shadow adapter's saved scores, no theoretical guarantee is established for any
construction because the required null is not established. Descriptively, the three stride-96
wealth constructions do not cross within 960 steps in any of the 45 runs at either magnitude
under either protocol. The e-detector at threshold 20 alarms in 1 of 45 runs at magnitude 3.0
under each protocol and never in the other settings. The every-origin
classical no-reset wealth process crosses within 96 steps in 0.20 to 0.22 of runs against a twin of 0.09 to
0.13, and within 960 steps in 0.20 to 0.22 against 0.20 to 0.22 (Table~\ref{app:tab:power-learner}),
with its wealth above the threshold at the onset step in 8 and 15 of 45 runs, immediate and
delayed; the 180 of 180 above counts runs that had crossed at any earlier step, since the no-reset
wealth falls back below the threshold between crossings.

The reason is in the scores. At magnitude 3.0 the learner's score over the first 96 post-onset steps runs 6.9 to 42 times its pre-onset mean
on the four non-weather streams and over the next 96 steps 0.11 to 0.33 of it; at magnitude 1.0,
1.6 to 5.5 times and then 0.20 to 1.18. The adapter absorbs the shift within one horizon and
overshoots, so a monitor that sees one score every 96 steps sees at most one elevated score. The
frozen backbone's residual against the same fed labels, built from the cache as in
Appendix~\ref{app:valid}'s no-learner analysis and scored under the training-block scale, stays 14 to
17 times its pre-onset mean at magnitude 1.0 in every 96-step block to 2{,}000 steps and about 130 times
at 3.0. On that stream the classical no-reset wealth processes carry no established bound, one offset crossing on 26 of 45 clean
runs and the offset average on 45, the dependence Appendix~\ref{app:valid} places in the
forecast-error process. The e-detector at 400 alarms 0.76 times per clean run, small relative to
the $T/A$ reference but not covered by a formal restarted-count bound. It alarms after the
magnitude-1.0 change within 480 steps in 27 of 45 runs
against 0 of 45, all 9 runs on ETTh2, ETTm1, and traffic and none on electricity and weather
within 480 steps, and the magnitude-3.0 change in 44 of 45, at median latencies of 405 and 255
steps. At 4{,}000 it alarms 0.29 times per clean run, above the $T/A$ reference, and alarms within
480 steps in 19 and 35 of
45. The
96-parallel e-detector alarms within 15 steps but 260 to 280 times per clean run and is noise.

\begin{table}[t]
\centering
\caption{Alternative sequential constructions replayed on the shadow adapter's saved scores
under the sustained change. Each entry is the fraction of 45 runs whose first post-onset crossing
or alarm occurs within 960 steps of the first revealed post-onset label, followed by the fraction
for the clean twin at the same pseudo-onset (event/twin), by protocol and magnitude. No guarantee
is established on these adaptive score streams. A classical no-reset path already above its
threshold at the onset is not counted as a new event.}
\label{app:tab:power-learner}
\scriptsize
\setlength{\tabcolsep}{4pt}
\begin{tabular}{lcccc}
\toprule
& \multicolumn{2}{c}{immediate} & \multicolumn{2}{c}{delayed} \\
\cmidrule(lr){2-3}\cmidrule(lr){4-5}
construction & 1.0 & 3.0 & 1.0 & 3.0 \\
\midrule
every origin & 0.22/0.22 & 0.22/0.22 & 0.20/0.20 & 0.22/0.20 \\
one offset & 0.00/0.00 & 0.00/0.00 & 0.00/0.00 & 0.00/0.00 \\
union over 96 offsets & 0.00/0.00 & 0.00/0.00 & 0.00/0.00 & 0.00/0.00 \\
offset average & 0.00/0.00 & 0.00/0.00 & 0.00/0.00 & 0.00/0.00 \\
e-detector, $A=20$ & 0.00/0.00 & 0.02/0.00 & 0.00/0.00 & 0.02/0.00 \\
e-detector, $A=400$ & 0.00/0.00 & 0.00/0.00 & 0.00/0.00 & 0.00/0.00 \\
e-detector, $A=4000$ & 0.00/0.00 & 0.00/0.00 & 0.00/0.00 & 0.00/0.00 \\
\bottomrule
\end{tabular}
\end{table}

\begin{table}[t]
\centering
\caption{The same constructions on the frozen backbone's residual against the regime-shifted fed
labels, 45 runs per magnitude (five streams, three backbones, three seeds). The table reports
post-onset events within 480 and 960 steps against the clean stream at the same pseudo-onset
(event/twin), median latency in steps, and the clean background as crossings of 45 runs or alarms
per run.}
\label{app:tab:power-frozen}
\scriptsize
\setlength{\tabcolsep}{3pt}
\begin{tabular}{lccccccc}
\toprule
& \multicolumn{3}{c}{magnitude 1.0} & \multicolumn{3}{c}{magnitude 3.0} & \\
\cmidrule(lr){2-4}\cmidrule(lr){5-7}
construction & 480 & 960 & latency & 480 & 960 & latency & clean \\
\midrule
every origin & 0/0 & 9/0 & 618 & 5/0 & 9/0 & 460 & 45/45 cross \\
one offset & 3/1 & 19/1 & 1{,}044 & 8/1 & 29/1 & 639 & 26/45 cross \\
union over 96 offsets & 4/0 & 23/0 & 742 & 13/0 & 34/0 & 578 & 41/45 cross \\
offset average & 16/2 & 28/2 & 509 & 24/2 & 36/2 & 392 & 45/45 cross \\
e-detector, $A=20$ & 35/5 & 39/7 & 171 & 45/5 & 45/7 & 152 & 2.62 alarms \\
e-detector, $A=400$ & 27/0 & 28/0 & 405 & 44/0 & 45/0 & 255 & 0.76 alarms \\
e-detector, $A=4000$ & 19/0 & 28/0 & 555 & 35/0 & 45/0 & 345 & 0.29 alarms \\
\bottomrule
\end{tabular}
\end{table}

The monitor of Table~\ref{app:tab:power-frozen} runs live in two variants (Table~\ref{app:tab:twostream}).
Both read the frozen backbone's residual against the fed labels at every 96th origin from a per-seed
random offset, at the training-block scale, with a growing calibration set and an e-detector
restarting at wealth one and alarming at 400; an alarm scored at origin $s$ is actionable at $s$ plus
the feedback delay, and from that step the adapter's state, covariance, and hyperparameter state are held for
480 steps. The hold variant emits the held adapter's forecast and the handover variant the frozen backbone's,
with the guard evaluated on the emitted forecast. On clean streams under immediate reveal the variants
alarm 0.667, 0.600, and 1.000 times per run on TiRex, Chronos-2, and TimesFM~2.5, equal to the
replay's 0.667, 0.600, and 1.000 on each backbone, the identity check of the live monitor;
under delayed reveal an alarm becomes actionable 96 steps after its origin and alarms past the end of
a run are not counted, so Chronos-2 reads 0.467. The hold covers 0.4 percent of clean steps or fewer,
and both variants reproduce the ungated filter on clean, spike, and packet-freeze cells to within 0.003. Under
the regime shift at magnitude 1.0 the monitor alarms after the onset in 43 of 45 runs under both
reveals against 8 of 45 on the clean twin at the same pseudo-onset, at median latencies of 405 and
501 steps. The responses split. Holding the adapter costs 1.14 and 1.10 times the ungated error under the 5 percent level shift
and 1.42 and 1.03 times its post-onset error under the true change, immediate and delayed reveal
in that order. Handing over to the frozen backbone cuts the level-shift cost to 0.667 and 0.704
and raises the regime cost to 6.66 and 1.99. The alarm cannot tell the contamination from the
change, so the two responses trade one against the other.

\begin{table}[t]
\centering
\caption{Live two-stream variants, 15 cells per row. The ungated adapter's gain over the frozen backbone
(cells worse than the frozen backbone in parentheses), the MSE of the hold variant and of the handover variant
over the ungated filter's on the same cells, and the fraction of steps held, medians over cells; on the
regime rows the ratios are post-onset fed-label costs.}
\label{app:tab:twostream}
\footnotesize
\begin{tabular}{llrrrr}
\toprule
reveal & cell & ungated gain & hold & handover & held \\
\midrule
immediate & clean & 0.25 (3) & 1.000 & 1.000 & 0.004 \\
immediate & spike, 5 percent & $-0.55$ (9) & 1.000 & 1.000 & 0.000 \\
immediate & freeze, 5 percent & 0.33 (6) & 1.000 & 0.997 & 0.024 \\
immediate & level shift, 5 percent & $-2.28$ (15) & 1.138 & 0.667 & 0.037 \\
immediate & regime shift, post-onset & $-1.93$ (15) & 1.42 & 6.66 & 0.148 \\
\addlinespace
delayed 96 & clean & $-0.60$ (15) & 1.000 & 1.000 & 0.000 \\
delayed 96 & spike, 5 percent & $-3.51$ (15) & 1.000 & 1.000 & 0.000 \\
delayed 96 & freeze, 5 percent & $-0.90$ (15) & 1.001 & 1.000 & 0.024 \\
delayed 96 & level shift, 5 percent & $-3.50$ (15) & 1.101 & 0.704 & 0.037 \\
delayed 96 & regime shift, post-onset & $-3.25$ (15) & 1.03 & 1.99 & 0.148 \\
\bottomrule
\end{tabular}
\end{table}

\section{Synthetic drift latency}
\label{app:latency}

Table~\ref{app:tab:latency} reports the synthetic-drift latency study over three seeds, with
declared break points and a per-break bounded search. Mean adaptation latency is the number
of steps after each declared break until a trailing-error recovery rule is met, with
never-recovered breaks capped at the bounded search horizon. The full gate is slower than the
ungated filter on three of three seeds at the task the drift response exists to serve, and
its test-block MSE is roughly four times worse. The detector-stride-96 gate of
Section~\ref{sec:cutoff}, whose martingale sees every 96th score, is slower still, at mean
latency 162 to 198 against the full gate's 128 to 155, with test-block MSE 0.73 to 1.06.
It still declares drift, later, and the response still costs.

\begin{table}[ht]
\centering
\caption{Mean latency in steps and test-block MSE on the synthetic drift benchmark of
Appendix~\ref{app:latency}, one fixed-design stream with declared break points, three seeds per
variant, latency to a trailing-error recovery rule with never-recovered breaks capped at the
bounded search horizon.}
\label{app:tab:latency}
\footnotesize
\begin{tabular}{lrrrrrr}
\toprule
 & \multicolumn{3}{c}{mean latency} & \multicolumn{3}{c}{final MSE} \\
\cmidrule(lr){2-4} \cmidrule(lr){5-7}
variant & seed 0 & seed 1 & seed 2 & seed 0 & seed 1 & seed 2 \\
\midrule
ungated filter & 126.75 & 128.0 & 95.25 & 0.0915 & 0.1008 & 0.0983 \\
full gate & 154.75 & 137.5 & 128.0 & 0.3418 & 0.4471 & 0.3653 \\
detector-stride-96 gate & 182.5 & 198 & 161.5 & 0.7345 & 1.0615 & 0.9102 \\
ungated SGD & 226.0 & 250.0 & 216.75 & 0.3270 & 0.3329 & 0.3302 \\
\bottomrule
\end{tabular}
\end{table}
 
\end{document}